\documentclass[twoside,11pt]{article}

\usepackage{blindtext}

\usepackage{jmlr2e}

\usepackage{algorithm,algorithmic}
\usepackage{hyperref,comment}

\usepackage{amsmath,amsfonts,bm}

\def\eqref#1{equation~\ref{#1}}
\def\1{\bm{1}}

\DeclareMathAlphabet{\mathsfit}{\encodingdefault}{\sfdefault}{m}{sl}
\SetMathAlphabet{\mathsfit}{bold}{\encodingdefault}{\sfdefault}{bx}{n}

\def\gA{{\mathcal{A}}}

\def\gD{{\mathcal{D}}}

\def\gF{{\mathcal{F}}}

\def\gH{{\mathcal{H}}}

\def\gM{{\mathcal{M}}}
\def\gN{{\mathcal{N}}}

\def\gP{{\mathcal{P}}}
\def\gQ{{\mathcal{Q}}}
\def\gR{{\mathcal{R}}}
\def\gS{{\mathcal{S}}}
\def\gT{{\mathcal{T}}}

\def\gV{{\mathcal{V}}}

\def\gX{{\mathcal{X}}}

\def\gZ{{\mathcal{Z}}}

\def\sR{{\mathbb{R}}}

\newcommand{\E}{\mathbb{E}}

\newcommand{\R}{\mathbb{R}}

\DeclareMathOperator*{\argmax}{arg\,max}
\DeclareMathOperator*{\argmin}{arg\,min}

\newcommand{\smsigma}{\sigma}
\newcommand{\mainalgoname}{\text{RL from Multi-Source Imperfect Preferences}}
\newcommand{\mainalgoabrv}{\text{RL-MSIP}}
\newcommand{\alg}{\text{Alg}}

\newcommand{\indicate}{\mathbf{1}}

\newcommand{\hatp}{\hat{P}}

\newcommand{\regret}{\operatorname{Reg}}
\newcommand{\Gammaa}{\hat{\Gamma}}

\newcommand{\dr}{D_1^{\tau \vert \Gamma_N}}
\newcommand{\drsum}{D_2^{\Gamma_N}}

\newcommand{\betar}{\beta^{\text{R}}}

\newcommand{\bonr}{b^{\text{R}}}
\newcommand{\dt}{D_{3}^{(s,a) \vert \Gamma_{N,h}}}
\newcommand{\dtsum}{D_{4}^{\Gamma_{N,h}}}

\newcommand{\betap}{\beta^{\text{P}}}

\newcommand{\bonp}{b^{\text{P}}}
\newcommand{\gv}{\gV_h}

\newcommand{\alphar}{\alpha^{\text{R}}}
\newcommand{\alphap}{\alpha^{\text{P}}}

\newcommand{\weightrt}{w_{1,t(\tau,k)}^{\text{R}}}

\newcommand{\weightrrk}{w_{2,k}^{\text{R}}}

\newcommand{\weightpt}{w_{3,t(s,a,k)}^{\text{P}}}

\newcommand{\weightppk}{w_{4,k,h}^{\text{P}}}
\newcommand{\siga}{g}

\newcommand{\UCB}{\operatorname{UCB}}
\newcommand{\Fcal}{\mathcal{F}}
\DeclareMathOperator*{\arginf}{arginf}
\newcommand{\Imp}{\operatorname{Imp}}

\usepackage{lastpage}
\ShortHeadings{Regret Bounds for RL from Multi-Source Imperfect Preferences}{Shi, Liang, Shroff, and Swami}
\firstpageno{1}

\begin{document}

\title{Regret Bounds for Reinforcement Learning from Multi-Source Imperfect Preferences}

\author{\name Ming Shi \email mshi24@buffalo.edu \\
       \addr Department of Electrical Engineering\\
       University at Buffalo\\
       Buffalo, NY 14260, USA
       \AND
       \name Yingbin Liang \email liang.889@osu.edu \\
       \addr Department of Electrical and Computer Engineering\\
       The Ohio State University\\
       Columbus, OH 43210, USA
       \AND
       \name Ness B. Shroff \email shroff.11@osu.edu \\
       \addr Department of Electrical and Computer Engineering and Computer Science and Engineering\\
       The Ohio State University\\
       Columbus, OH 43210, USA
       \AND
       \name Ananthram Swami \email ananthram.swami.civ@army.mil \\
       \addr DEVCOM Army Research Laboratory\\
       Adelphi, MD 20783, USA}

\editor{My editor}

\maketitle

\begin{abstract}%
Reinforcement learning from human feedback (RLHF) replaces hard-to-specify rewards with pairwise trajectory preferences, yet regret-oriented theory often assumes that preference labels are generated consistently from a single ground-truth objective. In practical RLHF systems, however, feedback is typically \emph{multi-source} (annotators, experts, reward models, heuristics) and can exhibit systematic, persistent mismatches due to subjectivity, expertise variation, and annotation/modeling artifacts. We study episodic RL from \emph{multi-source imperfect preferences} through a cumulative imperfection budget: for each source, the total deviation of its preference probabilities from an ideal oracle is at most $\omega$ over $K$ episodes. We propose a unified algorithm with regret $\tilde{O}(\sqrt{K/M}+\omega)$, which exhibits a best-of-both-regimes behavior: it achieves $M$-dependent statistical gains when imperfection is small (where $M$ is the number of sources), while remaining robust with unavoidable additive dependence on $\omega$ when imperfection is large. We complement this with a lower bound $\tilde{\Omega}(\max\{\sqrt{K/M},\omega\})$, which captures the best possible improvement with respect to $M$ and the unavoidable dependence on $\omega$, and a counterexample showing that na\"ively treating imperfect feedback as oracle-consistent can incur regret as large as $\tilde{\Omega}(\min\{\omega\sqrt{K},K\})$. Technically, our approach involves imperfection-adaptive weighted comparison learning, value-targeted transition estimation to control hidden feedback-induced distribution shift, and sub-importance sampling to keep the weighted objectives analyzable, yielding regret guarantees that quantify when multi-source feedback provably improves RLHF and how cumulative imperfection fundamentally limits it.
\end{abstract}

\begin{keywords}
RLHF, multi-source imperfect preferences, multi-source gains, robustness to imperfection, regret bounds
\end{keywords}

\tableofcontents

% ==============================================================================================================================================================================
\section{Introduction}\label{sec:introduction}

Reinforcement learning (RL) learns a policy through interaction to optimize long-term performance, typically formalized as maximizing expected cumulative return. In standard RL, learning is guided by a designer-specified reward function, e.g.,~\cite{kaelbling1996reinforcement,sutton2018reinforcement,agarwal2019reinforcement}. In many modern applications, however, specifying a reward that faithfully encodes the intended objective is difficult or even infeasible, notably in large language models (LLMs)~\citep{achiam2023gpt,touvron2023llama,anil2023palm,guo2025deepseek}, robotics~\citep{christiano2017deep}, autonomous driving~\citep{sun2024optimizing}, and generative modeling~\citep{lee2023aligning}. This motivates learning from feedback signals that are easier to elicit than an explicit reward.

Reinforcement learning from human feedback (RLHF) uses human judgments, most commonly pairwise comparisons between two trajectories (or responses), as the training signal for alignment~\citep{kaufmann2023survey,casper2023open}. A canonical pipeline collects preference labels, fits a reward/preference model, and then optimizes a policy using the learned model. Recent theory has begun to characterize sample efficiency and regret in preference-based RL and RLHF, under the assumption that feedback is generated consistently from a single ground-truth objective with a stationary noise model, e.g.,~\cite{chatterji2021theory,chen2022human,wang2023rlhf,zhu2023principled,du2024exploration}.

In practice, RLHF feedback rarely comes from a single homogeneous oracle. Labels may be produced by multiple human annotators, domain experts, AI-based feedback models, heuristics, or automated evaluation tools~\citep{leerlaif,tjuatja2024llms,bukharin2024r3m,chowdhury2024rdpo}. These sources can differ systematically due to subjectivity, expertise, instruction framing, calibration drift, and modeling artifacts, leading to preference probabilities that persistently deviate from the idealized objective one would use to define ``ground-truth'' performance. Empirical studies report substantial disagreement and bias across sources and highlight the need to reason about feedback imperfection~\citep{peng2022investigations,santurkar2023whose,yu2024large,chakraborty2024maxmin}. A parallel robustness literature addresses other failure modes in RLHF and preference optimization, such as sparse outlier corruptions in preference labels~\citep{bukharin2024r3m}, reward/preference model misspecification and variance reduction in policy optimization~\citep{ye2025vrpo}, reward-robust objectives under reward-model uncertainty~\citep{yan2024reward}, and distribution shift across prompts or tasks~\citep{mandal2025dro}. However, a regret-oriented theory that explicitly models \textbf{multi-source preference imperfection}, allows history-dependent deviations, and yields a best-of-both-regimes characterization in terms of $(K,M,\omega)$ remains underdeveloped, where $K$ is the number of episodes, $M$ is the number of sources, and $\omega$ quantifies the cumulative deviation from an ideal preference oracle.

We consider episodic MDP with trajectory-level objectives. In each episode $k$, the learner executes $\pi_k$ to generate a trajectory $\tau_k\sim(P^*,\pi_k)$ and compares it against an independent reference-policy rollout $\tau_{k,0}\sim(P^*,\pi_0)$, where $P^*$ is the unknown transition kernel. An ideal preference oracle prefers $\tau_k$ over $\tau_{k,0}$ with probability $p_k^*=\smsigma(R^*(\tau_k)-R^*(\tau_{k,0}))$, where $R^*$ is an unknown trajectory-level reward functional and $\smsigma$ is a known link function. The learner observes $M$ binary labels per episode from $M$ sources, each inducing its own preference probability $p_k^m$. We quantify \emph{multi-source imperfection} by a cumulative deviation budget $\sum_{k=1}^K |p_k^m-p_k^*| \le \omega$ for all $m\in[M]$, which permits heterogeneous deviations while ruling out unbounded adversarial drift. Since only comparisons are observed, our learning target is naturally the induced comparison function $q_R(\tau,\tilde\tau) = \smsigma(R(\tau)-R(\tilde\tau))$.

The imperfection budget creates a fundamental tension between robustness and statistical efficiency. When $\omega$ is large, na\"ively treating labels as unbiased can accumulate systematic error and can lead to regret that scales linearly with $K$. When $\omega$ is small, pooling $M$ labels per episode should sharpen preference estimation, so the regret should improve with $M$. Reconciling these behaviors requires a \emph{single unified algorithm} with guarantees that capture both limits. A central goal of this paper is to characterize the transition between the two regimes and develop a method whose regret bound smoothly interpolates between the low- and high-imperfection cases. Our main contributions are as follows.
\begin{itemize}
\item \textbf{A best-of-both-regimes algorithm and regret upper bound.} We propose \mainalgoname~(\mainalgoabrv) and prove regret $\tilde{O}\!\big(\sqrt{K/M}+\omega\big)$, yielding a unified best-of-both-regimes guarantee: $M$-dependent statistical gains when imperfection is small, and robustness with unavoidable $\omega$ dependence when imperfection is large. At a high level, our method learns a confidence set for the comparison function from multi-source imperfect preference, performs optimistic planning under model estimation error, and uses filtering to keep the learning analyzable.

\item \textbf{A fundamental lower bound.} We establish a worst-case regret lower bound $\tilde{\Omega}\!\big(\max\{\sqrt{K/M},\,\omega\}\big)$, showing that the optimal dependence on the number of sources $M$ is at best $\sqrt{K/M}$, even when imperfection is negligible, and additive dependence on the cumulative imperfection budget $\omega$ is information-theoretically necessary.

\item \textbf{A counterexample for ``ignoring imperfection.''} We exhibit an instance where an unweighted baseline (that pools all labels and treats imperfect feedback as unbiased) incurs $\tilde{\Omega}\!\big(\min\{\omega\sqrt{K},\,K\}\big)$ regret, showing that na\"ively aggregating multi-source labels can amplify cumulative imperfection into a substantially worse rate (e.g., yielding an $\omega\sqrt{K}$ term rather than the desired additive $\omega$ dependence).

\item \textbf{Technical analysis.} Our method involves (i) imperfection-adaptive weighted comparison learning that controls the imperfection cross-term; (ii) value-targeted transition estimation to handle the hidden feedback-induced distribution shift; (iii) policy-level optimism tailored to preference-only objectives; and (iv) sub-importance sampling to control sensitivity so the weighted objectives admit a sharp regret analysis.
\end{itemize}

Our bounds provide actionable guidance for RLHF system design under imperfect preference feedback. They quantify when collecting more feedback sources yields statistical gains (low imperfection) and when such gains saturate due to irreducible source mismatch (high imperfection), pinpointing the transition around $\omega=\tilde{\Theta}(\sqrt{K/M})$. They also explain why unweighted aggregation can amplify cumulative imperfection into an $\omega\sqrt{K}$ term, motivating imperfection-aware aggregation in practice.

In summary, Section~\ref{sec:problemformulation} formalizes the episodic RLHF setting with multi-source imperfect preferences and defines the regret. Section~\ref{sec:algorithmdesign} presents our algorithm \mainalgoabrv. Section~\ref{sec:theoreticalanalysis} gives the main regret guarantees, including the upper bound, a fundamental lower bound, and a counterexample showing the failure of ignoring imperfection. Additional results and proofs are provided in the appendices.

% ==============================================================================================================================================================================
\section{Related Work}\label{sec:relatedwork}

We organize related work into six themes spanning RLHF theory, online exploration, robust learning, and preference heterogeneity. Our key distinction is a regret-oriented treatment of \emph{multi-source} preference feedback that can \emph{deviate} cumulatively and adaptively from an ideal oracle, quantified by a budget $\omega$.

\textbf{Preference-based RL and RLHF theory.} Preference-based RL has been studied from early trajectory-preference learning and reward modeling from comparisons \citep{jain2013learning,christiano2017deep} to regret and sample-efficiency guarantees with once-per-episode comparisons and general function approximation \citep{chatterji2021theory,chen2022human,zhu2023principled}. Recent theory further clarifies the statistical and computational structure of RLHF, including separations between RLHF and standard RL in episodic settings \citep{wang2023rlhf}, exploration-centric analyses and improved data utilization \citep{du2024exploration,xie2024exploratory}, and preference-oracle or KL-regularized formulations \citep{munos2023nash,swamy2024minimaximalist}. A common assumption across these works is that feedback is generated consistently from a single ground-truth oracle.

\textbf{Robustness to corrupted or noisy preference labels.} A growing literature studies robustness when preference labels are noisy or corrupted under structured noise models. R$^3$M treats corruption as sparse outliers in comparisons and learns a robust reward model via an $\ell_1$-regularized likelihood, with guarantees when the number of outliers is sublinear \citep{bukharin2024r3m}. Corruption-robust offline RLHF considers an $\varepsilon$-fraction of corrupted comparisons in an offline dataset and derives pessimistic/robust policy guarantees under coverage assumptions \citep{mandal2024corruption}. In LLM post-training, robust preference optimization methods (e.g., ROPO) filter or reweight suspected noisy comparisons and provide supporting theory \citep{liang2024ropo}, while rDPO analyzes robustness to preference flips and proposes a de-biased objective \citep{chowdhury2024rdpo}. These works primarily focus on offline or post-training objectives and assume specific corruption structures, e.g., sparse outliers, $\varepsilon$-fraction contamination, or random/flip noise. In contrast, we study multi-source imperfect feedback in an online setting and characterize regret as a function of $(K,M,\omega)$ under a cumulative deviation budget in preference probabilities

\textbf{Reward/preference model misspecification and model uncertainty.} A different robustness axis concerns misspecification of the preference link model (e.g., Bradley--Terry--Luce or Thurstone) and/or misspecification and uncertainty in the learned reward/preference model used for policy optimization. VRPO proposes a variance-reduced RLHF pipeline under reward/preference model misspecification, with improved regret-style guarantees and empirical gains in LLM fine-tuning~\citep{ye2025vrpo}. Reward-robust RLHF explicitly accounts for reward-model uncertainty and designs objectives that are stable under reward perturbations or model errors in LLM post-training~\citep{yan2024reward}. Uncertainty-aware preference alignment models stochasticity in preferences via distributional reward learning and risk-sensitive optimization (e.g., CVaR) on offline preference datasets~\citep{xu2024uncertainty}. These works primarily target robustness to modeling error (offline and/or LLM-centric).

\textbf{Distributionally robust RLHF.} A distinct robustness axis concerns input/task distribution shift between fine-tuning (or data collection) and deployment, e.g., changes in the prompt distribution or evaluation tasks. Distributionally robust RLHF formulates DRO variants of reward-based RLHF and reward-free DPO to improve out-of-distribution performance under shifted prompt distributions, with supporting theoretical guarantees for the resulting objectives~\citep{mandal2025dro}. In contrast, we study feedback-side (within-task) multi-source preference imperfection across episodes and characterize regret in terms of the explicit tradeoff between statistical averaging $\sqrt{K/M}$ and cumulative mismatch $\omega$.

\textbf{Heterogeneous preferences and equitable/personalized alignment.} Several works explicitly model preference heterogeneity across annotators or users and study equitable or personalized alignment, e.g., max--min/group-robust objectives~\citep{chakraborty2024maxmin} and personalization or preference aggregation under heterogeneous feedback~\citep{park2024rlhf}. These efforts treat heterogeneity as multiple valid objectives to be satisfied simultaneously. In contrast, we evaluate against a single ideal preference oracle (induced by a ground-truth objective) and treat cross-source discrepancies as imperfect feedback relative to this oracle, quantified by a cumulative budget $\omega$.

\textbf{Coresets for rich function approximation.} Subsampling and coreset constructions are classical tools for controlling sensitivity and statistical complexity in regression and learning~\citep{drineas2006sampling,langberg2010universal}. Recent RL work leverages online subsampling together with eluder/covering-type arguments to obtain provably efficient learning with rich function approximation~\citep{russo2013eluder,kong2021online}. Our Step~4 tailors these ideas to comparison learning under imperfect feedback, where history-adaptive weights make objectives sensitive to a subset of trajectories.

Surveys and critical analyses systematize the RLHF pipeline and its failure modes, including reward-model misspecification, sparse or biased feedback, and misgeneralization~\citep{kaufmann2023survey,casper2023open}. These works motivate the need for principled robustness guarantees. Complementary to these perspectives, our work provides a regret-oriented characterization for multi-source imperfect preference feedback under a cumulative deviation budget $\omega$, including an information-theoretic lower bound, a regret upper bound with the same scaling in $(K,M,\omega)$, and a best-of-both-regimes guarantee.

% ==============================================================================================================================================================================
\section{Problem Formulation}\label{sec:problemformulation}

Notation: Let $[K]\triangleq\{1,2,\ldots,K\}$ and $[H]\triangleq\{1,2,\ldots,H\}$. For any scalar $x$, define $[x]_{\ge 1}\triangleq \max\{x,1\}$. In episode $k$, let $\tau_k \triangleq (s_{k,1},a_{k,1},\ldots,s_{k,H},a_{k,H})$ denote the trajectory generated by the executed policy, and let $\tau_{k,0}$ denote the reference trajectory generated by a fixed reference policy $\pi_0$. Let $\Gamma \triangleq (\gS\times\gA)^H$ be the trajectory space. Let $\Gamma_k \triangleq \{\tau_t\}_{t=1}^{k-1}$ denote trajectories generated by the learning algorithm before episode $k$. Let $\mathcal{D}_k \triangleq \{(\tau_t,\tau_{t,0})\}_{t=1}^{k-1}$ denote the set of comparison pairs collected before episode $k$. For each step $h\in[H]$, let $\Gamma_{k,h} \triangleq \{(s_{t,h},a_{t,h})\}_{t=1}^{k-1}$ denote the set of stage-$h$ state--action pairs collected before episode $k$, and let $\gT_{k,h} \triangleq \{(s_{t,h},a_{t,h},s_{t,h+1})\}_{t=1}^{k-1}$ denote the set of step-$h$ transition tuples collected before episode $k$. We use $\tilde{O}(\cdot)$ to hide constant and polylogarithmic factors in $(K,H,d,1/\delta)$.

We study online reinforcement learning (RL) in an episodic Markov decision process (MDP) $\gM=(H,\gS,\gA,P^*,R^*)$ over $K$ episodes. The environment dynamics are Markov with an unknown transition kernel $P^*=\{P_h^*\}_{h=1}^H$, where $P_h^*(\cdot\mid s,a)$ denotes the distribution of $s_{h+1}$ given $(s_h,a_h)$. The learning objective is induced by an unknown trajectory-based reward functional $R^*:\Gamma\to\sR$, which may be non-Markovian. We assume $R^*$ is bounded for all $\tau\in\Gamma$. The initial state is fixed: $s_{k,1}=s_1$ for all $k$. Accordingly, we allow history-dependent (possibly randomized) policies: a policy $\pi\in\Pi$ maps each history $(s_{1},a_{1},\ldots,s_{h})$ to a distribution over actions in $\gA$, and executing $\pi$ in $P^*$ induces a trajectory random variable $\tau^\pi\sim(P^*,\pi)$. At the start of episode $k$, the learner selects $\pi_k$ as a (possibly randomized) function of the past history $\gH_{k-1}\triangleq\{(\tau_t,\tau_{t,0}), f_t^{1:M}\}_{t=1}^{k-1}$, where $f_t^{1:M}\triangleq(f_t^1,\ldots,f_t^M)$ are the $M$ preference labels observed in episode $t$ (defined below).

\textbf{Preference feedback:} Fix a known reference policy $\pi_0$, e.g., a safe/approved policy, a pretrained model, or a heuristic controller. In each episode $k$, the learner rolls out $\pi_k$ to obtain $\tau_k\sim(P^*,\pi_k)$, and independently rolls out $\pi_0$ to obtain $\tau_{k,0}\sim(P^*,\pi_0)$. An \emph{ideal} oracle returns a binary preference label $f_k^*\in\{0,1\}$, where $f_k^*=1$ indicates $\tau_k\succ\tau_{k,0}$. The preference probability is generated through a known link function $\smsigma:\sR\to[0,1]$, as follows,
\begin{align}
p_k^* \triangleq \Pr(f_k^*=1\mid \tau_k,\tau_{k,0}) = \smsigma \left(R^*(\tau_k)-R^*(\tau_{k,0})\right).
\label{eq:def-ideal-pref}
\end{align}
Typical choices include Bradley--Terry--Luce and Thurstone models. We assume $\smsigma$ is nondecreasing, $L_\sigma$-Lipschitz ($|\smsigma(x)-\smsigma(y)|\le L_\sigma|x-y|$ for all $x,y$) and symmetric.

\begin{remark}[Identifiability and learnable object]
Preference feedback depends on $R$ only through differences $R(\tau)-R(\tilde{\tau})$. Thus, $R^*$ is identifiable only up to an additive constant, while the induced \emph{comparison function} $q_R(\tau,\tilde{\tau}) \triangleq \smsigma \left(R(\tau)-R(\tilde{\tau})\right)$ is invariant to additive shifts of $R$. Our algorithm estimates a reward model $\hat R$ as a convenient parameterization, but all guarantees are stated in terms of the induced comparison probabilities $q_R$, which are the quantities directly connected to the feedback and the performance metric.
\end{remark}

% =========================================================
\subsection{Multi-Source Imperfect Preference Feedback}

In practice, feedback may deviate from the ideal oracle due to annotator heterogeneity, expertise variation, or reward-model/labeling artifacts. We model this by $M$ feedback sources indexed by $m\in[M]$. In episode $k$, after observing the comparison pair $(\tau_k,\tau_{k,0})$, the learner receives one binary label from each source, $\{f_k^m\}_{m=1}^M$ with $f_k^m\in\{0,1\}$. Each source behaves as if it evaluates trajectories using an unknown perturbed reward functional $R^m$, inducing the preference probability
\begin{align}
p_k^m \triangleq \Pr(f_k^m=1\mid \tau_k,\tau_{k,0}) = \smsigma \left(R^m(\tau_k)-R^m(\tau_{k,0})\right) = q_{R^m}(\tau_k,\tau_{k,0}). \label{eq:def-source-pref}
\end{align}
Conditioned on $(\tau_k,\tau_{k,0})$, we assume $\{f_k^m\}_{m=1}^M$ are independent Bernoulli random variables with means $\{p_k^m\}_{m=1}^M$. Define the per-episode deviation (imperfection) of source $m$ in episode $k$ as $\Delta_k^m \triangleq p_k^m-p_k^*$. Rather than imposing a parametric model on $\{R^m\}_{m=1}^M$, we assume a \emph{cumulative imperfection budget}: there exists a common $\omega\ge 0$ such that for any (possibly adaptive) sequence of executed policies $\{\pi_k\}_{k=1}^K$ (equivalently, for any adaptively generated sequence of comparison pairs $\{(\tau_k,\tau_{k,0})\}_{k=1}^K$), and for every source $m\in[M]$, we have
\begin{align}
\sum\nolimits_{k=1}^K |\Delta_k^m| = \sum\nolimits_{k=1}^K \left|p_k^m-p_k^*\right| \le \omega. \label{eq:def-uncertainty-budget}
\end{align}
This allows heterogeneous, history-dependent deviations but bounds their total magnitude.

\begin{example}[LLM RLHF: annotator/reward-model mismatch]\label{example:llm}
In LLM alignment, feedback may come from multiple annotators and/or learned reward models. Different sources can disagree on what constitutes ``helpful'' versus ``harmless,'' appropriate verbosity, or stylistic preferences, inducing source-dependent preference probabilities $p_k^m$ that deviate from an idealized oracle $p_k^*$ across episodes.
\end{example}

\begin{example}[Autonomous driving: heterogeneous criteria]\label{example:driving}
In autonomous driving, feedback sources (users, safety engineers, domain experts, learned critics) may emphasize different criteria such as safety, comfort, efficiency, courtesy, and timeliness. Such differences induce source-dependent preferences that can deviate from a single holistic objective, naturally aligning with a
cumulative deviation budget.
\end{example}

% =========================================================
\subsection{Performance Metric}

We evaluate an online algorithm $\alg$ by pseudo-regret measured against the ground-truth preference utility induced by the ideal oracle. For any policy $\pi\in\Pi$, define
\begin{align}
L^\pi \triangleq \E_{\substack{\tau^\pi\sim(P^*,\pi) \\ \tau^{\pi_0}\sim(P^*,\pi_0)}} \left[q_{R^*} \left(\tau^\pi,\tau^{\pi_0}\right)\right] = \E_{\substack{\tau^\pi\sim(P^*,\pi) \\ \tau^{\pi_0}\sim(P^*,\pi_0)}} \left[\smsigma \left(R^*(\tau^\pi)-R^*(\tau^{\pi_0})\right)\right],
\label{eq:def-utility}
\end{align}
where $\tau^\pi$ and $\tau^{\pi_0}$ are independent rollouts under dynamics $P^*$. Let $L^* \triangleq \sup_{\pi\in\Pi} L^\pi$. Let $\pi_k$ denote the policy executed by $\alg$ in episode $k$. The regret after $K$ episodes is
\begin{align}
\regret^{\alg}(K) \triangleq \sum\nolimits_{k=1}^K \big( L^* - L^{\pi_k} \big).
\label{eq:def-regret}
\end{align}
The learner observes $\{f_k^m\}_{m=1}^M$ generated by imperfect sources, while performance is measured with respect to $(P^*,R^*)$ (or equivalently $q_{R^*}$). Our goal is to design algorithms whose regret guarantees hold uniformly over all multi-source feedback processes satisfying the imperfection budget (\ref{eq:def-uncertainty-budget}), and to characterize how $\regret^{\alg}(K)$ scales with $(K,M,\omega)$. %To the best of our knowledge, this is the first work establishing regret guarantees for episodic RLHF with multi-source preference feedback that can deviate from a shared ground-truth preference oracle under a cumulative imperfection budget.

% ==============================================================================================================================================================================
\section{Algorithm Design}\label{sec:algorithmdesign}

We design an online algorithm (see Algorithm~\ref{alg:mainnewalgorithm}) for RL from multi-source imperfect preference feedback in Section~\ref{sec:problemformulation}, aiming for a unified best-of-both-regimes guarantee in terms of $(K,M,\omega)$. For clarity, we describe the method using optimization oracles. In the linear setting, the resulting subproblems are convex and can be solved efficiently, while under general function approximation, one may use approximate oracles.

The imperfect-feedback model creates a fundamental tension between robustness and statistical efficiency: (i) \emph{High-imperfection regime.} When $\omega$ is large, na\"ively pooling historical labels can accumulate systematic bias, and may lead to linear regret. (ii) \emph{Low-imperfection regime.} When $\omega$ is small, aggregating $M$ labels per episode intuitively sharpens comparison estimation, so regret should improve with $M$. (iii) \emph{Unified algorithm across regimes.} We seek a unified method whose guarantee interpolates between these behaviors, i.e., a best-of-both-regimes guarantee: it should retain the $M$-dependent statistical gains in the low-imperfection regime while remaining robust in the high-imperfection regime. Technically, this requires controlling the cumulative mismatch so it contributes only an additive $\omega$ term, while still exploiting multi-source averaging.

Algorithm~\ref{alg:mainnewalgorithm} addresses these challenges with four components: imperfection-adaptive weighted aggregation to learn the comparison function while preventing bias accumulation; hidden-effect-targeted, value-weighted transition regression to control feedback-induced data dependence in model learning; policy-level optimism tailored to preference-only feedback for exploration; and sub-importance sampling to control sensitivity so the weighted objectives remain stable and analyzable. The detailed weight/normalization schedules and confidence-radius choices are deferred to Appendix~\ref{subsec:detailedsteps}.

\begin{algorithm}[t]
\caption{\mainalgoname~(\mainalgoabrv)}
\label{alg:mainnewalgorithm}
\begin{algorithmic}[1]
\STATE \textbf{Initialization:} Choose regularizers $\alphar,\alphap$ and confidence schedules $\{\betar_k,\betap_k\}_{k=1}^K$. Raw histories $\Gamma_{1}^{\rm raw}\gets\emptyset$ and $\hat\gT_{1,h}^{\rm raw}\gets\emptyset$ for all $h\in[H]$. Filtered histories $\Gammaa_{1}\gets\emptyset$ and $\hat\Gamma_{1,h}\gets\emptyset$ for all $h\in[H]$. Initialize the weight-update rules for $\{\weightrt\}$ and $\{\weightpt\}$.
\FOR{episode $k=1:K$}
    \STATE \textbf{Step 1:} Using the filtered comparison history $\hat{\gD}_k \triangleq \{(\tau,\tau_{t(\tau),0}):\tau\in\Gammaa_k\}$, compute $\hat R_k$, confidence set $\gQ_k$, and comparison bonus $\bonr_k(\cdot)$ via (\ref{eq:mainalgorithmestimatesteinerrewardcenter})--(\ref{eq:mainalgorithmbonusrewardfunction}).
    \STATE \textbf{Step 2:} Compute $\hatp_k=\{\hatp_{k,h}\}_{h=1}^H$, $\gP_k$, and the transition bonus $\bonp_k(\cdot)$ via (\ref{eq:mainalgorithmestimatehumanfeedbackPhat})--(\ref{eq:mainalgorithmbonustransitionkernel}).
    \STATE \textbf{Step 3:} Compute $\pi_k\in\argmax_{\pi\in\Pi}\UCB_k(\pi)$ via (\ref{eq:ucb-policy})--(\ref{eq:mainalgorithmupdateoplicy}).
    \STATE Execute policy $\pi_k$ and the reference policy $\pi_0$ to obtain trajectories $\tau_k$ and $\tau_{k,0}$, and observe feedback $\{f_k^m\}_{m=1}^{M}$.
    \STATE Update raw histories: $\Gamma_{k+1}^{\rm raw}\gets \Gamma_k^{\rm raw}\cup\{(\tau_k,\tau_{k,0})\}$, and for each $h\in[H]$, $\hat\gT_{k+1,h}^{\rm raw} \gets \hat\gT_{k,h}^{\rm raw}\cup\{(s_{k,h},a_{k,h},s_{k,h+1})\}$.
    \STATE \textbf{Step 4:} Form the filtered comparison history $\Gammaa_{k+1}$ from $\Gamma_{k+1}^{\rm raw}$, and the filtered transition histories $\{\hat\gT_{k+1,h}\}_{h=1}^H$ from $\{\hat\gT_{k+1,h}^{\rm raw}\}_{h=1}^H$, according to the sampling rule (\ref{eq:dfprsubimportancesampling-clean}).
\ENDFOR
\end{algorithmic}
\end{algorithm}

\textbf{Step 1: Comparison learning via imperfection-adaptive weighted aggregation.} A na\"ive approach pools all labels $\{f_t^m\}$ and fits a comparison model by uniformly minimizing a surrogate loss in the link space. Under the cumulative imperfection budget (\ref{eq:def-uncertainty-budget}), persistent source-dependent mismatches can accumulate into a significantly large estimation bias, which then propagates to planning and can lead to an $\omega\sqrt{K}$-type degradation.

We instead use imperfection-adaptive regression, based on truncation-of-inverse-bonus weighting in corruption-robust OFUL-style methods~\citep{he2022nearly}. Specifically, each episode $t$ is assigned a history-measurable weight $\weightrt\in(0,1]$ that acts as an uncertainty parameter. Intuitively, when imperfection is small the weights remain close to $1$ and we recover efficient multi-source averaging. When imperfection is large, low-confidence episodes are downweighted, which prevents cumulative mismatch from amplifying into $\omega\sqrt{K}$. Moreover, let $\hat{\mathcal D}_k$ be the filtered multiset of comparison pairs maintained by Step~4, i.e., $\hat{\mathcal D}_k \triangleq \{(\tau,\tau_{t(\tau),0}) : \tau\in \hat\Gamma_k\}$, where $t(\tau)$ denotes the episode in which $\tau$ was collected. We compute a reward parameterization $\hat R_k$ (used through induced comparisons) by solving
\begin{align}
\hat{R}_k \triangleq \arginf\nolimits_{ R \in \gR } \sum\nolimits_{(\tau,\tau_0)\in \hat{\gD}_{k}} \sum\nolimits_{m=1}^{M} \weightrt ( q_R(\tau,\tau_0) - f_{t(\tau)}^{m} )^2, \label{eq:mainalgorithmestimatesteinerrewardcenter}
\end{align}
where weights $\weightrt$ uses the episode index $t(\tau)$ and are computed via self-normalization on prefix data (see Appendix~\ref{subsec:detailedsteps}). We then define a comparison confidence set (in the learnable comparison space) by controlling deviations on the same filtered pairs:
\begin{align}
\gQ_k \triangleq \{ q_R: R\in\gR, \sum\nolimits_{(\tau,\tau_0)\in\hat{\gD}_k} \weightrt \big[q_R(\tau,\tau_0)-q_{\hat R_k}(\tau,\tau_0)\big]^2 \le \betar_k \}, \label{eq:mainalgorithmestimatesteinerconfidenceset}
\end{align}
where $\betar_k$ is chosen so that $q_{R^*}$ lies in $\gQ_k$ with high probability. Finally, we define the comparison exploration bonus on a candidate comparison pair $(\tau,\tau_0)$ as
\begin{align}
\bonr_k(\tau,\tau_0) \triangleq \sup\nolimits_{q\in \gQ_k} \weightrrk \left(q(\tau,\tau_0)-q_{\hat R_k}(\tau,\tau_0)\right). \label{eq:mainalgorithmbonusrewardfunction}
\end{align}
%In Step~3 we instantiate (\ref{eq:mainalgorithmbonusrewardfunction}) using $(\tau,\tau_0)=(\tau,\tau_{k,0})$ for the episode-$k$ comparison (and $\bonr_k(\tau_{k,0},\tau_{k,0})\equiv 0$).

\textbf{Step 2: Transition estimation via hidden-effect-targeted, value-weighted regression.} Although the transition kernel $P^*=\{P_h^*\}_{h=1}^H$ is not directly corrupted by preference labels, the \emph{distribution} of transition samples is induced by the executed policies $\{\pi_t\}$, which are updated using multi-source imperfect feedback. Hence, preference imperfection can influence transition learning through a hidden feedback to policy and then to data pathway. Our transition estimator is designed to be compatible with the same filtered history and imperfection-adaptive weighting used in Step~1, which is essential in the analysis for preventing this pathway from amplifying a budget $\omega$ into a looser $\omega\sqrt{K}$-type term.

Rather than estimating $P^*$ in a distribution-agnostic metric, we fit $P^*$ in \emph{planning-relevant directions} via value-targeted moments (cf. value-targeted regression in \cite{ayoub2020model}). Let $\hat{\gT}_{k,h}$ be the filtered multiset of step-$h$ transition tuples $\zeta=(s,a,s')$ collected before episode $k$ (maintained by Step~4), and let $t(\zeta)$ denote the episode in which $\zeta$ was collected. For each tuple $\zeta\in\hat{\gT}_{k,h}$, let $V_{t(\zeta),h+1}\in\gV_{h+1}$ be a history-measurable probe/value function chosen \emph{before} observing $s'$ (e.g., the maximizer of the transition uncertainty at $(s,a)$ under the then-current confidence set, as in value-targeted regression). Since $s'\sim P_h^*(\cdot\mid s,a)$ and $V_{t(\zeta),h+1}$ is fixed given the history, $V_{t(\zeta),h+1}(s')$ is an unbiased sample of $\langle P_h^*(\cdot\mid s,a),V_{t(\zeta),h+1}\rangle$ conditional on $(s,a)$. Then, we compute the transition confidence center $\hatp_k=\{\hatp_{k,h}\}_{h=1}^H$ by the weighted least-squares problem
\begin{align}
\hatp_k \triangleq \argmin_{P\in\gP} \sum\nolimits_{h=1}^{H} \sum\nolimits_{\zeta=(s,a,s')\in \hat{\gT}_{k,h}} w^{\text{P}}_{3,t(\zeta),h} \Big( \langle P_h(\cdot\mid s,a), V_{t(\zeta),h+1}\rangle - V_{t(\zeta),h+1}(s') \Big)^2, \label{eq:mainalgorithmestimatehumanfeedbackPhat}
\end{align}
where $w^{\text{P}}_{t,h}$ (denoted $\weightpt$ in the pseudocode) is a history-measurable inverse-uncertainty weight, chosen analogously to Step~1. Based on $\hatp_k$, we construct a transition confidence set by controlling the same value-targeted prediction error:
\begin{align}
\gP_k \triangleq \Big\{ P\in\gP: \sum_{h=1}^{H} \sum_{ \substack{ \zeta=(s,a,s') \\ \in \hat{\gT}_{k,h}}} w^{\text{P}}_{3,t(\zeta),h} \Big[ \big\langle \big(P_h(\cdot\mid s,a)-\hatp_{k,h}(\cdot\mid s,a)\big), V_{t(\zeta),h+1}\big\rangle \Big]^2 \le \betap_k \Big\}, \label{eq:mainalgorithmestimatetransitionconfidenceset}
\end{align}
where $\betap_k$ is chosen so that $P^*\in\gP_k$ with high probability. Finally, we define a planning-relevant width (bonus) along a trajectory $\tau=(s_1,a_1,\ldots,s_H,a_H)$ by
\begin{align}
\bonp_k(\tau) \triangleq \sum\nolimits_{h=1}^{H} \sup\nolimits_{P\in\gP_k} \sup\nolimits_{V\in\gV_{h+1}} \weightppk \Big| \big\langle \big(P_h(\cdot\mid s_h,a_h)-\hatp_{k,h}(\cdot\mid s_h,a_h)\big),\, V \big\rangle \Big|, \label{eq:mainalgorithmbonustransitionkernel}
\end{align}
%Equivalently one may use the diameter form $\max_{P_1,P_2\in\gP_k}\max_{V\in\gV_{h+1}}|\langle(P_{1,h}-P_{2,h})(\cdot\mid s_h,a_h),V\rangle|$, which differs by at most a factor $2$ for symmetric sets. This construction measures transition uncertainty only in directions that enter planning and, together with matched weighting/filtering, prevents feedback-driven distribution shift from destabilizing transition estimation.

\textbf{Step 3: Policy-level optimistic planning.} Because feedback is purely comparative and the objective is trajectory-level, standard Bellman-style optimism with a well-defined per-state reward is not directly applicable. Following the policy-level optimism principle in preference-based RL, we instead build an upper confidence bound on the \emph{reference-benchmarked} preference utility of each policy. Moreover, unlike settings that only observe preferences between two learner-chosen rollouts and therefore maintain a near-optimal policy set and then pick an exploratory policy pair~\citep{chen2022human}, our feedback always compares the learner's trajectory to a fixed reference policy $\pi_0$. This benchmarked structure allows us to score and optimize a \emph{single} policy via an optimistic surrogate.

Define the plug-in estimate of the benchmarked utility under the estimated model:
\begin{align}
\widehat{L}_k(\pi) \triangleq \E_{\tau\sim(\hatp_k,\pi), \tau_0\sim(\hatp_k,\pi_0)} \big[q_{\hat R_k}(\tau,\tau_0)\big],
\end{align}
where $\tau$ and $\tau_0$ are generated independently under $\hatp_k$ by $\pi$ and $\pi_0$, respectively. We then define the policy-level UCB as
\begin{align}
\UCB_k(\pi) \triangleq \widehat{L}_k(\pi) + \E_{\tau\sim(\hatp_k,\pi), \tau_0\sim(\hatp_k,\pi_0)} \Big[ \bonr_k(\tau,\tau_0) +\bonp_k(\tau) +\bonp_k(\tau_0) \Big], \label{eq:ucb-policy}
\end{align}
where $\bonr_k(\tau,\tau_0)$ upper-bounds the comparison-function uncertainty (Step~1) and $\bonp_k(\tau),\bonp_k(\tau_0)$ account for transition uncertainty along the executed and reference rollouts (Step~2). We choose the updated policy by
\begin{align}
\pi_k \in \argmax\nolimits_{\pi\in\Pi} \UCB_k(\pi).
\label{eq:mainalgorithmupdateoplicy}
\end{align}
Intuitively, when imperfection is small, aggregating $M$ labels tightens the comparison confidence set and shrinks $\bonr_k$ at an $M$-dependent statistical rate, while when imperfection is large, the self-normalized weighting in Steps~1--2 prevents the cumulative budget from inflating the optimistic bonuses multiplicatively across episodes. Thus, policy-level optimism coupled with these bonuses yields a single update rule that interpolates smoothly between the low- and high-imperfection regimes.

\textbf{Step 4: Sub-importance sampling (filtered data) for controlling sensitivity.} Steps~1--2 rely on history-adaptive inverse-uncertainty weights and planning-relevant regression objectives. As observed in stability analyses of bonus-based RL, if all past data are retained, then the induced bonus/confidence construction can have very high complexity and be overly sensitive to a small subset of samples, which complicates concentration and can lead to loose regret bounds~\citep{wang2020reinforcement}. Our strategy is therefore to maintain \emph{filtered multisets} whose (weighted) geometry approximately preserves the confidence regions while keeping the effective sample size small.

Specifically, for a multiset $\gZ$ of samples $z$ and nonnegative weights $w(z)$, define
\begin{align}
\|f-g\|_{\gZ,w}^2 \triangleq \sum\nolimits_{z\in\gZ} w(z)\,\big(f(z)-g(z)\big)^2.
\end{align}
We apply the sampling procedure to comparison learning with samples $z=(\tau,\tau_{t(\tau),0})$ and weights $w(z)=\weightrt$, and to transition learning with samples $z=(s,a,s')$ at each stage $h$ and weights $w(z)=\weightpt$. Given a function class $\Fcal$ (the set of prediction functionals that appear in Steps~1--2 and in the bonus construction), a threshold $\lambda>0$, and a weighted dataset $(\mathcal{Z},w)$, define the sub-importance of $z\in\mathcal{Z}$ by
\begin{align}
\Imp_{\gZ,\Fcal,\lambda,w}(z) \triangleq \sup\nolimits_{f,g\in\Fcal: \|f-g\|_{\mathcal{Z},w}^2\ge \lambda} w(z) \big(f(z)-g(z)\big)^2 / \|f-g\|_{\mathcal{Z},w}^2. \label{eq:def-weighted-sensitivity}
\end{align}
At the end of episode $k$, let $\mathcal{Z}_k^{\mathrm{cmp}}$ be the multiset of all past comparison pairs $\{(\tau_t,\tau_{t,0})\}_{t<k}$ and $\mathcal{Z}_{k,h}^{\mathrm{tr}}$ be the multiset of all past step-$h$ transition tuples $\{(s_{t,h},a_{t,h},s_{t,h+1})\}_{t<k}$. For each sample $z$ in the relevant dataset, set
\begin{align}
p_k(z) \triangleq \min\Big\{1, c \cdot \Imp_{\mathcal{Z},\Fcal,\lambda,w}(z)/\epsilon^2\Big\}, \label{eq:dfprsubimportancesampling-clean}
\end{align}
where $\epsilon\in(0,1)$ is an accuracy parameter and $c>0$ hides logarithmic factors. Independently for each $z$, we include $z$ in the filtered multiset with multiplicity $1/p_k(z)$ with probability $p_k(z)$ (so the expected multiplicity is one), and discard it otherwise. This produces the filtered multisets $\hat\Gamma_k$ (for comparisons) and $\{\hat{\gT}_{k,h}\}_{h=1}^H$ (for transitions).

Intuitively, by construction, the filtered objectives are unbiased estimators of the full-history weighted objectives, while importance sampling proportional to sensitivity controls the variance and yields (with high probability) an $\epsilon$-approximation guarantee for all $f,g\in\Fcal$: $\|f-g\|_{\hat{\mathcal{Z}},w}^2 \approx \|f-g\|_{\mathcal{Z},w}^2$. Consequently, the confidence regions (and hence bonuses) built from filtered data are approximately preserved, while the number of \emph{distinct} retained samples is controlled by the eluder/covering complexity of $\Fcal$.

% ==============================================================================================================================================================================
\section{Theoretical Results}\label{sec:theoreticalanalysis}

This section provides theoretical results for episodic RLHF with multi-source imperfect preference feedback in Section~\ref{sec:problemformulation}. We establish an information-theoretic regret lower bound under the cumulative imperfection budget (\ref{eq:def-uncertainty-budget}), a counterexample showing that na\"ively ignoring imperfection can be substantially worse, and a regret upper bound for Algorithm~\ref{alg:mainnewalgorithm}. The key parameters are the number of episodes $K$, the number of feedback sources $M$, and the cumulative imperfection budget $\omega$. We hide polylogarithmic factors in $\tilde{O}(\cdot)$ and $\tilde{\Omega}(\cdot)$.

% ==========================================================
\subsection{A Lower Bound}\label{subsec:regretlowerbound}

We first characterize the fundamental hardness of learning under an imperfection budget. Recall that in episode $k$ the ideal preference probability is $p_k^* = q_{R^*}(\tau_k,\tau_{k,0})$, while source $m$ induces $p_k^m = q_{R^m}(\tau_k,\tau_{k,0})$, and for every $m\in[M]$ the deviations satisfy $\sum_{k=1}^K |p_k^m-p_k^*|\le \omega$.

\begin{theorem}[Lower bound]\label{thm:lowerboundomega}
Under the setting in Section~\ref{sec:problemformulation}, for any (possibly randomized) algorithm $\alg$ there exists an instance satisfying (\ref{eq:def-uncertainty-budget}), such that
\begin{align}
\mathbb{E} [ \regret^{\alg}(K) ] \ge \tilde{\Omega} ( \max \{ \sqrt{K/M}, \omega \} ), \label{eqthm:lowerboundomega}
\end{align}
where the expectation is over the randomness of the environment, algorithm, and feedback.
\end{theorem}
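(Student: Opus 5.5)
Since $\max\{a,b\}\le a+b\le 2\max\{a,b\}$, it suffices to prove the two bounds $\mathbb{E}[\regret^{\alg}(K)]\ge\tilde\Omega(\sqrt{K/M})$ and $\mathbb{E}[\regret^{\alg}(K)]\ge\tilde\Omega(\min\{\omega,K\})$ separately, each on its own hard instance. Both instances use the simplest MDP: $H=1$, a single state $s_1$, and two actions $\{a_1,a_2\}$. A trajectory is then just an action, and the reference policy $\pi_0$ deterministically plays $a_2$. We take the logistic link $\smsigma(x)=1/(1+e^{-x})$, which is symmetric, nondecreasing and $1/4$-Lipschitz. The reward is $R^*(a_2)=0$ and $R^*(a_1)=\pm\varepsilon$, so $L^{\pi}$ is affine in $\pi(a_1)$. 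Playing the wrong action with probability $1-\pi_k(a_1)$, or $\pi_k(a_1)$, costs $|\smsigma(\pm\varepsilon)-\tfrac12|\asymp\varepsilon$ per unit of probability. Comparisons of $a_2$ with itself always have probability $1/2$ and carry no information, so information is gained only in rounds where $a_1$ is played.

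\textbf{Statistical term.} Here we set $\omega=0$, so all sources coincide with the oracle: $p_k^m=p_k^*$. Consider the two instances $\theta_\pm$ with $R^*(a_1)=\pm\varepsilon$. Let $N$ be the number of rounds in which $a_1$ is played; in each such round the learner observes $M$ i.i.d.\ Bernoulli$(\smsigma(\pm\varepsilon))$ labels. By the chain rule for KL under adaptive sampling, $\mathrm{KL}(\mathbb{P}_{+}\Vert\mathbb{P}_{-})=\mathbb{E}_{+}[N]\,M\,\mathrm{kl}(\smsigma(\varepsilon),\smsigma(-\varepsilon))\le c\,\mathbb{E}_+[N]M\varepsilon^2$. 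Under $\theta_-$, every play of $a_1$ costs $\asymp\varepsilon$, so $\regret_-\gtrsim\varepsilon\,\mathbb{E}_-[N]$. Under $\theta_+$, every play of $a_2$ costs $\asymp\varepsilon$, so $\regret_+\gtrsim\varepsilon(K-\mathbb{E}_+[N])$. We apply Bretagnolle--Huber (or Pinsker) to the event $\{N\ge K/2\}$ and choose $\varepsilon=\sqrt{1/(KM)}$, which keeps the KL at $O(1)$. This gives $\max_\pm\mathbb{E}[\regret]\gtrsim\varepsilon K=\sqrt{K/M}$. One point needs care: the cost of exploration and the information it buys are both proportional to $N$. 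The standard two-point argument still works, by splitting on whether $\mathbb{E}_+[N]\ge K/2$. If it is not, then $\regret_+\gtrsim\varepsilon K$ directly. If it is, we pass to $\mathbb{E}_-[N]$ through the KL bound $\mathrm{KL}\le c\,KM\varepsilon^2=O(1)$ and use the total-variation control $|\mathbb{E}_+[N]-\mathbb{E}_-[N]|\le K\,\mathrm{TV}$.

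\textbf{Imperfection term.} Assume $\omega\le K/4$; otherwise the bound is capped at $K$ and the same argument applies with $\omega$ replaced by $K/4$. Fix a constant gap, $R^*(a_1)=\pm1$, and let the two instances be $\theta_\pm$. In the instance $\theta_+$, for the first $T=\lfloor\omega/c_0\rfloor$ episodes every source reports labels with probability $p_k^m=\smsigma(-1)$ whenever $a_1$ is played (mimicking $\theta_-$), and reports truthfully afterwards. Here $c_0=\smsigma(1)-\smsigma(-1)$, so each source's deviation is $\sum_k|\Delta_k^m|\le Tc_0\le\omega$ and the budget (\ref{eq:def-uncertainty-budget}) is satisfied. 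Each source can still be written as $q_{R^m}$ with a time-varying $R^m$; the paper's model permits such history-dependent deviations. During the first $T$ rounds the observation laws under $\theta_+$ and $\theta_-$ are identical, so any algorithm's action distributions coincide across the two instances. Since the optimal actions differ, for each $k\le T$ the sum of the two instances' per-round regrets is at least $c_0/2$. Averaging over $\theta_\pm$ then gives $\max_\pm\mathbb{E}[\regret]\ge Tc_0/4=\Omega(\omega)$. Combining this with the statistical term, taking whichever instance attains the larger bound, proves (\ref{eqthm:lowerboundomega}).

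The main obstacle I expect is making the corruption construction legitimate within the formal model. The budget must hold for \emph{every} adaptive policy sequence, and each $p_k^m$ must be of the form $q_{R^m}(\tau_k,\tau_{k,0})$. The mimicry must therefore be defined through $R^m$ with a time/history dependence, and the deviation must be counted only on rounds where it is nonzero. I would handle this by imposing the mimicry only during a fixed initial window of $T$ episodes, which bounds the deviation by $Tc_0$ regardless of the policies played. The statistical KL step is routine.
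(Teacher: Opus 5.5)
Your argument is correct in outline but differs from the paper's in both halves.

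\textbf{Statistical term.} The paper uses $d$ arms, with one good arm at $\tfrac12+\Delta$ against a fixed baseline. It applies Pinsker against a null law and averages over the $d$ alternatives, obtaining $\sqrt{dK/M}$, and defers $\sqrt{dHK/M}$ to standard embeddings. Your two-action, two-point argument is effectively a one-armed bandit, since playing the reference action is uninformative. It yields only $\sqrt{K/M}$, which is all this statement asks for.

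\textbf{Imperfection term.} The paper spreads the corruption thinly. It uses a vanishing gap $\Delta=\min\{\omega/K,1/4\}$ and a single time-invariant source model $p_k^m\equiv\tfrac12$ (a constant $R^m$). Labels are then uninformative over the whole horizon, and deviation $\Delta$ occurs only when the optimal action is played. The budget $K\Delta\le\omega$ therefore holds for every adaptive policy sequence, and regret is $\Delta K/2$. You instead concentrate full-strength corruption in an initial window with a constant gap.

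The paper's route removes exactly the obstacle you flag. It needs no time-varying $R^m$, so it is valid under the strictest reading of (\ref{eq:def-source-pref}), and it shows that even a stationary biased source forces $\Omega(\omega)$. Yours relies on the model's allowance of history-dependent deviations. In exchange, it shows the hardness is not an artifact of vanishing gaps, in the style of corrupted-bandit lower bounds.

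\textbf{Two small repairs.}

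First, when $\omega<c_0$ you have $T=\lfloor\omega/c_0\rfloor=0$, so $Tc_0/4=\Omega(\omega)$ fails in the regime $\sqrt{K/M}<\omega<c_0$. This regime can occur, for instance when $M\gg K$. The fix is that $\pi_{T+1}$ is also chosen from indistinguishable histories, so you can count $T+1$ episodes. This gives $(T+1)c_0/4>\omega/4$.

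Second, in the Pinsker branch of the statistical argument, $\varepsilon=1/\sqrt{KM}$ does not make $K\,\mathrm{TV}$ small relative to $K/2$, so you cannot conclude $\mathbb{E}_-[N]\gtrsim K$. Either take $\varepsilon=c/\sqrt{KM}$ with a small constant $c$, or apply Bretagnolle--Huber to $\{N\ge K/2\}$ directly, which tolerates any constant KL.
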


Please see Appendix~\ref{appendix:pfthmlowerboundxi} for a complete form of Theorem~\ref{thm:lowerboundomega} and the proof. The $\sqrt{K/M}$ term captures the $1/\sqrt{M}$ variance reduction from observing $M$ conditionally independent binary labels per episode, while the $\omega$ term shows that \emph{additive} dependence on cumulative imperfection is information-theoretically unavoidable. Moreover, we do not assume $\omega=o(K)$ in the model. Instead, regret is characterized as a function of $(K,M,\omega)$. Since $\regret^{\alg}(K)\le K$ always, the lower bound should be read as $\tilde{\Omega} \left( \min\{ K,\max\{ \sqrt{K/M},\omega \} \} \right)$. In particular, if $\omega=\Theta(K)$, then there exist admissible feedback processes (satisfying the budget) under which any algorithm suffers $\tilde{\Omega}(K)$ regret, so no method can guarantee sublinear regret uniformly over all processes satisfying (\ref{eq:def-uncertainty-budget}). Thus, achieving sublinear regret requires $\omega=o(K)$, i.e., a diminishing average mismatch $\bar\omega \triangleq \omega/K = o(1)$.

% =====================================================================================
\subsection{A Counterexample: Ignoring Imperfection Can Be Much Worse}\label{subsec:improvement}

We next show that a natural ``ignore-imperfection'' strategy can be provably suboptimal. Concretely, consider the following baseline: it pools all past labels from all sources and fits a comparison predictor by traditional least squares in the comparison/link space (equivalently, it uses the per-episode average label $\bar f_k \triangleq \frac1M\sum_{m=1}^M f_k^m$), and then performs optimism using a confidence radius calibrated only for stochastic noise, e.g., Bernoulli/sub-Gaussian fluctuations. In other words, it treats the observed labels as unbiased samples of $q_{R^*}$ and uses OFUL-type principle without incorporating $\omega$ into the confidence construction.

\begin{proposition}[A counterexample]\label{prop:ignore-omega}
There exists an instance satisfying (\ref{eq:def-uncertainty-budget}), such that the above unweighted optimistic baseline suffers
\begin{align}
\mathbb{E} [ \regret^{\text{OFUL}}(K) ] \ge \tilde{\Omega} ( \min\{ \omega\sqrt{K},K \} ). \label{eqprop:ignore-omega}
\end{align}
\end{proposition}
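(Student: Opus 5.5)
We will construct an explicit hard instance on which the unweighted OFUL-type baseline locks onto a suboptimal policy for a long stretch of episodes.

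\textbf{Instance.} Take $H=1$, a single state, and two actions $a_1,a_2$; $\pi_0$ always plays $a_2$. Let $R^*(a_1)=\varepsilon$ and $R^*(a_2)=0$, so $a_1$ is optimal with per-episode gap $L^*-L^{\pi}\asymp L_\sigma\varepsilon$ whenever $a_2$ is played. Use a linear reward class $\gR=\{R_\theta:\theta\in[-1,1]\}$ with $R_\theta(a_1)=\theta$ and $R_\theta(a_2)=0$. Every comparison of $a_2$ against $a_2$ yields $p^*=\sigma(0)=1/2$ and carries no information about $\theta$. Thus only episodes in which $a_1$ is played inform $\theta$.

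\textbf{Adversarial sources.} All $M$ sources are identical. In an episode where $a_1$ is played, each source reports with mean $\sigma(\varepsilon-\Delta)$ for a shift $\Delta>0$, so the per-episode deviation is about $L_\sigma\Delta$. The shift is applied during the first $N$ plays of $a_1$, with $N L_\sigma \Delta\le\omega$. Averaging over all $M$ labels does not remove this bias, because it is common to all sources.

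\textbf{Baseline dynamics.} The baseline's least-squares estimate after $n$ plays of $a_1$ is $\hat\theta\approx \varepsilon-\Delta+O(1/\sqrt{nM})$. Its confidence radius is calibrated only for noise, of order $\sqrt{\log K/(nM)}$ in the link space. Since $\pi_0$ plays $a_2$, the UCB of $a_2$ is $\sigma(0)=1/2$ and has no bonus, while $a_1$'s UCB is $\sigma(\hat\theta+\beta/\sqrt{n})$. Choose $\Delta$ so that $\varepsilon-\Delta+c\sqrt{\log K/(nM)}<0$ once $n\ge n_0$ with $n_0$ small, for example $\Delta=2\varepsilon$ and $n_0\approx \log K/(M\varepsilon^2)$. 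From then on the baseline plays $a_2$ forever, receives no new information about $a_1$, and the corrupted estimate is frozen.

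The total budget consumed is only $n_0 L_\sigma\Delta\approx \log K/(M\varepsilon)$, which is independent of $K$. The regret is therefore $\approx \varepsilon(K-n_0)$.

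\textbf{Tuning.} The constraint $n_0\Delta\lesssim\omega$ gives $\varepsilon\gtrsim \log K/(M\omega)$, i.e.\ $\varepsilon$ may be taken as large as a constant, which would already give linear regret. That is stronger than claimed and is suspicious. The catch is that a sensible baseline has a prior or regularization, the data before any $a_1$ play, and the $\lambda$-regularized bonus, so the bonus at $n=0$ is order one and forces exploration. I will therefore state the claim for the baseline as specified, with regularizer $\lambda$ and radius $\beta\asymp\sqrt{\log K}$ without $M$-scaling in the worst case.

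To match exactly $\min\{\omega\sqrt K,K\}$, I will instead use a $d$- or $\sqrt K$-scale construction. Make the corruption persistent but small: a constant bias $\Delta=\omega/K$ applied on every episode keeps the budget. With $\sigma$ linear near $0$, the least-squares bias is $\Delta$. Set the gap to $\varepsilon\asymp \min\{\omega\sqrt K/K,1\}$. Using the standard OFUL regret decomposition, the bias enters through the self-normalized term as $\sum_k \Delta\sqrt{n_k}$-type cross terms, $\|\sum_t\Delta_t x_t\|_{V^{-1}}\le \Delta\sqrt{K}$. Choose the instance so this cross term equals the gap and is never covered by $\beta$.

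\textbf{Main obstacle.} The hardest step is showing that the baseline actually incurs the cross-term rather than merely having it in an upper bound. Concretely, I must exhibit the bias direction aligned with $\hat\theta$'s error, so that optimism stays on the wrong action for $\Omega(K)$ episodes while $\varepsilon\asymp\omega/\sqrt K$. I would verify this via the explicit closed-form least-squares solution in one dimension, plus a Bernstein bound on the noise part, and then conclude that $\E[\regret]\ge \varepsilon K/2=\tilde\Omega(\min\{\omega\sqrt K,K\})$.
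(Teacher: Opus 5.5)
Your final plan has a genuine gap: the uniform-bias construction cannot work, and the \emph{main obstacle} you name is not merely hard but false. A uniform bias $\Delta_t\equiv\omega/K$ shifts the least-squares fit by only about $\omega/K$, as you note yourself. That is a factor $\sqrt K$ below the gap $\varepsilon\asymp\omega/\sqrt K$. The plan treats a self-normalized norm as if it were a prediction-space gap. The corruption's effect on the prediction at an action $x$ is $|x^\top V^{-1}\sum_t\Delta_t x_t|\le\|x\|_{V^{-1}}\,\|\sum_t\Delta_t x_t\|_{V^{-1}}\le(\omega/\sqrt K)\|x\|_{V^{-1}}$. This is about $\omega/K$ at well-explored actions. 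At every action it is at most the bonus $\beta\|x\|_{V^{-1}}$ whenever $\omega\le\beta\sqrt K$; with your radius $\beta\asymp\sqrt{\log K}$, that covers the entire regime where the claimed bound is sublinear. So the cross term can never \emph{equal the gap and stay uncovered by $\beta$}. The $\omega\sqrt K$ rate in the corruption literature comes from \emph{upper} bounds, e.g.\ OFUL with its radius inflated by $\omega$, whose summed extra bonus is $\omega\sum_k\|x_k\|_{V_k^{-1}}$. It is not an attack on the noise-calibrated baseline. Spreading the budget evenly is in fact the least effective way to spend it: $\|\sum_t\Delta_tx_t\|_{V^{-1}}\le\|(\Delta_t)_t\|_2$, and the uniform profile minimizes the $\ell_2$ norm for a given $\ell_1$ budget.

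Your first construction is the right one, and your reasons for abandoning it are mistaken. Linear regret is not suspicious. The proposition lower-bounds one specific baseline, and $\Omega(K)\ge\min\{\omega\sqrt K,K\}$. Front-loaded corruption that pushes the confidence set off the truth, after which the optimal action is never re-sampled, is exactly how noise-calibrated optimism fails. Regularization does not save the baseline either: the large initial bonus forces plays of $a_1$, and those are precisely the plays the adversary corrupts.

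Concretely, take $\varepsilon=1/4$, inside the linear region of the clipped link. Let $\beta_K$ be the largest radius the baseline ever uses. Corrupt to mean $\sigma(-\varepsilon)$ only the first $n_0\asymp\lambda+(\beta_K^2+\log(1/\delta)/M)/\varepsilon^2$ plays of $a_1$. On a noise event of probability $1-\delta$, after the $n_0$-th such play we have $\hat\theta+\beta_k/\sqrt{\lambda+n_0}<0$ for every later $k$. The $a_2$ comparisons have zero feature, so $\hat\theta$ stays frozen, and the bonus-free $\UCB(a_2)=1/2$ wins forever. Any re-plays of $a_1$ caused by a growing $\beta_k$ before that point are themselves among the corrupted plays. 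Hence $\E[\regret(K)]\ge(1-\delta)\varepsilon(K-n_0)=\Omega(K)$. Along every policy sequence the total deviation is at most $2\varepsilon n_0=\tilde O(1)$.

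This proves the proposition for all $\omega$ above this polylogarithmic threshold. Below it the target is $\tilde O(\sqrt K)$ and needs a separate argument. The paper performs the same embedding as you: one step, a fixed reference rollout, and identical sources so that pooling reduces variance but not bias. It then simply asserts the baseline's $\tilde\Omega(\min\{\omega\sqrt K,K\})$ regret by citing the corruption-robust bandit literature. Your lock-out argument would supply that step explicitly and gives a stronger conclusion.
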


Proposition~\ref{prop:ignore-omega} highlights the core failure mode. Without imperfection-adaptive weighting, a cumulative deviation budget $\omega$ can amplify into an $\omega\sqrt{K}$-type degradation, rather than contributing additively. Intuitively, unweighted regression absorbs the (possibly history-dependent) bias into the fitted comparison predictor, while standard OFUL-style confidence sets, which account only for stochastic noise, need not contain the true comparison function $q_{R^*}$. Optimism can then exploit the biased direction for many episodes, producing regret that scales as $\omega\sqrt{K}$ (capped by $K$). Notably, having multiple sources helps variance (the $\sqrt{K/M}$ term) but does not, by itself, prevent worst-case \emph{bias} accumulation, which is why imperfection-adaptive weighting is essential for the best-of-both-regimes guarantee.

In regimes such as $\omega = \tilde{\Theta}(K^{1/4})$, this bound in (\ref{eqprop:ignore-omega}) is strictly worse than both the fundamental rate $\tilde{\Omega}(\max\{\sqrt{K/M},\omega\})$ in the lower bound (\ref{eqthm:lowerboundomega}) and our algorithm's regret upper bound $\tilde{O}(\sqrt{K/M}+\omega)$. See Appendix~\ref{appendix:proofofthmregretlowerbndexistingrlhf} for the detailed construction and proof.

% ==========================================================
\subsection{Regret Upper Bounds and Best-of-Both-Regimes Guarantee}\label{subsec:upperbound}

Now we present regret guarantees for Algorithm~\ref{alg:mainnewalgorithm} under multi-source imperfect preferences with cumulative budget $\omega$. To make the best-of-both-regimes phenomenon most transparent, we first state a known-transition special case, where the regret cleanly decomposes into a statistical term that improves as $1/\sqrt{M}$ and an unavoidable additive imperfection term proportional to $\omega$. We then return to the main setting with unknown transitions, where additional transition-learning terms appear. Finally, we state the extension to general function approximation in terms of general covering numbers and eluder dimensions.

We define the comparison class on $\gX_T\triangleq \Gamma\times\Gamma$ by $\gF_T \triangleq \{ f_R : (\tau,\tilde\tau)\mapsto q_R(\tau,\tilde\tau), R\in\gR \} \subseteq [0,1]^{\gX_T}$. Since the analysis and the bonuses involve differences between comparison predictors, we also use the associated difference class $\Delta\gF_T \triangleq \{ f-f' : f,f'\in\gF_T \}\subseteq [-1,1]^{\gX_T}$, and take the complexity measures with respect to $\Delta\gF_T$. For each $h\in[H]$, let $\gV_{h+1}\subseteq [0,H]^{\gS}$ be the class of bounded value functions used in value-targeted regression at step $h$. We define the stage-indexed domain $\gX_P \triangleq [H]\times\gS\times\gA$ and the value-targeted prediction class $\gF_P \triangleq \{ f_{P,V}:(h,s,a)\rightarrow \langle P_h(\cdot\mid s,a), V\rangle: P\in\gP, V\in\gV_{h+1} \} \subseteq [0,H]^{\gX_P}$, together with its difference class $\Delta\gF_P \triangleq \{f-f' : f,f'\in\gF_P\}\subseteq [-H,H]^{\gX_P}$.

% ==========================================================
\subsubsection{Upper Bounds Under Linear Function Approximation}\label{subsec:known-transition}

In this subsection, we first assume a linear mixture transition model~\citep{ayoub2020model}. Specifically, for each step \(h\in[H]\), there exists a known feature map \(\phi_P:\gS\times\gA\times\gS\to\mathbb R^{\tilde d_P}\) and a parameter \(\theta_{P,h}^*\in\Theta_P\) such that $P_h^*(s'\mid s,a) = \langle \phi_P(s,a,s'),\theta_{P,h}^* \rangle$, with $\|\sum_{s'} \phi_P(s,a,s') V(s') \|_2 \le 1$, $\forall (s,a)\in\gS\times\gA, V\in\gV$, and \(\|\theta_{P,h}^*\|_2\le B_P\) for all \(h\in[H]\). Moreover, there exists a feature map \(\phi_R:\Gamma\to\mathbb R^{\tilde d_T}\) and a parameter \(\theta_R^*\in\mathbb R^{\tilde d_T}\) such that $R^*(\tau) = \langle \phi_R(\tau),\theta_R^*\rangle$, with \(\|\phi_R(\tau)\|_2\le 1\) and \(\|\theta_R^*\|_2\le B_R\). The induced preference model is therefore $q_{R^*}(\tau,\tilde\tau) = \smsigma (\langle \phi_R(\tau)-\phi_R(\tilde\tau), \theta_R^*\big\rangle )$. See Appendix~\ref{app:linear-specialization} for the proof.

When the transition kernel \(P^*\) is known, Step~2 and the transition bonus \(\bonp_k\) are unnecessary. The learning problem reduces to comparison learning (Step~1) and optimistic policy selection (Step~3), so the \((K,M,\omega)\) tradeoff appears in its cleanest form.

\begin{theorem}[Known transition kernel]\label{thm:known-transition}
Fix \(\delta\in(0,1)\). Suppose the transition kernel \(P^*\) is known, and Algorithm~\ref{alg:mainnewalgorithm} is run without transition estimation and bonuses, i.e., \(\hat p_k\equiv P^*\) and \(\bonp_k\equiv 0\). Under multi-source imperfect preferences, with probability at least \(1-2\delta\),
\begin{align}
\regret^{\mainalgoabrv}(K) \le \tilde O (  \tilde d_T \sqrt{K / M} + \tilde d_T \omega ). \label{eq:known-transition-linear}
\end{align}
\end{theorem}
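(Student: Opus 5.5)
The proof follows the standard optimism template, adapted to the corruption-robust weighted regression of \citet{he2022nearly}. With \(P^*\) known, the policy-level UCB reduces to \(\UCB_k(\pi)=\E_{\tau\sim(P^*,\pi),\tau_0\sim(P^*,\pi_0)}[q_{\hat R_k}(\tau,\tau_0)+\bonr_k(\tau,\tau_0)]\). The argument has three steps, carried out in order:
\begin{enumerate}
\item[(i)] prove a high-probability confidence event \(q_{R^*}\in\gQ_k\) for all \(k\), with a radius \(\betar_k\) that contains a stochastic part of order \(\tilde d_T/M\) and an imperfection part of order \(\omega\) (after weighting);
\item[(ii)] use optimism together with the definition of the bonus to reduce regret to a sum of bonuses along the executed trajectories;
\item[(iii)] bound that sum by a weighted elliptical-potential argument, splitting episodes by whether the weight is truncated.
\end{enumerate}
The two failure probabilities (label noise and the sub-importance sampling of Step~4, or the martingale conversion) give the \(1-2\delta\).

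For step (i), linearize. In the linear model, \(q_R(\tau,\tau_0)=\smsigma(\langle x,\theta\rangle)\) with \(x=\phi_R(\tau)-\phi_R(\tau_0)\). Write the averaged label as \(\bar f_t=\frac1M\sum_m f_t^m=p_t^*+\bar\Delta_t+\eta_t\), where \(\eta_t\) is conditionally \(\frac{1}{2\sqrt M}\)-sub-Gaussian and \(\bar\Delta_t=\frac1M\sum_m\Delta_t^m\) satisfies \(\sum_t|\bar\Delta_t|\le\omega\). The basic inequality for the weighted least squares in \eqref{eq:mainalgorithmestimatesteinerrewardcenter} bounds \(\|q_{\hat R_k}-q_{R^*}\|^2_{\hat\gD_k,w}\) by two cross terms.
\begin{itemize}
\item The noise cross term is handled by a self-normalized martingale bound for weighted regression (Freedman plus a covering of \(\Delta\gF_T\), with log-covering number \(\tilde O(\tilde d_T)\)). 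Because the label is an average of \(M\) labels, this term contributes \(\tilde O(\tilde d_T/M)\). I would work in the per-source sum form, which yields \(\tilde O(\tilde d_T)\) for the \(M\)-scaled objective, and then normalize.
\item The imperfection term \(\sum_t w_t|\bar\Delta_t|\,|q-q^*|(x_t)\) is where the weighting enters. The weight is chosen as \(w_t=\min\{1,\alpha/\|x_t\|_{\Sigma_t^{-1}}\}\) (truncation of the inverse bonus). Then \(w_t|q-q^*|(x_t)\le w_t L_\sigma\|x_t\|_{\Sigma_t^{-1}}\|\theta-\theta^*\|_{\Sigma_t}\le \alpha L_\sigma\sqrt{\betar}\). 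Summing over \(t\) gives at most \(\alpha\omega\sqrt{\betar}\).
\end{itemize}
Solving the resulting quadratic in \(\sqrt{\betar}\) yields \(\sqrt{\betar_k}=\tilde O(\sqrt{\tilde d_T/M}+\alpha\omega)\). The Step~4 filtering changes weighted norms only by a constant factor, via the \(\epsilon\)-approximation guarantee, so the radius changes only by a constant.

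For step (ii), on the event \(q_{R^*}\in\gQ_k\), the bonus dominates \(q_{R^*}-q_{\hat R_k}\) pointwise, so \(\UCB_k(\pi^*)\ge L^*\). Since \(\pi_k\) maximizes \(\UCB_k\), we get \(L^*-L^{\pi_k}\le \E[2\bonr_k(\tau,\tau_0)]\) under \((P^*,\pi_k)\). A martingale (Azuma) step replaces the expectation by the realized pair \((\tau_k,\tau_{k,0})\), at cost \(\tilde O(\sqrt K)\). One subtlety is that this cost must be \(O(\sqrt{K/M})\) rather than \(O(\sqrt K)\). I would handle it by noting that the bonus is at most about \(\sqrt{\betar_k}\,\|x\|_{\Sigma^{-1}}\), so its conditional variance scales with \(\betar\). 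Freedman's inequality then gives a lower-order term.

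For step (iii), the realized bonus obeys \(\bonr_k\lesssim\sqrt{\betar_k}\min\{1,\|x_k\|_{\Sigma_k^{-1}}\}\). Split episodes into unweighted ones (\(w_k=1\)) and truncated ones.
\begin{itemize}
\item On unweighted episodes, Cauchy--Schwarz with the elliptical potential lemma gives \(\sqrt{\betar}\sqrt{\tilde d_T K}\).
\item On truncated episodes, \(\|x_k\|=w_k\|x_k\|^2/\alpha\) with \(w_k\|x_k\|^2_{\Sigma^{-1}}\) summable to \(\tilde O(\tilde d_T)\), so these contribute \(\sqrt{\betar}\,\tilde d_T/\alpha\).
\end{itemize}
Substituting \(\sqrt{\betar}=\sqrt{\tilde d_T/M}+\alpha\omega\) and choosing \(\alpha=\sqrt{\tilde d_T/M}\,/\,\omega\) (or \(\alpha=1\) when \(\omega\) is small) gives \(\tilde O(\tilde d_T\sqrt{K/M}+\tilde d_T\omega)\).

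I expect the main obstacle to be step (i): making the imperfection cross term contribute only additively when the weights are history-measurable and the data are filtered. The difficulty is that \(\|\theta-\theta^*\|_{\Sigma_t}\) must be controlled uniformly across episodes, which requires an induction over \(k\) on the confidence event. The nonlinear link must also be handled, which I would do through \(L_\sigma\) together with a lower bound on \(\smsigma'\) absorbed into \(\tilde O\).
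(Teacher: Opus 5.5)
Your outline is sound and reaches the stated rate, but it takes a more direct route than the paper. The paper obtains Theorem~\ref{thm:known-transition} as a corollary of the general-class bound, Theorem~\ref{thm:upperbound-general}. That bound rests on a weighted ERM inequality in comparison space, with the imperfection cross-term folded into $\betar_k$ as $5\omega(2(\weightrrk)^{-1}+\gamma_R)$, together with sub-importance-sampling preservation lemmas and eluder-dimension bonus sums. The paper then drops the transition terms and substitutes $d_T,\log N_T=\tilde O(\tilde d_T)$.

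You instead carry out the CW-OFUL analysis of \citet{he2022nearly} directly on the GLM comparison model. In the linear case the weights of Appendix~\ref{subsec:detailedsteps} reduce, up to link-derivative factors, to $\min\{1,\alpha/\|x_t\|_{\Sigma_t^{-1}}\}$, with the scaling parameter $\lambda$ of (\ref{eqdf:rewarddistance-new}) playing the role of $\alpha^2$. The radius is $\sqrt{\betar}\approx\sqrt{\tilde d_T/M}+\alpha\omega$. Splitting the weighted elliptical potential into unweighted and truncated episodes gives $\sqrt{\betar}(\sqrt{\tilde d_T K}+\tilde d_T/\alpha)$, which you tune with $\alpha=\min\{1,\sqrt{\tilde d_T/M}/\omega\}$.

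The paper's route buys generality: arbitrary classes and unknown transitions in one proof. Yours buys an explicit mechanism for the additive $\omega$. The imperfection is held at the statistical scale inside the radius and reappears only through the truncated-episode term $\sqrt{\betar}\,\tilde d_T/\alpha$. The paper asserts this step in Lemma~\ref{lem:bonus-sum-R-app} without deriving it. A standard $\sqrt{\betar_k d_T K}$ summation with its radius $\betar_k=\tilde O(d_T\log N_T/M+\omega d_T)$ would produce a $\tilde d_T\sqrt{\omega K}$ term, not $\tilde d_T\omega$.

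Your route needs a lower bound on $\smsigma'$ over the relevant range, as you note. The GLM eluder estimate behind the paper's $d_T=\tilde O(\tilde d_T)$ implicitly needs the same.

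Several steps of your sketch need tightening, though none is fatal.
\begin{itemize}
\item For $\sqrt{\betar}$ to scale as $\sqrt{\tilde d_T/M}$, the regularizer $\alphar$ inside $\Sigma_t$ must be $O(\tilde d_T/(MB_R^2))$. Otherwise $\sqrt{\alphar}\,B_R$ dominates the parameter-space radius. This costs only logarithms in the potential lemma.
\item Once $\betar<1$, the bound $\bonr_k\lesssim\sqrt{\betar}\min\{1,\|x_k\|_{\Sigma_k^{-1}}\}$ is false when $\|x_k\|_{\Sigma_k^{-1}}>1$. The correct form is $\min\{1,\sqrt{\betar}\|x_k\|_{\Sigma_k^{-1}}\}$. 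Likewise, $\sum_k w_k\|x_k\|^2_{\Sigma_k^{-1}}$ is summable only after capping at $1$.
\item The $\tilde O(\tilde d_T)$ episodes where that cap binds must each be charged $1$, adding an additive $\tilde O(\tilde d_T)$ burn-in term. It is lower order when $M\lesssim K$. Some such term is unavoidable: a one-hot instance with $\omega=0$ and $M\to\infty$ still forces $\Omega(\min\{\tilde d_T,K\})$ regret, so the stated bound should be read with it.
\item The Freedman step works simply because $0\le\bonr_k\le 1$ makes the conditional variance at most the conditional mean. This gives $\sum_k\E_k[\bonr_k]\le 2\sum_k\bonr_k(\tau_k,\tau_{k,0})+O(\log(1/\delta))$, so no variance-scales-with-$\betar$ argument is needed.
\item No induction over $k$ is required. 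Since $\Sigma_t\preceq\Sigma_k$ up to filtering constants, the quadratic in $\|\hat\theta_k-\theta^*\|_{\Sigma_k}$ is self-bounding at each fixed $k$.
\item Pointwise optimism requires reading $\bonr_k$ as the unscaled width. The factor $\weightrrk\le\frac12$ in (\ref{eq:mainalgorithmbonusrewardfunction}) would break it; the paper's Step~D makes the same identification.
\end{itemize}
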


Theorem~\ref{thm:known-transition} exhibits the best-of-both-regimes tradeoff in its cleanest form. In the low-imperfection regime, when $\omega \lesssim \sqrt{K/M}$ up to \((\tilde d_T,\log)\) factors, the regret is dominated by the statistical term \(\tilde O(\tilde d_T \sqrt{K/M})\), which improves with the number of sources \(M\) through variance reduction. In the high-imperfection regime, when $\omega \gtrsim \sqrt{K/M}$, the regret is dominated by the imperfection term \(\tilde O(\tilde d_T\omega)\), where additional sources cannot help in the worst case because cumulative mismatch becomes the bottleneck. Thus, the crossover occurs around $\omega=\tilde\Theta(\sqrt{K/M})$, up to \((\tilde d_T,\log)\) factors.

We now return to the setting where the transition kernel \(P^*\) is unknown. Relative to the known-transition case
in Theorem~\ref{thm:known-transition}, additional regret is incurred because the learner must also estimate the
environment dynamics for planning.

\begin{theorem}[Unknown transition kernel]\label{thm:mainregrettheorem}
Fix \(\delta\in(0,1)\). Suppose the transition kernel \(P^*\) is unknown, and Algorithm~\ref{alg:mainnewalgorithm} is run with Step~2 transition estimation and transition bonuses \(\bonp_k\). Under multi-source imperfect preference feedback, with probability at least \(1-2\delta\),
\begin{align}
\regret^{\mainalgoabrv}(K) \le \tilde O ( \tilde d_T \sqrt{K/M} + \omega(\tilde d_T + H\tilde d_P) + \sqrt{H^2 \tilde d_P^3 K} ). \label{eq:main-upper-linear-unknownP}
\end{align}
\end{theorem}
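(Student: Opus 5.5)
The plan is to run the standard optimism argument on the high-probability event $\mathcal{E}=\{q_{R^*}\in\gQ_k\ \text{and}\ P^*\in\gP_k\ \text{for all } k\}$. On $\mathcal{E}$ we show that $\UCB_k(\pi)\ge L^\pi$ for every $\pi$, up to an additive error. We then bound the per-episode gap $\UCB_k(\pi_k)-L^{\pi_k}$ by sums of the bonuses evaluated on the executed and reference rollouts, and finally sum those bonuses over $k$ with weighted elliptical-potential arguments.

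\textbf{Step 1: confidence sets.} For the comparison class we follow the corruption-robust weighted regression analysis of \citep{he2022nearly}, applied to the filtered objective (\ref{eq:mainalgorithmestimatesteinerrewardcenter}). Write each label as $f_t^m=p_t^*+\Delta_t^m+\eta_t^m$, where $\eta_t^m$ is martingale noise. Averaging the $M$ labels reduces the noise variance to $O(1/M)$, so a self-normalized bound for the $M$-fold average gives a stochastic term of order $\tilde d_T/M$ in $\betar_k$. The cross term $\sum_t w_t\Delta_t(q-q^*)$ is handled by the truncation $w_t\le \alpha/\|\phi_t\|_{\Sigma_t^{-1}}$. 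This truncation makes each summand at most $\alpha|\Delta_t|\cdot\|\theta-\theta^*\|_{\Sigma}$, so the cross term contributes $\alpha\omega$ and is absorbed by choosing $\betar_k\asymp \tilde d_T/M+\alpha^2\omega^2$ with $\alpha\asymp \sqrt{\tilde d_T}/\omega$. The sub-importance sampling of Step~4 guarantees that the filtered norms are within a factor $(1\pm\epsilon)$ of the full-history norms uniformly over $\Delta\gF_T$. Hence these confidence sets transfer to the filtered data at the cost of constants, with the number of distinct samples controlled by $\tilde d_T\log K$.

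For the transitions, value-targeted regression with the linear mixture features gives $P^*\in\gP_k$ with $\betap_k=\tilde O(\tilde d_P\cdot H^2)$, since the targets lie in $[0,H]$. The hidden pathway from imperfect feedback to policies to data only affects which $(s,a)$ are visited. Because the probes $V_{t,h+1}$ are history-measurable before $s'$ is revealed, the noise remains a martingale difference. The weights $w^{\rm P}$ cost an additive $H\tilde d_P\,\omega$-type term when we lower-bound $\sum_k w_k$ against $K$ (see Step~3).

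\textbf{Step 2: optimism and per-episode decomposition.} On $\mathcal{E}$ we apply a simulation lemma to the product measure $(\hatp_k,\pi)\otimes(\hatp_k,\pi_0)$. Since $q\in[0,1]$, the difference $|\E_{\hatp_k}[q]-\E_{P^*}[q]|$ is at most $\sum_h \E|\langle \hatp_{k,h}-P_h^*,V_{h+1}\rangle|$ along each of the two rollouts, which is dominated by $\bonp_k(\tau)+\bonp_k(\tau_0)$. Likewise $|q_{R^*}-q_{\hat R_k}|$ is dominated by $\bonr_k$ wherever $w^{\rm R}_{2,k}=1$. This yields $L^*\le \UCB_k(\pi^*)\le\UCB_k(\pi_k)$. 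We also get
\[
\UCB_k(\pi_k)-L^{\pi_k}\le 2\,\E_{\tau\sim(P^*,\pi_k),\tau_0\sim(P^*,\pi_0)}\!\big[\bonr_k+2\bonp_k(\tau)+2\bonp_k(\tau_0)\big]+\text{(change-of-measure error)},
\]
where the change-of-measure error is itself bounded by the transition bonuses. Replacing expectations by the realized $(\tau_k,\tau_{k,0})$ costs an Azuma term of order $\tilde O(H\sqrt{K})$.

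\textbf{Step 3: summing bonuses, the main obstacle.} The comparison part is $\sum_k \min\{1,\sqrt{\betar_k}\,\|\phi_k\|_{\Sigma_k^{-1}}\}$. We split episodes into those with $w_k=1$ and those that are truncated. For the untruncated episodes, Cauchy--Schwarz with the elliptical potential gives $\sqrt{\tilde d_T/M}\sqrt{\tilde d_T K}=\tilde d_T\sqrt{K/M}$. For the truncated episodes, $\|\phi_k\|_{\Sigma_k^{-1}}>\alpha$, so $\sum_k \|\phi_k\|^2_{\Sigma_k^{-1}}/\alpha\lesssim \tilde d_T/\alpha=\tilde d_T\omega/\sqrt{\tilde d_T}$; multiplying by $\sqrt{\betar}\lesssim\sqrt{\tilde d_T}$ gives $\tilde d_T\omega$.

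The transition bonuses need the same case split. The untruncated part gives $\sum_{k,h}\sqrt{\betap}\,\|\cdot\|_{\Sigma^{-1}}\lesssim \sqrt{H^2\tilde d_P}\sqrt{\tilde d_P\cdot \tilde d_P K}$. The extra $\tilde d_P$ factor comes from the covering number of the probe class $\gV$ entering $\betap$, which produces $\sqrt{H^2\tilde d_P^3K}$. The truncated part gives $H\tilde d_P\,\omega$.

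The delicate point is the transition weight schedule. Its threshold is tied to $\omega$ even though the transition data is uncorrupted, and this is exactly what prevents a multiplicative $\omega\sqrt K$ from entering through policy-induced shift. We must verify that the filtered multisets preserve the potential lemma, i.e. that $\epsilon$-approximation of every norm implies the log-determinant argument goes through with constants. We expect this to hold by the uniform sensitivity guarantee of Step~4 together with a union bound over $k$, each holding with probability $1-\delta$, giving the stated $1-2\delta$. Adding the three contributions yields (\ref{eq:main-upper-linear-unknownP}).
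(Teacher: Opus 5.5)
\paragraph{Main gap: the truncation threshold.} Your choice of $\alpha$ removes exactly the $1/\sqrt{M}$ gain the theorem is about.

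\begin{itemize}
\item With $\alpha\asymp\sqrt{\tilde d_T}/\omega$ you get $\alpha^2\omega^2\asymp\tilde d_T$, hence $\betar_k\asymp\tilde d_T/M+\tilde d_T\asymp\tilde d_T$.
\item Step~3 then uses two incompatible radii: $\sqrt{\betar_k}\approx\sqrt{\tilde d_T/M}$ on untruncated rounds and $\sqrt{\betar_k}\approx\sqrt{\tilde d_T}$ on truncated ones.
\item With your $\alpha$, the untruncated rounds actually contribute $\sqrt{\tilde d_T}\cdot\sqrt{\tilde d_T K}=\tilde d_T\sqrt{K}$, so $\tilde d_T\sqrt{K/M}$ does not follow.
\end{itemize}

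The fix is to match the corruption contribution $\alpha\omega$ to the $M$-reduced noise level. Take $\alpha\asymp\sqrt{\tilde d_T/M}/\omega$, so that $\betar_k=\tilde O(\tilde d_T/M)$ and the untruncated rounds give $\tilde O(\tilde d_T\sqrt{K/M})$. On truncated rounds $w_k\|\phi_k\|_{\Sigma_k^{-1}}=\alpha$, so $\|\phi_k\|_{\Sigma_k^{-1}}=w_k\|\phi_k\|_{\Sigma_k^{-1}}^2/\alpha$. The weighted potential $\sum_k\min\{1,w_k\|\phi_k\|^2_{\Sigma_k^{-1}}\}=\tilde O(\tilde d_T)$ then bounds their total by $\max\{1,\sqrt{\betar_k}/\alpha\}\cdot\tilde O(\tilde d_T)=\tilde O(\tilde d_T(1+\omega))$. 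This is the tuning of \citet{he2022nearly} with the noise scale $1/(2\sqrt M)$ inserted.

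\paragraph{Two further steps that fail for the algorithm as specified.}

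\begin{itemize}
\item \emph{Optimism.} You get optimism only on rounds with $\weightrrk=1$. By Appendix~\ref{app:weights-reward}, however, $\weightrrk=\tfrac12[\cdot]_{\ge 1}^{-1/2}\le\tfrac12$ always. So the scaled bonus (\ref{eq:mainalgorithmbonusrewardfunction}) never dominates $q_{R^*}-q_{\hat R_k}$ on its own, and $L^*\le\UCB_k(\pi^*)$ is unproved. Your Step~3 in fact sums the unscaled width $\sqrt{\betar_k}\|\phi_k\|_{\Sigma_k^{-1}}$, so run the whole argument with that width as the comparison bonus.
\item \emph{Reference rollouts.} You apply the elliptical potential to $\sum_k\bonp_k(\tau_{k,0})$. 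But Algorithm~\ref{alg:mainnewalgorithm} appends only $(s_{k,h},a_{k,h},s_{k,h+1})$ from $\tau_k$ to $\hat\gT_{k+1,h}^{\rm raw}$. Reference transitions never enter the Gram matrices, so the width along $\pi_0$'s trajectories need not shrink. Regress on the observed reference transitions as well; they are equally unbiased and the rates are unchanged.
\end{itemize}

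The paper's Steps~D--E assert both of these points without argument, so either route must close them.

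\paragraph{Comparison with the paper's route.} Once repaired, your direct CW-OFUL/elliptical-potential analysis is a genuinely different route. The paper proves the general eluder/covering bound of Theorem~\ref{thm:upperbound-general}. It then obtains (\ref{eq:main-upper-linear-unknownP}) purely by substituting $d_T,\log N_T=\tilde O(\tilde d_T)$, $d_P=\tilde O(H\tilde d_P)$ and $\log N_P=\tilde O(\tilde d_P^2)$. Along the way it folds an $\omega(\weightrrk)^{-1}$ term into $\betar_k$ and recovers additivity in $\omega$ only when summing bonuses. In your route the radius stays $\tilde O(\tilde d_T/M)$, and $\omega$ enters only through $\sqrt{\betar_k}/\alpha$ on truncated rounds, which is more transparent.

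Your route has three costs or caveats:
\begin{itemize}
\item Turning the $q$-space squared-loss set into a $\Sigma$-ellipsoid in $\theta$ needs $\sigma'$ bounded below on the relevant range. The paper's substitution for $d_T$ hides the same requirement.
\item Your transition arithmetic drops a factor of $H$. There are $HK$ summands, so Cauchy--Schwarz with per-step potentials gives $\sqrt{\betap}\,H\sqrt{\tilde d_P K}$. That fits $\sqrt{H^2\tilde d_P^3K}$ only if $\betap=\tilde O(\tilde d_P^2)$. This holds with $[0,1]$-valued probes after clipping (legitimate since $q\in[0,1]$), but not in general with $[0,H]$-valued targets.
\item The transition-side $\omega H\tilde d_P$ term does not guard against any feedback-induced bias. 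Value-targeted targets are conditionally unbiased however $\pi_k$ was chosen. That term only reflects the transition truncation threshold, which must be at least of order $\sqrt{\betap}/\omega$.
\end{itemize}
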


Relative to Theorem~\ref{thm:known-transition}, the additional term \(\tilde O(\sqrt{H^2\tilde d_P^3K})\) is the price of learning unknown transitions for planning under the imperfect preference feedback. Importantly, this transition-learning term does not improve with the number of feedback sources \(M\) in our model, because the \(M\) sources provide multiple preference labels only for the same trajectory comparison pair \((\tau_k,\tau_{k,0})\) generated by a single environment interaction in episode \(k\). The transition samples \(\{(s_{k,h},a_{k,h},s_{k,h+1})\}_{h=1}^H\) are still observed only once per visited state-action pair per episode, independently of how many preference sources are queried.

The remaining terms retain the same best-of-both-regimes behavior in $(K,M,\omega)$. Multi-source averaging yields the $1/\sqrt{M}$ improvement when imperfection $\omega$ is low, while the cumulative imperfection budget contributes only additively when imperfection is high.

% =========================================================
\subsubsection{Upper Bounds Under General Function Approximation}\label{subsec:upperbound-general}

We next state the extension to general function approximation, where statistical terms are characterized by covering numbers and eluder dimensions adapted to preference feedback.

\begin{definition}[$\epsilon$-covering number]\label{definition:definitioncoveringnumber}
Let $(\gF,\|\cdot\|)$ be a metric space. A set $\{f_1,\ldots,f_N\}\subset\gF$ is an $\epsilon$-cover if for every $f\in\gF$, there exists $i\in[N]$ such that $\|f-f_i\|\le \epsilon$. The $\epsilon$-covering number $\gN(\gF,\|\cdot\|,\epsilon)$ is the minimal size of an $\epsilon$-cover.
\end{definition}

\begin{definition}[Eluder dimension]\label{definition:defineeluderdimension}
Let $\gF$ be a class of real-valued functions on $\gX$. For a sequence $\gX_N = \{x_1,\ldots,x_N\}\subseteq \gX$, a point $x\in\gX$ is $\epsilon$-dependent on $\gX_N$ with respect to $\gF$ if for all $f_1,f_2\in\gF$ satisfying $\sqrt{\sum_{n=1}^N (f_1(x_n)-f_2(x_n))^2}\le \epsilon$, we have $|f_1(x)-f_2(x)|\le \epsilon$; otherwise, $x$ is $\epsilon$-independent. The eluder dimension $\dim_E(\gF,\epsilon)$ is the largest integer $N$ for which there exist $\gX_N$ and $\epsilon' \ge \epsilon$, such that each $x_n$ is $\epsilon'$-independent of $\{x_1,\ldots,x_{n-1}\}$.
\end{definition}

\begin{theorem}[Unknown transitions with general function approximation]\label{thm:upperbound-general}
Fix $\delta\in(0,1)$. Suppose transition kernel $P^*$ is unknown, and Algorithm~\ref{alg:mainnewalgorithm} is run with parameters in (\ref{eq:beta-R-choice-exact}) and (\ref{eq:beta-P-choice-exact}). Under multi-source imperfect preferences, with probability at least $1-2\delta$,
\begin{align}
\regret^{\mainalgoabrv}(K) \le \tilde O \left( \sqrt{ (d_T K \log \frac{N_T}{\delta}) / M} + \omega(d_T+d_P) + \sqrt{d_P H K \log \frac{N_P}{\delta}} + HK\epsilon \right), \nonumber
\end{align}
where $\log (N_T/\delta) = \log (\gN(\Delta\gF_T,\|\cdot\|_\infty,1/K)/\delta)$, $\log (N_P/\delta) = \log (\gN(\Delta\gF_P,\|\cdot\|_\infty,1/K)/\delta)$, and $d_T = \dim_E(\Delta\gF_T,1/K)$, $d_P = \dim_E(\Delta\gF_P,1/K)$.
\end{theorem}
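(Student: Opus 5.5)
The plan is the standard optimism decomposition, with the regret split into a comparison-learning part, a transition-learning part, and a sampling-error part, each bounded through weighted eluder-dimension arguments. First, I will establish the good events. I will show that $q_{R^*}\in\gQ_k$ and $P^*\in\gP_k$ for all $k$ with probability at least $1-\delta$ each. For the comparison side, I will expand the weighted least-squares objective (\ref{eq:mainalgorithmestimatesteinerrewardcenter}) around $q_{R^*}$. The label noise $f^m_t-p^m_t$ is then split into a martingale part, which averages over $M$ sources and has conditional variance $O(1/M)$ per episode after dividing by $M$. The remaining bias part is $\Delta^m_t$. I will control the martingale part with a Freedman or self-normalized inequality plus a union bound over a $1/K$-cover of $\Delta\gF_T$, giving $\log(N_T/\delta)/M$. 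The bias cross-term is $\sum_t w_{1,t}\,\Delta_t (q-q^*)(z_t)$. Using the truncated-inverse-bonus choice of $w_{1,t}$, so that $w_{1,t}|q-q'|(z_t)\lesssim \alpha^{\rm R}$ on the confidence set, this term is at most $\alpha^{\rm R}\omega$. This yields $\betar_k\approx \log(N_T/\delta)/M+\alpha^{\rm R}\omega$, which is (\ref{eq:beta-R-choice-exact}). The same argument with value-targeted targets $V_{t,h+1}(s')$, whose noise is bounded by $H$ and which carry no bias term beyond the hidden policy shift, gives $\betap_k$. For Step~4, I will invoke the sub-importance-sampling guarantee. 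It says that on a further event, the filtered norms $\|\cdot\|_{\hat\gZ,w}$ are within a factor $(1\pm\epsilon)$ of the full-history norms for all pairs in $\Fcal$. The confidence sets built on filtered data therefore sandwich the full-data ones, at the cost of an additive $O(HK\epsilon)$ in regret.

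Second, I will carry out the optimism and regret decomposition. On the good events, $\UCB_k(\pi)\ge L^\pi$ for all $\pi$. To see this, I will write $L^\pi-\widehat L_k(\pi)$ as a comparison error, bounded by $\bonr_k$ once I handle the weight $w_{2,k}$ by a case split in which unweighted episodes are charged directly, plus a simulation-lemma term. The latter is bounded by $\bonp_k(\tau)+\bonp_k(\tau_0)$, since $q_{\hat R_k}$ is a trajectory functional bounded in $[0,1]$ and is decomposed stepwise along value functions in $\gV_{h+1}$. Hence $L^*-L^{\pi_k}\le \UCB_k(\pi_k)-L^{\pi_k}\le 2\E[\bonr_k+\bonp_k(\tau)+\bonp_k(\tau_0)]$, with the expectation taken under $P^*$ up to another simulation error of the same form. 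I will replace the expectations by realized values $(\tau_k,\tau_{k,0})$ via Azuma, which costs $\sqrt{K\log(1/\delta)}$ and is absorbed into the stated terms.

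Third, I will sum the bonuses. This is the main obstacle, because it requires a weighted eluder lemma. For the comparison bonus, I will split episodes into those with $w_{1,k}=1$ and those with truncated weight $w_{1,k}<1$. On the first set, the standard eluder potential argument gives $\sum_k\min(1,\bonr_k)\lesssim \sqrt{d_T K\betar}$. Substituting $\betar$ gives $\sqrt{d_TK\log(N_T/\delta)/M}$, plus $\sqrt{d_T K\alpha^{\rm R}\omega}$. On the truncated set, $\bonr_k\approx(\text{width})^2/\alpha^{\rm R}$, and the sum of squared widths is at most $d_T\betar$. That contributes $d_T\betar/\alpha^{\rm R}\approx d_T\log(N_T/\delta)/(M\alpha^{\rm R})+d_T\omega$. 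Choosing $\alpha^{\rm R}\asymp \sqrt{\log(N_T/\delta)/(MK)}$ balances these terms, so the cross term reduces to $\sqrt{d_T}\,\omega$-type terms dominated by $d_T\omega+\sqrt{d_TK\log(N_T/\delta)/M}$. The transitional sum is handled identically over the stage-indexed domain $\gX_P$, giving $\sqrt{d_PHK\log(N_P/\delta)}+d_P\omega$. Here the $\omega$ enters through $\betap$, since the policy-induced data are shaped by imperfect feedback, and the analysis must ensure the filtered weighted design keeps that coupling additive. Collecting all terms and adding the $HK\epsilon$ sampling error proves the theorem. The delicate points I expect are verifying the truncation invariant $w|{\rm width}|\le\alpha$ under filtering, and making the union bound over covers compatible with history-dependent weights.
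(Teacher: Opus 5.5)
Your architecture matches the paper's: averaging the $M$ labels to get $1/M$-variance noise, a weighted ERM bound with a $1/K$-cover union bound, unbiased value-targeted transition regression, quadratic-form preservation under sub-importance sampling at cost $O(HK\epsilon)$, policy-level optimism, and weighted eluder summation. You differ in how the additive $\omega$ is produced. The paper controls the bias cross-term by a self-normalized pointwise bound, $|q_{\hat R_k}-\bar q_k|(z_t)/(\sigma^{\text{R}}_t)^2\le 2(w^{\text{R}}_{2,k})^{-1}+\gamma_R$, where $(w^{\text{R}}_{2,k})^{-1}$ is built from the prefix normalizer. It then puts $\omega d_T$ into $\betar_k$ and invokes a robust-weight argument in the comparison-bonus lemma. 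You instead use a fixed truncation level and a weight-one/truncated split in the style of \citet{he2022nearly}, which would be a more transparent source of $d_T\omega$.

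However, the step this rests on fails as written. The cross-term $\sum_{t<k}w_{1,t}\Delta_t\,(q_{\hat R_k}-q_{R^*})(z_t)$ evaluates the \emph{current} estimate at \emph{past} points. An invariant $w_{1,t}|q-q'|(z_t)\lesssim\alpha$ for $q,q'\in\gQ_t$ (the set used when $w_{1,t}$ was computed) says nothing about this pair. The reason is that $q_{\hat R_k}$ is not known to lie in $\gQ_t$: the sets are not nested, and proving membership requires exactly the bound you are trying to establish. Using absolute widths over the whole class instead is not data-adaptive, so the weights stop tracking uncertainty and the additive dependence on $\omega$ is lost. The missing ingredient is a self-normalized uncertainty that holds uniformly over the class, not just over $\gQ_t$: $D_t=\sup_{q,q'}|q-q'|(z_t)/\sqrt{\lambda+\|q-q'\|^2_{Z_{t-1},w}}$ with $w_{1,t}=\min\{1,\alpha/D_t\}$, in the spirit of the uncertainty weighting of \citet{ye2023corruption}. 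It gives $w_{1,t}|q_{\hat R_k}-q_{R^*}|(z_t)\le\alpha\sqrt{\lambda+\|q_{\hat R_k}-q_{R^*}\|^2_{Z_{k-1},w}}$ for all $t<k$. The cross-term is then $\alpha\omega$ times the unknown error norm, and solving the resulting quadratic inequality yields $\sqrt{\betar_k}\lesssim\sqrt{\log(N_T/\delta)/M}+\sqrt{\lambda}+\alpha\omega$. So the squared radius carries $\alpha^2\omega^2$, not $\alpha\omega$.

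With the corrected radius the balancing also changes. Bonuses scale like $\sqrt{\betar_k}\,D_k$. Weight-one episodes contribute $\sqrt{\betar d_TK}$ and truncated ones contribute $\sqrt{\betar}\,d_T/\alpha$. The right choice is therefore $\alpha\asymp\sqrt{d_T/K}$, independent of $M$ and $\omega$, which gives $\sqrt{d_TK\log(N_T/\delta)/M}+d_T\omega$. Even within your own formulas, $\alpha\asymp\sqrt{\log(N_T/\delta)/(MK)}$ leaves the truncated contribution $d_T\betar/\alpha$ at $d_T\sqrt{K\log(N_T/\delta)/M}$, a factor $\sqrt{d_T}$ too large; you would need $\alpha\asymp\sqrt{d_T\log(N_T/\delta)/(MK)}$.

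Two further corrections. First, $\omega$ cannot enter $\betap$. The targets $V_{t,h+1}(s')$ are conditionally unbiased given the history, however the imperfect feedback steered the policies, and martingale concentration already covers adaptively collected data. The theorem also fixes $\betap_k$ by (\ref{eq:beta-P-choice-exact}), which contains no $\omega$, as the paper states explicitly. Any $\omega d_P$ must therefore come from the bonus summation, where the paper places it, not from the radius. Second, your case split for the bonus weight is empty, because $w^{\text{R}}_{2,k}\le 1/2$ for every $k$ by definition, so there are no unweighted episodes. With this scaling, optimism $q_{R^*}\le q_{\hat R_k}+\bonr_k$ does not follow from $q_{R^*}\in\gQ_k$. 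It must be run with the unscaled width $\sup_{q\in\gQ_k}(q-q_{\hat R_k})$, which is what the paper's Step~D implicitly uses.
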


Theorem~\ref{thm:upperbound-general} preserves the same decomposition as in the linear case, but replaces the linear dimension by the complexity of the comparison and transition classes. Specifically, the first term is the statistical cost of learning the comparison model and improves by a factor \(1/\sqrt{M}\), because intuitively Step~1 can be analyzed using the averaged label from the \(M\) sources. The second term is the price of cumulative imperfection and remains additive, due to the self-normalized weighting adapted to imperfect preference. The third term is the cost of learning the unknown transition kernel and does not benefit from \(M\), because intuitively the \(M\) sources provide multiple preference labels for the same realized trajectories. The final term is the approximation error introduced by the filtered-history/sub-importance-sampling construction in Step~4. Please see Appendix~\ref{appendix:thm:upperbound-general} for the proof.

When \(P^*\) is known, the transition-learning term disappears and the bound reduces to the comparison-learning term plus the additive imperfection term, recovering the same best-of-both-regimes property as in the known-transition case. Likewise, in structured settings such as linear function approximation, the quantities \((d_T,\log N_T,d_P,\log N_P)\) specialize to finite-dimensional complexity parameters, recovering the linear-case theorem.

% ==========================================================
\subsection{Unknown Imperfection Budget}\label{subsec:unknown-omega}

The regret bounds above are instance-wise in the imperfection budget \(\omega\). However, the confidence radii and weight schedules in Steps~1--2 are most conveniently specified using an a priori upper bound on \(\omega\). We therefore consider the following plug-in variant. Run Algorithm~\ref{alg:mainnewalgorithm} exactly as before, but replace every occurrence of \(\omega\) in the confidence radii and weight schedules by a user-chosen upper bound \(\bar\omega\ge 0\).

\begin{theorem}[Unknown \(\omega\) via a plug-in upper bound \(\bar\omega\)]\label{thm:unknown-omega-plugin}
Fix \(\delta\in(0,1)\) and \(\bar\omega\ge 0\). Run Algorithm~\ref{alg:mainnewalgorithm} with \(\bar\omega\) substituted for \(\omega\). Then, with probability at least \(1-2\delta\), we have the same regret upper bound in Theorem~\ref{thm:upperbound-general}, except that $\omega$ is replaced by $\bar \omega$, if $\omega \le \bar\omega$. If \(\omega>\bar\omega\), then the confidence guarantees used in the proof may fail, and $\regret^{\mainalgoabrv}(K)\le K$.
\end{theorem}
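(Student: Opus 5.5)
The plan is a monotonicity argument in the budget parameter, followed by rerunning the proof of Theorem~\ref{thm:upperbound-general} as a black box with $\bar\omega$ in place of $\omega$. The first observation is that the imperfection assumption (\ref{eq:def-uncertainty-budget}) is an upper bound on cumulative deviation. So if $\omega\le\bar\omega$, every admissible feedback process also satisfies $\sum_{k}|\Delta_k^m|\le\bar\omega$ for all $m$. In other words, the instance is an instance with budget $\bar\omega$. The proof of Theorem~\ref{thm:upperbound-general} uses $\omega$ only through this inequality and through the parameter schedules (\ref{eq:beta-R-choice-exact}), (\ref{eq:beta-P-choice-exact}) and the weight thresholds. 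Hence I would invoke that theorem verbatim for the budget-$\bar\omega$ instance: the algorithm is run with exactly the parameters the theorem prescribes for budget $\bar\omega$, and the conclusion holds with $\omega$ replaced by $\bar\omega$.

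To make this rigorous rather than purely formal, I would go through the places where $\omega$ enters and check that each uses only an upper bound.
\begin{itemize}
\item \emph{Comparison confidence set.} In Step~1 the key event is $q_{R^*}\in\gQ_k$. Its proof decomposes the weighted least-squares error into a martingale noise term and a cross term $\sum_t \weightrt \Delta_t^m(\cdot)$. The cross term is bounded by (weights $\le 1$, truncation) times $\sum_t|\Delta_t^m|\le\omega\le\bar\omega$, so a radius $\betar_k$ calibrated with $\bar\omega$ still dominates.
\item \emph{Transition confidence set.} The same argument applies to the event $P^*\in\gP_k$. There the imperfection enters only through the hidden distribution shift, which is controlled by the same cumulative sum.
\item \emph{Regret summation.} The elliptical/eluder-potential sums over the weighted bonuses depend on the weight schedules. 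The truncation-of-inverse-bonus weights give an additive term proportional to the budget parameter used in the schedule, namely $\bar\omega(d_T+d_P)$.
\item \emph{Step~4 filtering.} The sub-importance-sampling guarantee is independent of $\omega$.
\end{itemize}
Union bounding the two confidence events gives probability $1-2\delta$ as before.

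For the case $\omega>\bar\omega$, the claimed statement is trivial. Since $\sigma\in[0,1]$, we have $L^\pi\in[0,1]$ for every $\pi$, so each per-episode regret term satisfies $L^*-L^{\pi_k}\le 1$ and hence $\regret(K)\le K$ deterministically. I would also remark that the confidence events may fail, because the bias cross term can exceed the $\bar\omega$-calibrated radius. No further claim is needed.

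The main obstacle is the bookkeeping in the first case. One must verify that no step of the proof of Theorem~\ref{thm:upperbound-general} uses $\omega$ as an exact quantity, for example through a lower bound or through a weight threshold tuned to equal $\omega$. Nor may any step require $\sum_k|\Delta_k^m|$ to be near $\omega$. I would check the weight-truncation lemma carefully: the threshold $\alpha$ there typically scales like $\sqrt{\beta}/\omega$. With $\bar\omega$, that lemma must still bound the corrupted cross term by $O(\sqrt{\beta})$ using only $\omega\le\bar\omega$. This holds because the bound takes the form $\alpha\cdot\sum|\Delta|\le\alpha\bar\omega$.
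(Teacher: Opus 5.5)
Your proposal is correct and matches the paper's (implicit) argument, which the paper states only by analogy to the two-case structure of \citet{he2022nearly}. When $\omega\le\bar\omega$, the instance satisfies the budget constraint (\ref{eq:def-uncertainty-budget}) with $\bar\omega$, so Theorem~\ref{thm:upperbound-general} applies verbatim with the $\bar\omega$-calibrated radii and weights; when $\omega>\bar\omega$, the bound $\regret(K)\le K$ holds trivially because $L^\pi\in[0,1]$. One small correction to your bookkeeping: in the paper the transition radius $\betap_k$ contains no $\omega$ term at all, and the $\omega d_P$ contribution appears only in the transition bonus sum, but this does not affect your monotonicity argument.
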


Theorem~\ref{thm:unknown-omega-plugin} has the same two-case structure as the standard unknown-corruption guarantees in~\cite{he2022nearly}. A practically useful default is to set \(\bar\omega\) at the crossover scale of the low-imperfection regime. For example, in the linear model with known transitions, choosing \(\bar\omega=\sqrt{K/M}\) yields $\regret^{\mainalgoabrv}(K) = \tilde O (\tilde d_T\sqrt{K/M})$ whenever $\omega\le \sqrt{K/M}$. This gives a simple tuning-free guarantee throughout the full low-imperfection regime.

% ==============================================================================================================================================================================
\section{Conclusion and Future Work}\label{sec:conclusion}

We studied episodic RLHF with multi-source imperfect preference feedback under a cumulative imperfection budget. Unlike the standard single-oracle formulation, each feedback source may deviate from an ideal ground-truth comparison function, with cumulative deviation bounded by \(\omega\). Because comparison feedback identifies rewards only up to an additive constant, our analysis is carried out directly in the comparison space $q_R(\tau,\tilde\tau)=\smsigma\!\big(R(\tau)-R(\tilde\tau)\big)$. To handle this setting, we developed an algorithm that combines imperfection-adaptive weighted comparison learning, value-targeted transition estimation to control hidden feedback-induced distribution shift, policy-level optimism tailored to preference-only objectives, and sub-importance sampling to stabilize the weighted empirical objectives. Our regret guarantees explicitly characterize the joint roles of the number of episodes \(K\), the number of feedback sources \(M\), and the imperfection budget \(\omega\). In particular, the algorithm achieves a best-of-both-regimes behavior. When imperfection is small, aggregating multiple sources improves statistical efficiency through a \(1/\sqrt{M}\)-type gain, whereas when imperfection is large, the regret depends only additively on \(\omega\), which is unavoidable. Together with a lower bound, our results identify $\omega=\tilde{\Theta}(\sqrt{K/M})$ as the critical regime boundary beyond which additional sources provide diminishing returns.

Several directions naturally follow from this work. The first one is parameter-free adaptivity to unknown imperfection. Our current guarantees are instance-wise in \(\omega\), but the confidence radii and weight schedules are easiest to state using an upper bound on \(\omega\). Although plug-in bounds and doubling schemes can remove this requirement, it remains interesting to design fully adaptive, parameter-free methods that recover the same regret rates with minimal overhead and cleaner schedules. The second is heterogeneous users and multiple ground-truth objectives. In many RLHF applications, disagreement across sources does not merely reflect noise or imperfection, but genuine heterogeneity in preferences. Extending the framework to personalized or multi-population reward models, and studying robust notions of performance such as group-wise, worst-case, or distributionally robust preference utility, would substantially broaden the scope of the theory. The third is sharper dynamics-learning guarantees. In the unknown-transition setting, our analysis separates preference-learning and transition-learning effects. It would be valuable to further tighten the transition term under preference-driven exploration, and to study interaction models in which additional queried sources also provide additional environment information.

%\acks{}

% Manual newpage inserted to improve layout of sample file - not
% needed in general before appendices/bibliography.

\newpage

\appendix

% ============================================================================================================
\section{Detailed Design of the Weights}\label{subsec:detailedsteps}

This appendix specifies the weight/normalization schedules used in Steps~1--2 of Algorithm~\ref{alg:mainnewalgorithm}. The purpose of these weights is twofold. First is \emph{imperfection-adaptive downweighting}, so episodes with low inferred reliability contribute less to estimation, and second is \emph{self-normalized widths}, so the cumulative deviation budget $\omega$ enters the regret bound additively rather than being amplified into an $\omega\sqrt{K}$ term. Our construction follows the self-normalization principle used in corruption-robust contextual bandits and MDPs, e.g.,~\citep{he2022nearly,ye2023corruption}, and adapts it to preference comparisons and value-targeted transition regression.

%We use the standard convention that an inverse-uncertainty term $\frac{(\text{error})^2}{\hat\sigma^2}$ can be written as $w\cdot(\text{error})^2$ with $w \triangleq 1/\hat\sigma^2$, truncated so that $w\in(0,1]$ (equivalently, $\hat\sigma^2\ge 1$). All weights below have the generic form $[\text{uncertainty}]_{\ge 1}^{-1/2}$. Moreover, filtered multisets (e.g., $\hat{\mathcal D}_k$ and $\hat{\gT}_{k,h}$) may contain multiplicities. Whenever we index samples by $i=1,2,\ldots$, we assume the sequence is ordered by the collection episode $t(\cdot)$ (ties broken arbitrarily). Importantly, all uncertainty parameters and normalizers are computed using only prefix information available before the corresponding sample/episode, so the recursion is well-founded.

% =========================
\subsection{Weights for Comparison Learning (Step 1)}\label{app:weights-reward}

\textbf{Comparison path length (CPL).} Recall $q_R(\tau,\tilde\tau)=\smsigma(R(\tau)-R(\tilde\tau))$. For a multiset of comparison pairs $\mathcal{D}=\{(\tau_i,\tilde\tau_i)\}_{i=1}^N$ and two reward models $R_1,R_2\in\gR$, define the CPL $\|R_1,R_2\|_{\mathcal{D}}^2 \triangleq \sum\nolimits_{i=1}^{N} \left( q_{R_1}(\tau_i,\tilde\tau_i)-q_{R_2}(\tau_i,\tilde\tau_i) \right)^2$. In our algorithm, each stored trajectory $\tau$ is paired with its episode-specific reference rollout $\tau_{t(\tau),0}$, so $\hat{\mathcal D}_k=\{(\tau,\tau_{t(\tau),0}):\tau\in\hat\Gamma_k\}$.

\textbf{Directional self-normalized discrepancy.} Fix an ordered sequence of pairs $\Gamma_N=\{(\tau_i,\tau_{i,0})\}_{i=1}^N$, where $\tau_{i,0}$ is the reference rollout paired with $\tau_i$. Define the singleton discrepancy $\Delta_i(R_1,R_2) \triangleq \|R_1,R_2\|_{\{(\tau_i,\tau_{i,0})\}}^2$. Define the prefix normalizer
\begin{align}
\drsum(R_1,R_2,i) \triangleq \alphar + \sum\nolimits_{j=1}^{i-1} \Delta_j(R_1,R_2) / \left[\Upsilon_j^{\text{R}}\right]_{\ge 1}^{1/2}, \label{eqdf:rewarddistancee-new}
\end{align}
and the directional (self-normalized) distance of the $i$-th pair
\begin{align}
\dr(R_1,R_2,i) \triangleq \Delta_i(R_1,R_2) / \lambda \drsum(R_1,R_2,i), \label{eqdf:rewarddistance-new}
\end{align}
where $\lambda>0$ is a scaling parameter. Let $t_i\triangleq t(\tau_i)$ denote the episode in which $\tau_i$ was collected. Define $\Upsilon_i^{\text{R}} \triangleq \sup\nolimits_{R\in \gR_{t_i}} \dr(R,\hat R_{t_i},i)$, i.e., the maximal directional discrepancy between $\hat R_{t_i}$ and any plausible model in the confidence set available at episode $t_i$, measured on the $i$-th sample with the prefix normalizer (\ref{eqdf:rewarddistancee-new}). The truncation $[\cdot]_{\ge 1}$ ensures stability.

\textbf{Weights used in Step 1.} For any sample trajectory $\tau\in\hat\Gamma_k$ whose associated pair in $\hat{\mathcal D}_k$ is $(\tau,\tau_{t(\tau),0})$, define the regression weight $\weightrt \triangleq \big[\Upsilon_{\tau}^{\text{R}}\big]_{\ge 1}^{-1/2}$, where $\Upsilon_{\tau}^{\text{R}}$ is the corresponding $\Upsilon_i^{\text{R}}$ for that pair, so $\weightrt\in(0,1]$. We also define the normalization weight used in the comparison-bonus scaling as $\weightrrk \triangleq \frac{1}{2} \left[\drsum(R_k',\hat R_k,N_k+1)\right]_{\ge 1}^{-1/2}$, where $N_k$ is the multiset size of $\hat{\gD}_k$ and $R_k'$ attains $\sup_{R\in\gR_k}\dr(R,\hat R_k,i)$ in the sense needed by the analysis. Intuitively, when imperfection is small, the uncertainty parameters $\Upsilon_i^{\text{R}}$ remain $O(1)$ so $\weightrt\approx 1$ and the $M$ labels per episode yield the $\sqrt{K/M}$ statistical term. When imperfection accumulates, $\Upsilon_i^{\text{R}}$ grows and the self-normalization downweights affected episodes, yielding additive dependence on $\omega$.

% =========================
\subsection{Weights for Transition Estimation (Step 2)}\label{app:weights-transition}

\textbf{Value-targeted moment functional (avoid collision with $\gv$).} For a step-$h$ transition kernel $P_h$ and a probe function $V\in\gV_{h+1}$, define the planning-relevant moment
\begin{align}
\siga_{t,h}(s,a;P) \triangleq \langle P_h(\cdot\mid s,a),\, V_{t,h+1}\rangle,
\end{align}
where $V_{t,h+1}$ is chosen in a history-measurable way before observing the next state.

\textbf{Transition path length (TPL).} For a multiset of step-$h$ state-action pairs $\Gamma_{N,h}=\{(s_{i,h},a_{i,h})\}_{i=1}^N$ and two kernels $P_1,P_2\in\gP$, define
\begin{align}
\|P_1,P_2\|_{\Gamma_{N,h}}^2 \triangleq \sum\nolimits_{i=1}^{N} \left( \siga_{t_i,h}(s_{i,h},a_{i,h};P_1)-\siga_{t_i,h}(s_{i,h},a_{i,h};P_2) \right)^2, \label{eqdf:tpl}
\end{align}
where $t_i$ is the episode index when the $i$-th tuple was collected.

\textbf{Directional self-normalized transition discrepancy.} For a fixed $h$, define singleton discrepancy $\Delta_{i,h}(P_1,P_2)\triangleq \|P_1,P_2\|_{\{(s_{i,h},a_{i,h})\}}^2$, prefix normalizer
\begin{align}
\dtsum(P_1,P_2,i,h) \triangleq \alphap + \sum\nolimits_{j=1}^{i-1} \Delta_{j,h}(P_1,P_2) / \left[\Upsilon_{j,h}^{\text{P}}\right]_{\ge 1}^{1/2}, \label{eqdf:transitiondistancee-new}
\end{align}
and directional distance
\begin{align}
\dt(P_1,P_2,i,h) \triangleq \Delta(P_1,P_2,i,h) / (\lambda \dtsum(P_1,P_2,i,h)). \label{eqdf:transitiondistance-new}
\end{align}
Let $t_i$ be the episode when $(s_{i,h},a_{i,h})$ was collected. Define
\begin{align}
\Upsilon_{i,h}^{\text{P}} \triangleq \sup\nolimits_{P\in \gP_{t_i}} \dt(P,\hat P_{t_i},i,h), \label{eqdf:upsilonP}
\end{align}
where $\hat P_{t_i}$ is the transition confidence center at episode $t_i$.

\textbf{Weights used in Step 2.} For any tuple $\zeta=(s,a,s')\in\hat{\gT}_{k,h}$ collected at episode $t(\zeta)$, define the regression weight $\weightpt \triangleq \left[\Upsilon_{\zeta,h}^{\text{P}}\right]_{\ge 1}^{-1/2}$, where $\Upsilon_{\zeta,h}^{\text{P}}$ is the corresponding parameter value for that tuple at step $h$. Define the normalization weight used in the transition bonus as $\weightppk \triangleq \left[\dtsum(\hat P_k,P_{k,h}',N_{k,h}+1,h)\right]_{\ge 1}^{-1/2}$, where $N_{k,h}$ is the (multi)set size of $\hat{\gT}_{k,h}$ and $P_{k,h}'$ attains the supremum defining the width of $\gP_k$ at step $h$.

% ============================================================================================================
\section{Proof of Theorem~\ref{thm:lowerboundomega}}\label{appendix:pfthmlowerboundxi}

We provide a complete form of Theorem~\ref{thm:lowerboundomega} below.

\begin{theorem}[Complete lower bound]\label{thmapp:lowerboundomega}
Consider the episodic setting in Section~\ref{sec:problemformulation}. Then, for any (possibly randomized) learning algorithm $\alg$, there exists an instance satisfying (\ref{eq:def-uncertainty-budget}), such that
\begin{align}\label{eq:thmapplowerboundomega}
\mathbb{E} \left[ \regret^{\alg}(K) \right] \ge \tilde{\Omega} \left( \max\left\{ \sqrt{dHK/M}, \omega \right\} \right),
\end{align}
where $\tilde{\Omega}(\cdot)$ hides logarithmic factors. Moreover, since $\regret^{\alg}(K)\le K$ always, the bound should be read as $\tilde{\Omega} \left( \min\{K,\max\{\sqrt{dHK/M},\omega\}\} \right)$.
\end{theorem}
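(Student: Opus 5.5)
Since $\max\{a,b\}\ge (a+b)/2$, it suffices to exhibit two separate hard instance families: one forcing regret $\tilde\Omega(\sqrt{dHK/M})$ with $\omega=0$, and one forcing regret $\tilde\Omega(\min\{K,\omega\})$ even when $M$ is arbitrarily large. For each algorithm we then pick whichever family gives the larger bound. Both constructions use a known deterministic tree-structured transition, so all difficulty sits in the comparison function. We fix $\sigma$ to be the logistic link, whose derivative near $0$ is bounded below by a constant. We also choose $R^*$ so that preference probabilities stay in $[1/4,3/4]$; this keeps Bernoulli KL divergences comparable to squared mean differences.

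\textbf{Statistical term.} Here we reduce to a linear bandit with $M$ Bernoulli observations per round. Take $\pi_0$ to deterministically produce a fixed trajectory $\tau^0$ with $\phi_R(\tau^0)=0$. The known transitions form $H$ stages, each offering $d$ binary choices, so a trajectory is indexed by $x\in\{\pm1\}^{dH}$, with linear reward $R^*_\theta(\tau_x)=\frac{\varepsilon}{\sqrt{dH}}\langle x,\theta\rangle$ for $\theta\in\{\pm1\}^{dH}$. All sources are perfect ($R^m=R^*$), so $\omega=0$. The label $f_k^m$ is then Bernoulli$(\sigma(R^*_\theta(\tau_k)))$, and $L^\pi-L^*$ is proportional to $\varepsilon/\sqrt{dH}$ times the expected number of coordinates of $x$ disagreeing with $\theta$. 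We then run the standard Assouad argument for hypercube linear bandits (Dani--Hayes--Kakade, Lattimore--Szepesv\'ari). Flipping coordinate $i$ of $\theta$ changes each of the $M$ label distributions per round by KL at most $O(\varepsilon^2/(dH))$. Hence the total KL over $K$ rounds is $O(KM\varepsilon^2/(dH))$. Choosing $\varepsilon\asymp\sqrt{dH/(KM)}$ keeps this a constant, so a constant fraction of coordinates is misidentified for a constant fraction of episodes. Summing over $dH$ coordinates and $K$ episodes gives regret $\Omega(dH\cdot K\cdot\varepsilon/\sqrt{dH})=\Omega(\sqrt{dHK/M})$. The key point is that the $M$ i.i.d.\ labels multiply the information by exactly $M$, and this is where $1/\sqrt M$ enters.

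\textbf{Imperfection term.} We build two instances, $R^*_+$ and $R^*_-$, with a single stage and two actions $a_\pm$. Action $a_\pm$ is optimal under $R^*_\pm$ with gap $c=\Theta(1)$ in $L^\pi$. Every source reports the \emph{same} preference probabilities under both instances, namely the uninformative value $1/2$ (i.e.\ $R^m$ constant), during the first $K_0=\lfloor\min\{K,\omega/c'\}\rfloor$ episodes. Afterwards the sources behave identically to $R^*$. The per-episode deviation $|p_k^m-p_k^*|$ is at most $c'$ when $\tau_k$ differs from the reference, so the budget (\ref{eq:def-uncertainty-budget}) holds. The budget must hold for every adaptive policy sequence, which is why the deviation is scheduled by episode count: this makes it policy-independent. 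During the first $K_0$ episodes the observation laws under the two instances coincide, so $\Pr_+(\text{history})=\Pr_-(\text{history})$ and the algorithm's action distributions are identical. Summing the per-episode regrets under $+$ and $-$ gives at least $cK_0$, so one of the two instances incurs regret at least $cK_0/2=\Omega(\min\{K,\omega\})$. Combining this with the $\sqrt{dHK/M}$ family completes the proof.

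\textbf{Main obstacle.} The delicate step is the statistical term. The Assouad/KL computation must correctly account for the $M$ conditionally independent labels per round, and it must keep the regret linear in the coordinatewise errors. This is complicated by two features of the model: the trajectory-level reward passes through a nonlinear link $\sigma$, and the utility $L^\pi$ is an expectation of $\sigma(\cdot)$ rather than of the reward itself. I would handle this by linearizing $\sigma$ around $0$: taking $\varepsilon\le 1$ ensures all arguments of $\sigma$ lie in $[-1,1]$, where $\sigma'$ is bounded above and below by constants. This makes both the KL bound and the regret–disagreement relation hold up to constants.
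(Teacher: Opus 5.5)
Your proposal is correct in outline but takes a different route from the paper in both halves.

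\textbf{Statistical term.} The paper takes $H=1$ and a $d$-armed ``needle'' instance: one arm at $\tfrac12+\Delta$ and the rest at $\tfrac12$, realized by inverting $\sigma$ on one-hot features. It bounds $\frac1d\sum_i \mathbb{P}_i(A_k=i)$ by Pinsker against a null instance, using $\mathrm{KL}(\mathbb{P}_0\|\mathbb{P}_i)\le 4M\Delta^2\,\E_0[N_i]$, and obtains $\Omega(\sqrt{dK/M})$. The $\sqrt{H}$ factor is only asserted, by appeal to standard linear-MDP constructions.

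Your hypercube-plus-Assouad construction on trajectory features of dimension $dH$ makes the $H$-dependence explicit and self-contained. Note that $H$ enters through the feature dimension, not through the dynamics. The price is that you must linearize $\sigma$, which the one-hot construction never needs.

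Three small repairs are needed there:
\begin{itemize}
\item The arguments of $\sigma$ range over $[-\varepsilon\sqrt{dH},\varepsilon\sqrt{dH}]$, not $[-\varepsilon,\varepsilon]$. So you need $\varepsilon\sqrt{dH}\le 1$, i.e.\ $KM\ge (dH)^2$, rather than $\varepsilon\le 1$. Otherwise, freeze all but $\lfloor\sqrt{KM}\rfloor$ coordinates; this gives $\Omega(K)$, which is covered by the $\min\{K,\cdot\}$ reading.
\item Your final arithmetic actually gives $dH\cdot K\cdot\varepsilon/\sqrt{dH}=dH\sqrt{K/M}$, not $\sqrt{dHK/M}$. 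This is stronger than, and hence implies, the claim, and it matches the $\tilde d_T\sqrt{K/M}$ upper bound with $\tilde d_T=dH$.
\item No $\tau_x$ has zero feature, so $\pi_0$ needs a dedicated reference branch. Alternatively, compare against a fixed $\tau_{x_0}$. Either fix is trivial.
\end{itemize}

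\textbf{Imperfection term.} The paper keeps every source uninformative for all $K$ episodes ($p_k^m\equiv\tfrac12$, i.e.\ a single fixed constant $R^m$). It shrinks the gap instead, to $\Delta=\min\{\omega/K,1/4\}$. Deviation then occurs only when the optimal action is played and is at most $\Delta$, so the budget holds for every adaptive policy sequence automatically, and the regret is at least $\Delta K/2$.

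You instead keep a constant gap and front-load the uninformative phase into the first $K_0\approx\omega/c'$ episodes. This requires source functionals that switch from uninformative to truthful at a fixed episode. That is admissible under the paper's allowance of history-dependent deviations. However, the paper's version stays within the literal fixed-$R^m$ model. It also handles $\omega<c'$ gracefully, where your floor gives $K_0=0$ and hence no bound. You can patch this by using gap $\min\{c,\omega\}$ in a single episode, or by spreading the gap as the paper does.
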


The term $\tilde{\Omega}(\sqrt{dHK/M})$ captures the statistical difficulty even with perfect feedback and exhibits the $\tilde{\Theta}(1/\sqrt{M})$ gain from $M$ conditionally independent labels. The term $\tilde{\Omega}(\omega)$ shows that additive dependence on the cumulative imperfection budget is unavoidable.

\subsection{Proof of Theorem~\ref{thmapp:lowerboundomega}}\label{appendix:pfthmapplowerboundomega}

\begin{proof}
We prove (\ref{eq:thmapplowerboundomega}) by constructing two families of instances and taking the maximum of the resulting lower bounds. Case~1 yields the statistical term (already for $\omega=0$). Case~2 yields the imperfection term (already for known dynamics).

Throughout, the learner observes $M$ labels per episode: conditional on the compared pair $(\tau_k,\tau_{k,0})$, $\{f_k^m\}_{m=1}^M$ are independent Bernoulli draws with means $\{p_k^m\}_{m=1}^M$.

% ============================================================
\subsubsection{Case 1: Statistical lower bound (perfect feedback)}

We first give a clean reduction to a stochastic $d$-armed bandit with $M$ samples per pull, which yields $\Omega(\sqrt{dK/M})$ for horizon $H=1$. The extension to general $H$ follows by standard horizon-embedding arguments for episodic RL lower bounds (see the discussion at the end of this case). Let $H=1$ and $\gA=\{1,2,\ldots,d\}$ with a single state $s_1$. Fix a reference policy $\pi_0$ that deterministically produces a baseline trajectory $\tau_{0}$ (equivalently, a baseline action). For each $i\in[d]$, let $\tau(i)$ denote the trajectory induced by choosing action $i$. Fix a gap $\Delta\in(0,1/4]$ and consider an instance family indexed by $i^*\in[d]$,
\[
p^*(i) \triangleq q_{R^*}(\tau(i),\tau_0)=
\begin{cases}
\frac12+\Delta, & i=i^*,\\[2pt]
\frac12, & i\neq i^*.
\end{cases}
\]
All sources are ideal, i.e., $p_k^m=p_k^*$ for all $k,m$, so $\omega=0$.

Since $\smsigma$ is known and nondecreasing, we can realize the above probabilities by choosing reward differences $R^*(\tau(i))-R^*(\tau_0)$ to match $\smsigma^{-1}(p^*(i))$. Choosing $\Delta$ sufficiently small ensures these values lie within the assumed reward bound (e.g., $|R^*(\tau)|\le B_R$)~\citep{shi2022power}. In the linear setting this can be realized by a one-hot feature map over $\{\tau(i)\}_{i=1}^d$.

If the learner plays action $A_k$, it observes $M$ i.i.d.\ labels $f_k^m\sim\mathrm{Ber}(p^*(A_k))$. Thus, this is a $d$-armed bandit with one good arm of mean $1/2+\Delta$ and $M$ samples per pull. The per-episode regret is $\Delta$ whenever $A_k\neq i^*$, hence $\regret^{\alg}(K)=\Delta\sum\nolimits_{k=1}^K \indicate\{A_k\neq i^*\}$. Let $\mathbb{P}_{i}$ be the law of the interaction when $i^*=i$, and let $\mathbb{P}_0$ be the null law with $\Delta=0$. For any event $E$ measurable w.r.t.\ the interaction history, Pinsker's inequality gives $\mathbb{P}_i(E)\le \mathbb{P}_0(E)+\sqrt{\tfrac12 \mathrm{KL}(\mathbb{P}_0\|\mathbb{P}_i)}$. Applying this with $E=\{A_k=i\}$ and averaging over $i\in[d]$ yields
\begin{align}
\frac{1}{d}\sum\nolimits_{i=1}^d \mathbb{P}_i(A_k=i) \le \frac{1}{d} + \sqrt{\frac{1}{2d}\sum\nolimits_{i=1}^d \mathrm{KL}(\mathbb{P}_0\|\mathbb{P}_i)}. \label{eq:case1-avg-clean}
\end{align}
By the chain rule, observations differ from $\mathbb{P}_0$ only when arm $i$ is pulled, and each pull yields $M$ independent Bernoulli samples. Therefore,
\begin{align}
\mathrm{KL}(\mathbb{P}_0\|\mathbb{P}_i) & = \mathbb{E}_0 \left[\sum\nolimits_{k=1}^K \indicate\{A_k=i\} \mathrm{KL} \left(\mathrm{Ber}(1/2)^{\otimes M} \Big\| \mathrm{Ber}(1/2+\Delta)^{\otimes M}\right)\right] \nonumber \\
& = M \mathrm{KL} \left(\mathrm{Ber}(1/2) \Big\| \mathrm{Ber}(1/2+\Delta)\right)\cdot \mathbb{E}_0[N_i], \label{eq:case1-kl-clean}
\end{align}
where $N_i=\sum_{k=1}^K \indicate\{A_k=i\}$. For $\Delta\in(0,1/4]$, $\mathrm{KL}(\mathrm{Ber}(1/2)\|\mathrm{Ber}(1/2+\Delta))\le 4\Delta^2$, hence
\begin{align}
\sum\nolimits_{i=1}^d \mathrm{KL}(\mathbb{P}_0\|\mathbb{P}_i) \le 4M\Delta^2 \sum\nolimits_{i=1}^d \mathbb{E}_0[N_i] = 4M\Delta^2 K. \nonumber
\end{align}
Plugging this into (\ref{eq:case1-avg-clean}) yields $\frac{1}{d}\sum\nolimits_{i=1}^d \mathbb{P}_i(A_k=i) \le \frac{1}{d}+\Delta\sqrt{\frac{2MK}{d}}$. Under the uniform prior on $i^*$, the expected regret satisfies
\begin{align}
\mathbb{E} \left[ \regret^{\alg}(K) \right] = \Delta\sum\nolimits_{k=1}^K \left(1-\frac{1}{d}\sum_{i=1}^d \mathbb{P}_i(A_k=i)\right) \ge \Delta K\left(1-\frac{1}{d}-\Delta\sqrt{\frac{2MK}{d}}\right). \label{eq:case1-reg-clean}
\end{align}
Choosing $\Delta=c\sqrt{\frac{d}{MK}}$ for a sufficiently small universal constant $c>0$ makes the parentheses a positive constant, giving $\mathbb{E} \left[\regret^{\alg}(K)\right]\ge \Omega \left(\sqrt{dK / M}\right)$. To obtain the standard $\sqrt{H}$ dependence in episodic RL, one can embed this bandit hardness into an $H$-step linear MDP lower-bound construction (e.g., the canonical hard families used for linear MDP regret lower bounds). Since our feedback is \emph{less informative} than observing step-wise rewards and transitions, the same horizon-dependent statistical hardness carries over. We refer to standard constructions for the $\tilde{\Omega}(\sqrt{dHK})$ dependence in linear MDPs, e.g., \cite{jin2023provably}, and note that observing $M$ independent labels per episode yields the $1/\sqrt{M}$ improvement by a variance-reduction argument identical to the bandit case above.

% ============================================================
\subsubsection{Case 2: Imperfection lower bound (feedback hides the signal)}

We now construct an instance where the observed feedback is \emph{uninformative} but still satisfies the budget, forcing
regret $\tilde \Omega(\omega)$. Let $H=1$ and $\gA=\{1,2\}$ with a fixed reference rollout $\tau_0$. Let $\Delta\triangleq \min\{\omega/K,\,1/4\}$. Consider two sub-instances $\mathcal{I}_1,\mathcal{I}_2$:
\[
q^*_{\mathcal{I}_1}(\tau(1),\tau_0)=\tfrac12+\Delta,\quad q^*_{\mathcal{I}_1}(\tau(2),\tau_0)=\tfrac12,
\]
\[
q^*_{\mathcal{I}_2}(\tau(2),\tau_0)=\tfrac12+\Delta,\quad q^*_{\mathcal{I}_2}(\tau(1),\tau_0)=\tfrac12.
\]
Define the observed feedback process to be \emph{uninformative}. For every episode $k$ and every source $m\in[M]$, let $p_k^m \equiv \tfrac12$ and thus $f_k^m\sim \mathrm{Ber}(1/2)$, independently across $m$ conditional on the compared pair. For any algorithm and any realized action sequence, the only episodes with nonzero deviation are those where the learner plays the optimal action, in which case $|p_k^m-p_k^*|=\Delta$. Otherwise, $|p_k^m-p_k^*|=0$. Hence, for every source $m$,
\begin{align}
\sum\nolimits_{k=1}^K |p_k^m-p_k^*| \le K\Delta \le \omega,
\end{align}
so (\ref{eq:def-uncertainty-budget}) holds for all adaptively generated trajectories.

Under this construction, the full observed feedback distribution is identical under $\mathcal{I}_1$ and $\mathcal{I}_2$ (always i.i.d.\ Bernoulli$(1/2)$ labels), so the learner cannot identify which action is truly better. Let $N_1$ be the (random) number of times action $1$ is played. Then, the expected number of mistakes under the uniform prior over $\{\mathcal{I}_1,\mathcal{I}_2\}$ is $\frac12 \mathbb{E}_{\mathcal{I}_1}[K-N_1]+\frac12 \mathbb{E}_{\mathcal{I}_2}[N_1] = \frac{K}{2}$. Each mistake incurs per-episode regret $\Delta$, hence
\begin{align}
\mathbb{E} \left[ \regret^{\alg}(K) \right]\ge \Delta\cdot\frac{K}{2}\ge \Omega(\omega), 
\end{align}
up to constants (and capped by $\Omega(K)$ when $\omega$ is large).

Combining Case~1 and Case~2 proves Theorem~\ref{thmapp:lowerboundomega} and hence Theorem~\ref{thm:lowerboundomega}.

\end{proof}

% ============================================================================================================
\section{Proof of Proposition~\ref{prop:ignore-omega}}
\label{appendix:proofofthmregretlowerbndexistingrlhf}

This appendix proves Proposition~\ref{prop:ignore-omega}.  We show that an unweighted OFUL-type baseline that treats all feedback as if it were generated by the ideal oracle can incur suboptimal regret.

\begin{proof}
We reduce to adversarially corrupted linear contextual bandits and then embed that instance into a one-step RLHF problem.

\subsection{Step 1: Embed a Corrupted Linear Bandit as an RLHF Instance}

Consider the special case $H=1$ with a single state. Each action $a\in\gA$ induces a trajectory $\tau(a)$. The reference rollout $\tau_{k,0}$ is fixed (equivalently, $\pi_0$ always produces the same baseline action/trajectory). Choose a link function $\smsigma$ that satisfies our standing assumptions (nondecreasing and Lipschitz) and is locally linear around $0$. Concretely, one can take $\smsigma(x) \triangleq \mathrm{clip} \left(\tfrac{1}{2}+x,0,1\right)$, which is nondecreasing and $1$-Lipschitz and is exactly linear on $x\in[-\tfrac12,\tfrac12]$. (Any other monotone Lipschitz link with a nontrivial linear region suffices.)

Let $\phi:\gA\to\R^d$ be a feature map with $\|\phi(a)\|_2\le 1$. Pick $\theta^*\in\R^d$ with $\|\theta^*\|_2\le \tfrac14$ and define the ideal comparison probabilities as
\begin{align}
p_k^*(a) = q_{R^*}(\tau(a),\tau_{k,0}) = \smsigma \left(\langle \theta^*,\phi(a)\rangle\right) = \tfrac12 + \langle \theta^*,\phi(a)\rangle, \label{eq:bandit-embed}
\end{align}
where the last equality holds because $|\langle \theta^*,\phi(a)\rangle|\le \tfrac14$ keeps us inside the linear region of $\smsigma$. This realizes the one-step RLHF preference model exactly within our assumptions.

Now define the observed (imperfect) feedback process as follows. For each episode $k$ and source $m$, the label $f_k^m\in\{0,1\}$ is Bernoulli with mean $p_k^m(a_k) = p_k^*(a_k) + \Delta_k^m$, where $\sum_{k=1}^K |\Delta_k^m| \le \omega$. Equivalently, define centered labels $\tilde y_k^m \triangleq f_k^m-\tfrac12\in\{-\tfrac12,+\tfrac12\}$, so that
\begin{align}
\mathbb{E}[\tilde y_k^m \mid a_k] = \langle \theta^*,\phi(a_k)\rangle + \Delta_k^m.
\end{align}
Thus, from the learner's perspective, this one-step RLHF instance is exactly a linear contextual bandit with additive (possibly adaptive) corruption $\Delta_k^m$ in the conditional mean.

\subsection{Step 2: Pooling Sources Does not Remove Worst-Case Bias}

The unweighted baseline pools all sources, i.e., it uses either all $\{f_k^m\}$ in an unweighted regression or equivalently the averaged label $\bar f_k \triangleq \frac{1}{M}\sum_{m=1}^M f_k^m$ (the two differ only by a constant factor in the
Gram matrix). Define the averaged deviation $\bar\Delta_k \triangleq \frac{1}{M}\sum_{m=1}^M \Delta_k^m$. By Jensen's inequality and the per-source budgets, we have $\sum\nolimits_{k=1}^K |\bar\Delta_k| \le \frac{1}{M}\sum\nolimits_{m=1}^M\sum\nolimits_{k=1}^K|\Delta_k^m| \le \omega$. Hence, even after pooling, the baseline faces a corrupted linear bandit problem with a (total) corruption budget of order $\omega$ in the conditional mean. In particular, to obtain a worst-case lower bound it suffices to consider the case where all sources share the same deviation process $\Delta_k^m\equiv \bar\Delta_k$ (still independent Bernoulli given the mean), so pooling improves variance by $1/M$ but does not improve the bias budget.

\subsection{Step 3: A Lower Bound for Unweighted OFUL-Type Methods}

Consider the standard OFUL/LinUCB template that performs unweighted ridge regression on the pooled samples and uses a confidence ellipsoid/optimistic action selection as if $\bar\Delta_k\equiv 0$. Under adaptive mean corruption with total budget $\sum_{k=1}^K |\bar\Delta_k|\le \omega$, there exist linear bandit instances and corruption sequences for which this OFUL-type method incurs regret $\mathbb{E} \left[\regret(K)\right] \ge \tilde{\Omega} \left(\min\{\omega\sqrt{K},K\}\right)$. This is precisely the ``multiplicative $\sqrt{K}$ blow-up'' phenomenon highlighted in the corruption-robust contextual bandit literature and motivates uncertainty-weighted/self-normalized regression \citep{he2022nearly,ye2023corruption}.

\subsection{Step 4: Map Back to RLHF Regret}

In the embedded one-step RLHF instance (\ref{eq:bandit-embed}), the ground-truth utility of playing action $a$ is exactly $L(a)=p_k^*(a)=\tfrac12+\langle\theta^*,\phi(a)\rangle$, and the RLHF regret $\sum_{k=1}^K(L^*-L^{\pi_k})$ coincides (up to constants) with standard bandit regret in expected reward. Therefore, Step 3 above implies that there exists an RLHF instance satisfying (\ref{eq:def-uncertainty-budget}) on which the unweighted OFUL-type baseline suffers $\mathbb{E} \left[\regret^{\text{OFUL}}(K)\right] \ge \tilde{\Omega} \left(\min\{\omega\sqrt{K},K\}\right)$, which is Proposition~\ref{prop:ignore-omega}.
\end{proof}

%\textbf{Optional remark (standard BTL/Thurstone links).} The construction above used a locally linear $\smsigma$ for an exact reduction. If one insists on a standard BTL/logistic or Thurstone/probit link, one can restrict the reward differences to a small neighborhood of $0$ where $\smsigma$ is bi-Lipschitz, so that probability gaps and reward-difference gaps are equivalent up to constants; the same corrupted-bandit embedding then yields the same $\tilde{\Omega}(\min\{K,\omega\sqrt{K}\})$ scaling up to constant factors.

% ============================================================================================================
\section{Proof of Theorem~\ref{thm:upperbound-general}}\label{appendix:thm:upperbound-general}

This appendix proves the regret upper bound for Algorithm~\ref{alg:mainnewalgorithm} under \emph{unknown transitions} and \emph{general function approximation}. Relative to standard preference-based RL analyses, our setting introduces four additional technical challenges. First, under pairwise preference feedback, the latent reward \(R^*\) is identifiable only up to an additive constant. Accordingly, the analysis cannot be carried out directly in reward space. Second, feedback imperfection enters through the comparison probabilities themselves and is constrained only by the cumulative probability-deviation budget \(\omega\). Thus, the comparison regression analysis must control a persistent, adaptive bias term. Third, although the transition kernel is not directly corrupted, imperfect preferences alter the executed policy sequence, which changes both the state--action distribution and the value functions used in transition estimation. This creates a hidden feedback to policy and then to data dependence that must be handled inside the value-targeted transition analysis. Fourth, the empirical objectives in Steps~1--2 are history-adaptively weighted, while Step~4 replaces the full history by filtered multisets. As a result, the proof requires a uniform approximation guarantee.

To address these issues, the proof has four steps below. %First, we establish a joint high-probability event under which the true comparison function \(q_{R^*}\) lies in the Step~1 confidence sets and the true transition kernel \(P^*\) lies in the Step~2 value-targeted confidence sets. Second, on this event, we show that the policy-level upper confidence bound dominates the true preference utility, reducing regret to cumulative comparison and transition bonuses. Third, we prove a weighted imperfection cross-term bound in comparison space, which ensures that the cumulative imperfection budget enters additively through \(\omega\), rather than via an \(\omega\sqrt{K}\) term. Fourth, we use the filtered-history argument from Step~4 to show that the weighted empirical objectives and the corresponding bonus classes are preserved up to an \(O(HK\epsilon)\) approximation error. The cumulative bonus terms are then controlled by an eluder-dimension counting argument, yielding the final regret bound.

% ============================================================
\subsection{Multi-Source Averaging}\label{app:multi-source-avg}

The following reduction shows that, for the purpose of Step~1 analysis, the $M$-source preference feedback can be treated as a single effective comparison stream with reduced stochastic noise and the same cumulative imperfection budget.

\begin{lemma}[Effective single-stream reduction]\label{lem:multi-source-reduction}
For each episode $k$, define the averaged label and its conditional mean by $\bar f_k \triangleq \frac{1}{M}\sum_{m=1}^M f_k^m$, $\bar p_k \triangleq \E[\bar f_k\mid \tau_k,\tau_{k,0}] = \frac{1}{M}\sum_{m=1}^M p_k^m$. Then, the following hold,
\begin{enumerate}
    \item Conditioned on $(\tau_k,\tau_{k,0})$, the noise $\bar f_k-\bar p_k$ is sub-Gaussian with variance $O(1/M)$, i.e., for all $\lambda\in\R$, $\E [\exp (\lambda(\bar f_k-\bar p_k)) | \tau_k,\tau_{k,0}] \le \exp (\frac{\lambda^2}{8M})$.
    \item The averaged mean sequence satisfies the same cumulative imperfection budget:, i.e., $\sum_{k=1}^K |\bar p_k-p_k^*| \le \omega$.
    \item The weighted squared-loss objective pooled over the $M$ sources is equivalent, up to an additive constant independent of the predictor, to weighted regression on the averaged labels $\bar f_k$.
\end{enumerate}
\end{lemma}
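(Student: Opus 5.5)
The three parts of Lemma~\ref{lem:multi-source-reduction} are essentially independent, and I would prove them in the order stated, each by an elementary argument.

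\textbf{Part 1 (sub-Gaussian noise).} Conditioned on $(\tau_k,\tau_{k,0})$, the labels $f_k^1,\ldots,f_k^M$ are independent Bernoulli variables with means $p_k^m$, by the modeling assumption in Section~\ref{sec:problemformulation}. Each centered variable $f_k^m-p_k^m$ takes values in an interval of length $1$. Hoeffding's lemma therefore gives $\E[\exp(s(f_k^m-p_k^m))\mid\tau_k,\tau_{k,0}]\le \exp(s^2/8)$ for every $s\in\R$. Write $\bar f_k-\bar p_k=\frac1M\sum_m (f_k^m-p_k^m)$. By conditional independence the moment generating function factorizes, and applying the per-source bound with $s=\lambda/M$ gives
\[
\E[\exp(\lambda(\bar f_k-\bar p_k))\mid\tau_k,\tau_{k,0}]\le \prod_{m=1}^M \exp(\lambda^2/(8M^2))=\exp(\lambda^2/(8M)).
\]
The variance proxy is thus $1/(4M)=O(1/M)$. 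The only care needed is that the bound holds conditionally on the history: the pair $(\tau_k,\tau_{k,0})$ is determined before the labels are drawn, so the result is the martingale-difference condition the later concentration arguments require.

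\textbf{Part 2 (budget preserved).} The averaged mean deviates from the oracle by $\bar p_k-p_k^*=\frac1M\sum_m(p_k^m-p_k^*)$. By the triangle inequality, $|\bar p_k-p_k^*|\le \frac1M\sum_m|\Delta_k^m|$. Summing over $k$ and exchanging the sums gives
\[
\sum_k|\bar p_k-p_k^*|\le\frac1M\sum_{m=1}^M\sum_{k=1}^K|\Delta_k^m|\le\omega,
\]
where the last step uses (\ref{eq:def-uncertainty-budget}) for each $m$. This holds for any adaptive sequence of comparison pairs, because the budget assumption does.

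\textbf{Part 3 (equivalence of objectives).} Fix a pair with weight $w$ and prediction $q=q_R(\tau,\tau_0)$, and apply the bias--variance identity:
\[
\sum_{m=1}^M (q-f^m)^2=M(q-\bar f)^2+\sum_{m=1}^M (f^m-\bar f)^2.
\]
It follows from expanding the square around $\bar f$, since the cross term $2(q-\bar f)\sum_m(\bar f-f^m)$ vanishes. Multiply by $w_{1,t(\tau)}^{\text{R}}$ and sum over the filtered multiset $\hat{\gD}_k$. The objective in (\ref{eq:mainalgorithmestimatesteinerrewardcenter}) then equals $M$ times the weighted regression objective on the averaged labels, plus a remainder that does not depend on $R$. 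One point needs checking here: the weights are history-measurable and do not depend on $R$, so the remainder truly is predictor-independent. With that, the $\arginf$ is unchanged. The factor $M$ is harmless: it can be absorbed by rescaling, or used to express the confidence radius in terms of the $1/M$ noise level from Part~1.

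None of the steps is a real obstacle. The most delicate point is making the conditioning precise in Part~1 with respect to the filtration generated by the history, so that the result plugs into the self-normalized concentration used later.
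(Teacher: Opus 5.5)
Your proof is correct and follows the same route as the paper: Hoeffding's lemma under conditional independence for Part~1, the triangle inequality with an exchange of sums for Part~2, and expanding the square around $\bar f_k$ so the cross term vanishes for Part~3. In Part~1, your explicit factorization of the moment generating function with $s=\lambda/M$ is the step that actually produces the $1/M$ factor. The paper's terser wording, which applies Hoeffding's lemma to $\bar f_k\in[0,1]$, leaves this implicit; taken literally, that would only give $\exp(\lambda^2/8)$.
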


\begin{proof}
Conditioned on $(\tau_k,\tau_{k,0})$, the labels $\{f_k^m\}_{m=1}^M$ are independent Bernoulli random variables, so $\bar f_k\in[0,1]$ and Hoeffding's lemma yields $\E [\exp (\lambda(\bar f_k-\bar p_k)) | \tau_k,\tau_{k,0}] \le \exp \left(\frac{\lambda^2}{8M}\right)$, which proves the first claim.

Next, since each source satisfies $\sum_{k=1}^K |p_k^m-p_k^*|\le \omega$, the triangle inequality implies $|\bar p_k-p_k^*| = |\frac{1}{M}\sum_{m=1}^M (p_k^m-p_k^*)| \le \frac{1}{M}\sum_{m=1}^M |p_k^m-p_k^*|$. Summing over $k$ gives $\sum_{k=1}^K |\bar p_k-p_k^*| \le \frac{1}{M}\sum_{m=1}^M\sum_{k=1}^K |p_k^m-p_k^*| \le \omega$, which proves the second claim.

Finally, for any scalar predictor $q$ and any weight $w\ge 0$, we have
\begin{align}
\sum\nolimits_{m=1}^M w (q-f_k^m)^2 & = w \sum\nolimits_{m=1}^M ((q-\bar f_k)+(\bar f_k-f_k^m))^2 \nonumber \\
& = Mw (q-\bar f_k)^2 + w \sum\nolimits_{m=1}^M(\bar f_k-f_k^m)^2, \label{eq:avg-loss-decomp}
\end{align}
where the cross term vanishes because $\sum_{m=1}^M (\bar f_k-f_k^m)=0$. The second term in \eqref{eq:avg-loss-decomp} is independent of $q$, so minimizing $\sum_{m=1}^M w\,(q-f_k^m)^2$ over $q$ is equivalent to minimizing $Mw\,(q-\bar f_k)^2$ over $q$. This proves the third claim.
\end{proof}

% ============================================================
\subsection{Step A: High-Probability Comparison Event}\label{app:hp-comparison}

We first prove that the Step~1 comparison confidence sets contain the ground-truth comparison function with high probability under the reward. Since rewards are identifiable only up to additive constants under pairwise preferences, the proof is carried out entirely in the induced comparison space $q_R(\tau,\tau_0)=\smsigma(R(\tau)-R(\tau_0))$.

By Lemma~\ref{lem:multi-source-reduction}, the $M$ source labels can be reduced to the averaged-label model $\bar f_t = p_t^*+\Delta_t+\eta_t = q_{R^*}(\tau_t,\tau_{t,0})+\Delta_t+\eta_t$, where $\Delta_t \triangleq \bar p_t-p_t^*$, $\sum_{t=1}^K |\Delta_t|\le \omega$, and \(\eta_t\) is conditionally zero-mean \(\eta_R\)-sub-Gaussian with $\eta_R \triangleq \frac{1}{2\sqrt M}$. Thus, Step~1 can be analyzed as a weighted nonlinear least-squares problem with stochastic noise \(\eta_t\) and an additive imperfection term \(\Delta_t\). Then, we let $\hat{\gD}_k \triangleq \{(\tau,\tau_{t(\tau),0}) : \tau\in \hat\Gamma_k\}$ denote the filtered multiset of comparison pairs collected before episode \(k\), with multiplicities allowed, and let $N_k \triangleq |\hat{\gD}_k|$ be its size. By Lemma~\ref{lem:multi-source-reduction}, the Step~1 estimator \(\hat R_k\) induced by (\ref{eq:mainalgorithmestimatesteinerrewardcenter}) is equivalently analyzed through
\begin{align}
\hat R_k \in \argmin\nolimits_{R\in\gR} \sum\nolimits_{(\tau,\tau_0)\in\hat{\gD}_k} M w^R_{t(\tau)} (q_R(\tau,\tau_0)-\bar f_{t(\tau)})^2. \label{eq:hp-comp-weighted-erm}
\end{align}
We use the notation $(\sigma_t^R)^2 \triangleq \frac{1}{w_t^R}$. Under the weight design in Appendix~\ref{subsec:detailedsteps}, $w_t^R = [\Upsilon_t^{\mathrm R}]_{\ge 1}^{-1/2}$, $(\sigma_t^R)^2 = [\Upsilon_t^{\mathrm R}]_{\ge 1}^{1/2}$. Fix a cover radius \(\gamma_R>0\), and let $N_T(\gamma_R)\triangleq \gN(\Delta\gF_T,\|\cdot\|_\infty,\gamma_R)$. For each episode, let \(\bar q_k\in \Delta\gF_T\) be a \(\gamma_R\)-covering approximation of \(q_{R^*}\), i.e., $\|\bar q_k-q_{R^*}\|_\infty \le \gamma_R$.

\begin{lemma}[Weighted ERM bound in comparison space]\label{lem:weighted-erm-comparison}
Fix \(\delta\in(0,1)\). Suppose Step~1 solves (\ref{eq:hp-comp-weighted-erm}) exactly. Then, with probability at least \(1-\delta\), for all \(k\in[K]\), we have
\begin{align}
& \sum\nolimits_{(\tau,\tau_0)\in\hat{\gD}_k} \frac{\big(q_{\hat R_k}(\tau,\tau_0)-\bar q_k(\tau,\tau_0)\big)^2}{(\sigma_{t(\tau)}^R)^2} \le 10\eta_R^2 \ln \frac{2N_T(\gamma_R)}{\delta} \nonumber \\
& + 5 \sum_{(\tau,\tau_0)\in\hat{\gD}_k} \frac{\big|q_{\hat R_k}(\tau,\tau_0)-\bar q_k(\tau,\tau_0)\big|\cdot |\Delta_{t(\tau)}|}{(\sigma_{t(\tau)}^R)^2} + 10\gamma_R \left(\gamma_R N_k + \sqrt{N_k C_R(k,\omega,M,\delta)}\right), \label{eq:hp-comp-weighted-erm-bound}
\end{align}
where $C_R(k,\omega,M,\delta) \triangleq 2\left( \omega^2 + \frac{k}{2M} + \frac{3}{4M}\ln\frac{2}{\delta} \right)$.
\end{lemma}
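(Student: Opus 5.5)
We follow the standard nonlinear least-squares argument of \citet{he2022nearly}, adapted to comparison space with weights $1/(\sigma_t^R)^2 = w_t^R$. Write $z_\tau \triangleq (\tau,\tau_{t(\tau),0})$ and let $\sum_\tau$ be shorthand for the sum over the multiset $\hat{\gD}_k$. The starting point is the optimality of $\hat R_k$ in (\ref{eq:hp-comp-weighted-erm}). By Lemma~\ref{lem:multi-source-reduction} we may drop the factor $M$. Compare with the covering element $\bar q_k$; we treat it as if it were attainable, paying $\gamma_R$ in $\|\cdot\|_\infty$ for the approximation. Substituting $\bar f_t = q_{R^*}+\Delta_t+\eta_t$ and expanding both squares gives the basic inequality
\begin{align}
\sum\nolimits_{\tau} \frac{(q_{\hat R_k}-\bar q_k)^2(z_\tau)}{(\sigma^R_{t(\tau)})^2} \le 2\sum\nolimits_{\tau} \frac{(q_{\hat R_k}-\bar q_k)(z_\tau)\,\big(\eta_{t(\tau)}+\Delta_{t(\tau)}+(q_{R^*}-\bar q_k)(z_\tau)\big)}{(\sigma^R_{t(\tau)})^2}. \nonumber
\end{align}
This splits the right-hand side into a noise cross-term, an imperfection cross-term, and a discretization term.

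The imperfection term is bounded crudely by $2\sum_\tau |q_{\hat R_k}-\bar q_k|\,|\Delta_{t(\tau)}|/(\sigma^R)^2$ and kept as is. It becomes the middle term of (\ref{eq:hp-comp-weighted-erm-bound}) after the final rescaling by constants. The discretization term is at most $2\gamma_R\sum_\tau |q_{\hat R_k}-\bar q_k|$, and we will absorb it into the $\gamma_R$ terms described below.

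For the noise term, first fix a single element $g$ of a $\gamma_R$-cover of $\Delta\gF_T$. The weights $w^R_t\le1$ and the difference $g-\bar q_k$ are history-measurable (prefix-measurable in the filtered ordering). Hence $\sum_\tau w\,(g-\bar q_k)(z_\tau)\eta_{t(\tau)}$ is a martingale with conditionally $\eta_R$-sub-Gaussian increments. A Freedman/self-normalized (Ville) inequality then gives, uniformly in $k$ with probability $1-\delta/(2N_T)$,
\begin{align}
2\sum\nolimits_{\tau} w\,(g-\bar q_k)\eta \le \tfrac12\sum\nolimits_{\tau} w\,(g-\bar q_k)^2 + 4\eta_R^2\ln\tfrac{2N_T}{\delta}. \nonumber
\end{align}
A union bound over the cover extends this to all $g$. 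We then replace $g$ by the cover element closest to $q_{\hat R_k}$. The replacement error is $2\gamma_R\sum_\tau|\eta_{t(\tau)}|$ plus $O(\gamma_R^2 N_k)$. The sum $\sum_\tau|\eta_{t(\tau)}|$ is controlled by Cauchy--Schwarz as $\sqrt{N_k\sum_\tau \eta_{t(\tau)}^2}$.

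For $\sum_\tau\eta^2$ we use a second concentration step, at confidence $\delta/2$, applied to $\sum_t(\bar f_t - p_t^*)^2 \le 2\sum_t \eta_t^2 + 2\sum_t \Delta_t^2$. Since $|\Delta_t|\le1$, we have $\sum_t\Delta_t^2\le \omega^2$. For the noise part, $\E[\eta_t^2]\le 1/(4M)$, and a Bernstein/sub-exponential bound gives $\sum_t \eta_t^2 \le k/(4M) + \tfrac{3}{8M}\ln\tfrac{2}{\delta}$. Together these produce exactly $C_R(k,\omega,M,\delta)$ and hence the term $\gamma_R\sqrt{N_k C_R}$. The discretization term contributes $\gamma_R(\gamma_R N_k+\sqrt{N_k C_R})$ in the same way. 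Absorbing the $\tfrac12$ quadratic back into the left-hand side and multiplying through by constants (2 or 2.5) yields the coefficients $10, 5, 10$.

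The main obstacle is the noise cross-term. Its difficulty is that $\hat{\gD}_k$ is a filtered multiset with random multiplicities $1/p$, and the weights depend on confidence sets built from earlier data. We therefore need an ordering by collection episode under which each weight, multiplicity, and $\bar q_k$-difference is predictable, so that the martingale argument applies. If multiplicities break sub-Gaussianity, we plan to bound them using Step~4's guarantee, or to absorb them into the constants.
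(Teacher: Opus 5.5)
\textbf{Same route as the paper, but with one unresolved gap (shared with the paper).}

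Your plan is the paper's proof with the black box opened. The paper indexes the filtered pairs chronologically and invokes the weighted approximate-ERM lemma from the corruption-robust literature, with comparator $f_b=\bar q_k$ and mismatch $\zeta_i=|q_{R^*}(z_i)-\bar q_k(z_i)|+|\Delta_{t(\tau_i)}|$. Your basic inequality, per-cover-element martingale bound with union bound, and Cauchy--Schwarz plus $\sum\eta^2$ concentration are exactly that lemma's internals. Your AM--GM absorption of the discretization term, $2\gamma_R\sum w|q_{\hat R_k}-\bar q_k|\le 2\gamma_R\sqrt{N_k\cdot\mathrm{LHS}}$, makes explicit the one-line absorption the paper performs at the end.

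One constant is wrong. The bound $\sum_t\eta_t^2\le \frac{k}{4M}+\frac{3}{8M}\ln\frac2\delta$ fails for large $k$, because the fluctuations are of order $\sqrt{k}/M$. The standard bound is $2k\eta_R^2+3\eta_R^2\ln\frac2\delta=\frac{k}{2M}+\frac{3}{4M}\ln\frac2\delta$, and $2\sum\eta^2+2\omega^2$ with this bound gives exactly $C_R$. Also, $\sum_t\Delta_t^2\le\omega^2$ holds because $\sum_t\Delta_t^2\le(\sum_t|\Delta_t|)^2$, not because $|\Delta_t|\le 1$.

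The obstacle you flag is a genuine gap, and neither fallback closes it.

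\emph{Predictability fails.} Step~4 redraws $\hat{\gD}_k$ at every episode. The probability $p_k(z)$ depends on all of $\mathcal Z_k$ and on the weights of later samples. Those weights come from confidence sets fitted to data that can include the label $\bar f_{t(z)}$, so the inclusion and multiplicity of the sample collected at $t(z)$ can depend on $\eta_{t(z)}$. Hence $\sum_{\tau\in\hat{\gD}_k}w\,(g-\bar q_k)(z_\tau)\,\eta_{t(\tau)}$ is not a martingale transform. Being ``prefix-measurable in the filtered ordering'' is not predictability with respect to the noise filtration.

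\emph{Uniformity in $k$ fails.} Because the sets $\hat{\gD}_k$ are not nested, Ville's inequality along a single martingale gives no uniformity over $k$. A union bound over $k$ would put a $\ln K$ inside the log term.

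\emph{Multiplicities cannot be absorbed into constants.} Even granting predictability, a multiplicity $m=1/p_k(z)$ is unbounded. The sub-Gaussian proxy of an increment scales like $m^2w^2(g-\bar q_k)^2$, while the left-hand side carries only $m\,w\,(g-\bar q_k)^2$. Absorption therefore costs a factor $\max m$ in front of $\eta_R^2\ln(2N_T/\delta)$. For the same reason, $\sum_{\tau\in\hat{\gD}_k}\eta_{t(\tau)}^2=\sum_t m_t\eta_t^2$ is not controlled by a bound in terms of $k$.

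The paper does not resolve this either. It simply treats the chronologically indexed filtered pairs as an adapted sample sequence with unit multiplicities and applies the lemma. So you match the paper only by adopting that same convention. A rigorous version would fit and concentrate on the full history, where multiplicities are one and weights are predictable, as in \citet{wang2020reinforcement}. The filtered multiset would then be used only to build the confidence set and bonus via Lemma~\ref{lem:weighted-quadratic-preserve}, paying the $\varepsilon_{\mathrm{sub}}$ distortion that the paper already budgets in $\beta^{\mathrm{R}}_k$.
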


\begin{proof}
Fix \(k\in[K]\). For simplicity, index the filtered pairs in \(\hat{\gD}_k\) chronologically as $z_i \triangleq (\tau_i,\tau_{i,0})$, $i=1,\ldots,N_k$, and write $\hat q_k(z_i)\triangleq q_{\hat R_k}(\tau_i,\tau_{i,0})$, $\bar q_k(z_i)\triangleq \bar q_k(\tau_i,\tau_{i,0})$, $y_i \triangleq \bar f_{t(\tau_i)}$. Then, (\ref{eq:hp-comp-weighted-erm}) is exactly the weighted least-squares ERM $\hat q_k \in \argmin_{q\in\Delta\gF_T} \sum_{i=1}^{N_k} \frac{(\,q(z_i)-y_i\,)^2}{(\sigma_i^R)^2}$, where $y_i=q_{R^*}(z_i)+\Delta_{t(\tau_i)}+\eta_{t(\tau_i)}$. We now invoke the weighted approximate-ERM lemma with function class \(F=\Delta\gF_T\), samples \(z_i\), target function \(f^*=q_{R^*}\), in-class comparator \(f_b=\bar q_k\), noise sequence \(\eta_{t(\tau_i)}\), which is conditionally \(\eta_R\)-sub-Gaussian, model mismatch sequence \(\zeta_i = |q_{R^*}(z_i)-\bar q_k(z_i)| + |\Delta_{t(\tau_i)}|\). Since \(\|\bar q_k-q_{R^*}\|_\infty\le \gamma_R\), we have $|q_{R^*}(z_i)-\bar q_k(z_i)|\le \gamma_R$, for all $i$, and therefore
\begin{align}
\sum\nolimits_{i=1}^{N_k}\zeta_i \le \gamma_R N_k+\sum\nolimits_{i=1}^{N_k}|\Delta_{t(\tau_i)}| \le \gamma_R N_k+\omega.
\end{align}
Moreover, since Step~1 is solved exactly, the optimization error parameter in the weighted approximate-ERM lemma is $\epsilon_R'=0$.

Then, applying this lemma yields, with probability at least \(1-\delta\),
\begin{align}
\sum\nolimits_{i=1}^{N_k}\frac{(\hat q_k(z_i)-\bar q_k(z_i))^2}{(\sigma_i^R)^2} &\le 10\eta_R^2\ln\frac{2N_T(\gamma_R)}{\delta} + 5\sum\nolimits_{i=1}^{N_k} \frac{|\hat q_k(z_i)-\bar q_k(z_i)|\zeta_i}{(\sigma_i^R)^2} \nonumber \\
&\quad + 10\gamma_R\left(\gamma_R N_k+\sqrt{N_k\,C_R(k,\omega,M,\delta)}\right), \label{eq:stepA1-raw-erm}
\end{align}
where $C_R(k,\omega,M,\delta) = 2\left(
\omega^2+\frac{k}{2M}+\frac{3}{4M}\ln\frac{2}{\delta}
\right)$, because \(\eta_R^2=1/(4M)\).

Finally, decompose the cross-term in (\ref{eq:stepA1-raw-erm}) as $\zeta_i = |q_{R^*}(z_i)-\bar q_k(z_i)|+|\Delta_{t(\tau_i)}| \le \gamma_R+|\Delta_{t(\tau_i)}|$. Substituting this into (\ref{eq:stepA1-raw-erm}) and absorbing the resulting $5\gamma_R\sum_{i=1}^{N_k}\frac{|\hat q_k(z_i)-\bar q_k(z_i)|}{(\sigma_i^R)^2}$ term into the final \(10\gamma_R(\gamma_R N_k+\sqrt{N_k C_R})\) contribution by the same cover-approximation argument as in the weighted approximate-ERM lemma yields exactly (\ref{eq:hp-comp-weighted-erm-bound}). This proves the lemma.
\end{proof}

\subsubsection{Cross-term control via self-normalized weights}

The only term in (\ref{eq:hp-comp-weighted-erm-bound}) that can produce an undesired \(\omega\sqrt K\) dependence is the cross-term
\begin{align}
\sum\nolimits_{(\tau,\tau_0)\in\hat{\gD}_k} \big|q_{\hat R_k}(\tau,\tau_0)-\bar q_k(\tau,\tau_0)\big|\cdot |\Delta_{t(\tau)}| / (\sigma_{t(\tau)}^R)^2. \nonumber
\end{align}
The Step~1 self-normalized weights are designed precisely to control this term additively. Recall from Appendix~\ref{subsec:detailedsteps} that $\weightrrk = \frac{1}{2}[\drsum(R_k',\hat R_k,N_k+1)]_{\ge 1}^{-1/2}$, where \(R_k'\) attains the comparison width defining the bonus. This normalization implies the pointwise bound
\begin{align}
|q_{\hat R_k}(\tau,\tau_0)-\bar q_k(\tau,\tau_0)| / (\sigma_{t(\tau)}^R)^2 \le 2 (\weightrrk)^{-1} + \gamma_R, \forall (\tau,\tau_0)\in\hat{\gD}_k. \label{eq:pointwise-step1-bound}
\end{align}
Therefore, we have
\begin{align}
\sum\nolimits_{(\tau,\tau_0)\in\hat{\gD}_k}
|q_{\hat R_k}(\tau,\tau_0)-\bar q_k(\tau,\tau_0)\big|\cdot |\Delta_{t(\tau)}| / (\sigma_{t(\tau)}^R)^2 & \le \left(2 (\weightrrk)^{-1}+\gamma_R\right) \sum\nolimits_{(\tau,\tau_0)\in\hat{\gD}_k} |\Delta_{t(\tau)}| \nonumber \\
& \le \left(2 (\weightrrk)^{-1}+\gamma_R\right)\omega. \label{eq:step1-cross-term-bound}
\end{align}

\subsubsection{Sufficient choice for algorithmic parameters}

Set $\lambda_R \triangleq \alphar$, $\eta_R \triangleq \frac{1}{2\sqrt M}$, $\gamma_R \triangleq \frac{1}{K}$, and assume Step~1 is solved exactly. Then, a sufficient choice of the Step~1 confidence radius is
\begin{align}
\betar_k \ge & \lambda_R + 10\eta_R^2 \ln (2N_T(\gamma_R) / \delta) + 5\omega\left(2 (\weightrrk)^{-1}+\gamma_R\right) \label{eq:beta-R-choice-exact} \\
&\qquad + 10\gamma_R \left(\gamma_R N_k + \sqrt{N_k\,C_R(k,\omega,M,\delta)}\right) + c_{\mathrm{filt}} K\epsilon, \nonumber
\end{align}
where \(c_{\mathrm{filt}}>0\) is the constant from the Step~4 weighted filtered-history approximation lemma. Using \(N_k\le K\), \(\gamma_R=1/K\), and suppressing logarithmic factors, this becomes
\begin{align}
\betar_k = \tilde O \left( \frac{\log N_T}{M} + \omega (\weightrrk)^{-1} + K\epsilon \right). \label{eq:beta-R-choice-simplified}
\end{align}
Under the Step~1 weight design, the quantity \((\weightrrk)^{-1}\) is controlled by the weighted uncertainty level of the comparison class, which in turn is controlled by the eluder dimension \(d_T\). Consequently, at the theorem level,
\begin{align}
\betar_k = \tilde O \left( \frac{d_T\log N_T}{M} + \omega d_T + K\epsilon \right). \label{eq:beta-R-choice-theorem}
\end{align}

\begin{lemma}[High probability comparison event]\label{lem:hp-comparison-app}
Fix \(\delta\in(0,1)\). If \(\{\betar_k\}_{k=1}^K\) is chosen according to (\ref{eq:beta-R-choice-exact}), then with probability at least \(1-\delta\), we have $q_{R^*}\in \gQ_k, \forall k\in[K]$.
\end{lemma}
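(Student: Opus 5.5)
The plan is to work on the single good event $\mathcal{E}_R$ supplied by Lemma~\ref{lem:weighted-erm-comparison}, which holds with probability at least $1-\delta$ simultaneously for all $k\in[K]$. On this event we will show that, for every $k$, the weighted distance $\sum_{(\tau,\tau_0)\in\hat{\gD}_k}\weightrt\,[q_{R^*}(\tau,\tau_0)-q_{\hat R_k}(\tau,\tau_0)]^2$ is at most $\betar_k$. That inequality is exactly the membership condition for $\gQ_k$ in (\ref{eq:mainalgorithmestimatesteinerconfidenceset}), since $(\sigma^R_t)^{-2}=\weightrt$. Because the event is uniform in $k$, no additional union bound over episodes is needed.

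\textbf{Step 1 (from the cover point to the truth).} Lemma~\ref{lem:weighted-erm-comparison} controls the distance from $\hat q_k$ to the cover element $\bar q_k$, not to $q_{R^*}$ itself. We write $q_{R^*}-\hat q_k=(q_{R^*}-\bar q_k)+(\bar q_k-\hat q_k)$ and use $\|q_{R^*}-\bar q_k\|_\infty\le\gamma_R$ together with $w\le 1$. Expanding the square then gives
\begin{align*}
\sum \weightrt (q_{R^*}-\hat q_k)^2 \le \sum \weightrt(\bar q_k-\hat q_k)^2 + 2\gamma_R\sum \weightrt|\bar q_k-\hat q_k| + \gamma_R^2 N_k .
\end{align*}
The middle term is bounded by $2\gamma_R N_k$ because all functions take values in $[-1,1]$. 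With $\gamma_R=1/K$ and $N_k\le K$ (or the filtered multiset size, whose excess is absorbed below), these extra pieces are $O(1)$. They can therefore be folded into the slack term $10\gamma_R(\gamma_R N_k+\sqrt{N_kC_R})$ of (\ref{eq:beta-R-choice-exact}), together with $\lambda_R$ if needed.

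\textbf{Step 2 (bound the ERM terms).} We plug in the bound (\ref{eq:hp-comp-weighted-erm-bound}) from Lemma~\ref{lem:weighted-erm-comparison}. Its noise term $10\eta_R^2\ln(2N_T/\delta)$ already matches the corresponding term in (\ref{eq:beta-R-choice-exact}). For its cross term we invoke the self-normalized bound (\ref{eq:step1-cross-term-bound}), which gives at most $(2(\weightrrk)^{-1}+\gamma_R)\omega$. Multiplied by $5$, this matches the $5\omega(2(\weightrrk)^{-1}+\gamma_R)$ term of $\betar_k$, so the imperfection enters additively as intended.

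\textbf{Step 3 (filtering).} The confidence set is evaluated on the filtered multiset $\hat{\gD}_k$. The ERM lemma was also applied on $\hat{\gD}_k$, so the two sides are measured on the same data. The remaining gap comes from the filtered weights and the $\epsilon$-approximation property of Step~4 (sub-importance sampling): any discrepancy between filtered and full-history weighted norms is charged to the $c_{\mathrm{filt}}K\epsilon$ term in $\betar_k$. Summing the bounds from Steps~1--3 shows the weighted distance is at most $\betar_k$, hence $q_{R^*}\in\gQ_k$ for all $k$ on $\mathcal{E}_R$.

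\textbf{Main obstacle.} The delicate point is Step~2's use of (\ref{eq:pointwise-step1-bound}). The bonus weight $\weightrrk$ is defined through the maximizer $R_k'$ of the width, whereas the pointwise bound is needed for $\hat q_k-\bar q_k$. I would first justify this by noting that $q_{\bar R}$, up to $\gamma_R$, lies in the relevant prefix confidence set, so its directional discrepancy is dominated by the supremum defining $\Upsilon^{\mathrm R}$ and $\drsum(R_k',\cdot)$. This makes the argument circular across episodes, because membership at earlier episodes is what licenses the bound at episode $k$. I would break the circularity by induction on $k$: on $\mathcal{E}_R$, membership for all $t<k$ makes the weights valid at episode $k$, which then yields membership at $k$.
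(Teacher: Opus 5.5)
Your plan is essentially the paper's proof: on the event of Lemma~\ref{lem:weighted-erm-comparison}, you pass from $\bar q_k$ to $q_{R^*}$ via $\|\bar q_k-q_{R^*}\|_\infty\le\gamma_R$ and $w^{\mathrm R}_t\le 1$, insert the ERM bound and (\ref{eq:step1-cross-term-bound}), and compare with $\betar_k$. Your Step~3 is vacuous since both sides already live on $\hat{\gD}_k$, and your exact expansion of the square matches the constants of (\ref{eq:beta-R-choice-exact}) better than the paper's $(a+b)^2\le 2a^2+2b^2$, which doubles the $10\eta_R^2\ln(2N_T(\gamma_R)/\delta)$ term; the one condition is that $\lambda_R$ be a constant large enough to absorb your $O(\gamma_R N_k)$ leftover, because the $10\gamma_R(\cdot)$ slack is already consumed by the ERM bound. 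The induction in your last paragraph is unnecessary, because (\ref{eq:pointwise-step1-bound}) holds trivially from $w^{\mathrm R}_t\le 1$, $|q_{\hat R_k}-\bar q_k|\le 1+\gamma_R$ and $(\weightrrk)^{-1}\ge 2$; the justification you sketch would not deliver it anyway, since $\Upsilon^{\mathrm R}_i$ controls a normalized discrepancy from $\hat R_{t_i}$ rather than the raw gap $|q_{\hat R_k}-\bar q_k|$ at the $i$-th pair.
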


\begin{proof}
For each episode \(k\), define the event
\begin{align}
\mathcal E_k^R \triangleq
\left\{ \sum\nolimits_{(\tau,\tau_0)\in\hat{\gD}_k} \left(q_{\hat R_k}(\tau,\tau_0)-\bar q_k(\tau,\tau_0)\right)^2 / (\sigma_{t(\tau)}^R)^2 \le B_k^R \right\}, \label{eq:def-EkR}
\end{align}
where $B_k^R \triangleq 10\eta_R^2 \ln \frac{2N_T(\gamma_R)}{\delta} + 5\left(\frac{\sqrt{\lambda_R}}{2} (\weightrrk)^{-1}+\gamma_R\right)\omega + 10\gamma_R\left(\gamma_R N_k+\sqrt{N_k C_R(k,\omega,M,\delta)}\right)$. By Lemma~\ref{lem:weighted-erm-comparison} and the cross-term closure (\ref{eq:step1-cross-term-bound}), we have $\Pr \left(\bigcap_{k=1}^K \mathcal E_k^R\right)\ge 1-\delta$. Hence, it suffices to prove that on the event \(\bigcap_{k=1}^K \mathcal E_k^R\), one has \(q_{R^*}\in \gQ_k\) for every \(k\in[K]\).

Fix an arbitrary episode \(k\in[K]\), and suppose \(\mathcal E_k^R\) holds. By the definition of the confidence set \(\gQ_k\), it is enough to show that
\begin{align}
\sum\nolimits_{(\tau,\tau_0)\in\hat{\gD}_k} w_{t(\tau)}^R \left(q_{R^*}(\tau,\tau_0)-q_{\hat R_k}(\tau,\tau_0)\right)^2 \le \betar_k. \label{eq:goal-hp-comp}
\end{align}
Since \(w_t^R=(\sigma_t^R)^{-2}\), the left-hand side of (\ref{eq:goal-hp-comp}) can be written as $\sum_{(\tau,\tau_0)\in\hat{\gD}_k} (q_{R^*}(\tau,\tau_0)-q_{\hat R_k}(\tau,\tau_0))^2 / (\sigma_{t(\tau)}^R)^2$. Now decompose $q_{R^*}(\tau,\tau_0)-q_{\hat R_k}(\tau,\tau_0) = (q_{R^*}(\tau,\tau_0)-\bar q_k(\tau,\tau_0)) + (\bar q_k(\tau,\tau_0)-q_{\hat R_k}(\tau,\tau_0))$. Applying \((a+b)^2\le 2a^2+2b^2\), we obtain
\begin{align}
& \sum\nolimits_{(\tau,\tau_0)\in\hat{\gD}_k} \big(q_{R^*}(\tau,\tau_0)-q_{\hat R_k}(\tau,\tau_0)\big)^2 / (\sigma_{t(\tau)}^R)^2 \nonumber \\
&\quad \le 2\sum\nolimits_{(\tau,\tau_0)\in\hat{\gD}_k} \frac{(q_{\hat R_k}(\tau,\tau_0)-\bar q_k(\tau,\tau_0))^2}{(\sigma_{t(\tau)}^R)^2} + 2\sum\nolimits_{(\tau,\tau_0)\in\hat{\gD}_k} \frac{(\bar q_k(\tau,\tau_0)-q_{R^*}(\tau,\tau_0))^2}{(\sigma_{t(\tau)}^R)^2}. \label{eq:hp-comp-triangle}
\end{align}
We bound the two terms on the right-hand side separately. For the first term, since \(\mathcal E_k^R\) holds, we have
\begin{align}
\sum\nolimits_{(\tau,\tau_0)\in\hat{\gD}_k} (q_{\hat R_k}(\tau,\tau_0)-\bar q_k(\tau,\tau_0))^2 / (\sigma_{t(\tau)}^R)^2 \le B_k^R. \label{eq:first-term-bound}
\end{align}
For the second term, the covering approximation satisfies $\|\bar q_k-q_{R^*}\|_\infty\le \gamma_R$. Moreover, by construction \(w_t^R\in(0,1]\), equivalently \((\sigma_t^R)^2\ge 1\). Hence, we have $\frac{1}{(\sigma_{t(\tau)}^R)^2}\le 1$, and therefore
\begin{align}
\sum\nolimits_{(\tau,\tau_0)\in\hat{\gD}_k}
(\bar q_k(\tau,\tau_0)-q_{R^*}(\tau,\tau_0) )^2 / (\sigma_{t(\tau)}^R)^2 \le \sum\nolimits_{(\tau,\tau_0)\in\hat{\gD}_k} (\bar q_k(\tau,\tau_0)-q_{R^*}(\tau,\tau_0) )^2 \le \gamma_R^2 N_k. \label{eq:second-term-bound}
\end{align}
Substituting (\ref{eq:first-term-bound}) and (\ref{eq:second-term-bound}) into (\ref{eq:hp-comp-triangle}) yields
\begin{align}
\sum\nolimits_{(\tau,\tau_0)\in\hat{\gD}_k} w_{t(\tau)}^R (q_{R^*}(\tau,\tau_0)-q_{\hat R_k}(\tau,\tau_0) )^2 \le 2B_k^R+2\gamma_R^2N_k. \label{eq:pre-beta-final}
\end{align}
By the assumed choice of the confidence radius \(\betar_k\) in (\ref{eq:beta-R-choice-exact}), the right-hand side of (\ref{eq:pre-beta-final}) is at most \(\betar_k\). Consequently, $\sum_{(\tau,\tau_0)\in\hat{\gD}_k} w_{t(\tau)}^R (q_{R^*}(\tau,\tau_0)-q_{\hat R_k}(\tau,\tau_0) )^2 \le \betar_k$. Therefore, by the definition of the confidence set \(\gQ_k\) in (\ref{eq:mainalgorithmestimatesteinerconfidenceset}), we conclude that $q_{R^*}\in \gQ_k$. Since \(k\in[K]\) was arbitrary, this holds for all \(k\in[K]\) simultaneously on the event \(\bigcap_{k=1}^K \mathcal E_k^R\), which has probability at least \(1-\delta\). Hence, $\Pr \left(q_{R^*}\in \gQ_k, \forall k\in[K]\right)\ge 1-\delta$. This proves the lemma.
\end{proof}

\subsection{Step B: High-Probability Transition Event}\label{app:hp-transition}

We now prove that the true transition kernel lies in the Step~2 confidence sets with high probability. In contrast to the comparison-side analysis, the transition targets are not directly corrupted. Once the value function is chosen before observing the next state, the resulting target is conditionally unbiased. Thus, the transition event argument follows a weighted value-targeted least-squares analysis over the transition class, without an additional imperfection term in the confidence radius.

For each step \(h\in[H]\) and each filtered sample $\zeta=(s,a,s')\in \hat{\gT}_{k,h}$, recall that Step~2 uses the history-measurable value function $V_{t(\zeta),h+1}\in \gV_{h+1}$ chosen before observing \(s'\). Define the regression target $Y_\zeta \triangleq V_{t(\zeta),h+1}(s')$, $\mu_P(\zeta) \triangleq \langle P_h(\cdot\mid s,a), V_{t(\zeta),h+1}\rangle$. Since \(s'\sim P_h^*(\cdot\mid s,a)\) and \(V_{t(\zeta),h+1}\) is fixed given the history, $\E \left[Y_\zeta \mid s,a,\mathcal H_{t(\zeta)-1}\right] = \mu_{P^*}(\zeta)$. Hence, $\xi_\zeta \triangleq Y_\zeta-\mu_{P^*}(\zeta)$ is conditionally zero-mean. Moreover, since \(V_{t(\zeta),h+1}\in[0,H]^{\gS}\), we have $0\le Y_\zeta \le H$, so \(\xi_\zeta\) is conditionally \(\eta_P\)-sub-Gaussian with $\eta_P \triangleq \frac{H}{2}$.

We let $N_k^P \triangleq \sum_{h=1}^H |\hat{\gT}_{k,h}|$ be the total size of the filtered transition multiset before episode \(k\). The Step~2 estimator can be written as the weighted least-squares problem
\begin{align}
\hat p_k \in \argmin\nolimits_{P\in\gP} \sum\nolimits_{h=1}^H \sum\nolimits_{\zeta\in \hat{\gT}_{k,h}} w^P_{t(\zeta),h} (\mu_P(\zeta)-Y_\zeta)^2. \label{eq:transition-weighted-erm-proof}
\end{align}
Define $(\sigma_{\zeta,h}^P)^2 \triangleq \frac{1}{w^P_{t(\zeta),h}} = [\Upsilon_{\zeta,h}^{\mathrm P}]_{\ge 1}^{1/2}$. The confidence set is defined through the same weighted empirical quadratic form
\begin{align}
\gP_k = \left\{ P\in\gP: \sum\nolimits_{h=1}^H \sum\nolimits_{\zeta\in \hat{\gT}_{k,h}} w^P_{t(\zeta),h} (\mu_P(\zeta)-\mu_{\hat p_k}(\zeta))^2 \le \betap_k \right\}. \label{eq:transition-confidence-proof}
\end{align}
Fix a cover radius \(\gamma_P>0\), and let $N_P(\gamma_P)\triangleq \gN(\Delta\gF_P,\|\cdot\|_\infty,\gamma_P)$. For each episode \(k\), let \(\bar \mu_k\in \Delta\gF_P\) be a \(\gamma_P\)-covering approximation of \(\mu_{P^*}\), i.e., $\|\bar \mu_k-\mu_{P^*}\|_\infty \le \gamma_P$.

\begin{lemma}[Weighted ERM inequality in transition space]\label{lem:weighted-erm-transition}
Fix \(\delta\in(0,1)\). Suppose Step~2 solves (\ref{eq:transition-weighted-erm-proof}) exactly. Then, with probability at least \(1-\delta\), for all \(k\in[K]\),
\begin{align}
\sum_{h=1}^H \sum_{\zeta\in \hat{\gT}_{k,h}} \frac{\big(\mu_{\hat p_k}(\zeta)-\mu_{P^*}(\zeta)\big)^2}{(\sigma_{\zeta,h}^P)^2} \le 10\eta_P^2 \ln \frac{2N_P(\gamma_P)}{\delta} + 10\gamma_P \left(\gamma_P N_k^P + \sqrt{N_k^P\,C_P(N_k^P,H,\delta)}\right), \label{eq:hp-trans-weighted-erm-bound}
\end{align}
where $\eta_P \triangleq \frac{H}{2}$, $N_k^P \triangleq \sum_{h=1}^H |\hat{\gT}_{k,h}|$, and $C_P(t,H,\delta) \triangleq 2 \left(2\eta_P^2 t + 3\eta_P^2 \ln\frac{2}{\delta}\right)$. Equivalently, since \(\eta_P=H/2\), we  have $C_P(t,H,\delta) = H^2 t + \frac{3H^2}{2}\ln\frac{2}{\delta}$.
\end{lemma}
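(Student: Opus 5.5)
The plan is to mirror the proof of Lemma~\ref{lem:weighted-erm-comparison}, invoking the same weighted approximate-ERM lemma with zero model-mismatch budget from imperfection. Since the transition targets carry no imperfection, the cross-term in (\ref{eq:hp-comp-weighted-erm-bound}) disappears up to covering error. I index the filtered samples of all stages $h\in[H]$ chronologically by collection episode, ties broken by $h$, as $\zeta_1,\ldots,\zeta_{N_k^P}$. The objective (\ref{eq:transition-weighted-erm-proof}) is then a single weighted least-squares ERM over the class $F=\Delta\gF_P$ on the domain $\gX_P$, with inputs $x_i=(h_i,s_i,a_i)$, weights $1/(\sigma_{\zeta_i,h_i}^P)^2\le 1$, and responses $Y_{\zeta_i}=\mu_{P^*}(\zeta_i)+\xi_{\zeta_i}$. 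A technical point is that $\mu_P$ depends on the probe $V_{t(\zeta),h+1}$. I treat the pair $(P,V)$ as indexing an element of $\gF_P$ evaluated at $x_i$, which is exactly why the covering number is taken over $\Delta\gF_P$.

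The first step is to verify the hypotheses of the ERM lemma.
\begin{enumerate}
\item The noise $\xi_{\zeta_i}$ is conditionally zero-mean and $\eta_P$-sub-Gaussian with $\eta_P=H/2$, with respect to the filtration generated by the history up to the moment just before $s'$ is observed. This is argued in the paragraph preceding the lemma, using that $V_{t(\zeta),h+1}$ is chosen before $s'$ is observed and that $Y_\zeta\in[0,H]$.
\item The weights $w^P_{t(\zeta),h}$ are predictable, since by Appendix~\ref{app:weights-transition} they use only prefix information.
\item The in-class comparator is $f_b=\bar\mu_k$, with mismatch $\zeta_i=|\mu_{P^*}(\zeta_i)-\bar\mu_k(\zeta_i)|\le\gamma_P$. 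So $\sum_i\zeta_i\le\gamma_P N_k^P$, and there is no $\omega$ contribution.
\item The optimization error is zero because Step~2 is solved exactly.
\end{enumerate}

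Applying the lemma then gives a bound on $\sum_i(\mu_{\hat p_k}-\bar\mu_k)^2/\sigma^2$ of the form $10\eta_P^2\ln(2N_P(\gamma_P)/\delta)+5\sum_i|\mu_{\hat p_k}-\bar\mu_k|\zeta_i/\sigma^2+10\gamma_P(\gamma_P N_k^P+\sqrt{N_k^P C_P})$. Here $C_P$ arises from a high-probability bound on $\sum_i\xi_{\zeta_i}^2\le 2(2\eta_P^2 t+3\eta_P^2\ln(2/\delta))$, obtained via a sub-exponential/Freedman bound on squared sub-Gaussians. The cross-term is at most $\gamma_P\sum_i|\mu_{\hat p_k}-\bar\mu_k|\le\gamma_P\cdot 2H\cdot N_k^P$. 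I absorb it into the last term exactly as in the comparison lemma, using Cauchy--Schwarz against $\sqrt{N_k^P C_P}$.

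Finally, I pass from $\bar\mu_k$ to $\mu_{P^*}$ in the left-hand side. Using $(a+b)^2\le 2a^2+2b^2$ and $\|\bar\mu_k-\mu_{P^*}\|_\infty\le\gamma_P$, this adds at most $O(\gamma_P^2 N_k^P)$, which is absorbed into the final term after adjusting constants. A union bound over $k\in[K]$ is handled inside the ERM lemma's time-uniform martingale concentration, via a stopping-time argument on the growing index $i$.

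The main obstacle is making the filtration argument rigorous given the Step~4 filtering. The filtered multiset and its multiplicities $1/p_k(z)$ are resampled at each $k$ and depend on all past data, including the $s'$ observations. As a result, the sample sequence at episode $k$ is not adapted in the naive sense. I would first try to condition on the full raw history and note that the filtered objective is a reweighting of the raw-history objective with bounded multiplicities. This lets me apply the concentration on the raw sequence, which is adapted, and transfer it with the Step~4 $\epsilon$-approximation guarantee. The resulting $O(K\epsilon)$ slack would be accounted for in $\betap_k$, as $c_{\mathrm{filt}}K\epsilon$ was in (\ref{eq:beta-R-choice-exact}), rather than in this lemma.
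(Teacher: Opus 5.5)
\textbf{The step that fails.} Your plan is the paper's plan: invoke the weighted approximate-ERM lemma with no imperfection term. The problem is how you instantiate it. Taking the comparator to be the cover element $f_b=\bar\mu_k$ and then ``absorbing after adjusting constants'' does not give the inequality as stated. The constants are fixed in the statement, and they are reused verbatim in (\ref{eq:beta-P-choice-exact}) to guarantee $P^*\in\gP_k$ in Lemma~\ref{lem:hp-transition-app}. Your route loses them in three places:
\begin{itemize}
\item With $f_b=\bar\mu_k$, the ERM lemma's mismatch budget is $\gamma_P N_k^P$ rather than $0$. Its constant is then $C_1(t,\gamma_P N_k^P)=C_P+2\gamma_P^2(N_k^P)^2$, not $C_P$. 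The $C_P$ term is not purely a noise bound once the mismatch is nonzero.
\item Your own bound on the cross term is $5\cdot 2H\gamma_P N_k^P=10H\gamma_P N_k^P$. Since $\sqrt{N_k^P C_P}\ge H N_k^P$, absorbing it doubles the $10\gamma_P\sqrt{N_k^P C_P}$ term.
\item The final step $(a+b)^2\le 2a^2+2b^2$ doubles the leading $10\eta_P^2\ln(2N_P(\gamma_P)/\delta)$ term. Nothing on the right-hand side can absorb that, since the remaining terms all scale with $\gamma_P$.
\end{itemize}
You therefore end with a bound of the right order but with roughly doubled constants. With that weaker bound, the stated choice of $\betap_k$ no longer certifies $P^*\in\gP_k$.

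\textbf{The missing idea: realizability.} Since $P^*\in\gP$, the truth $\mu_{P^*}$ itself lies in $\gF_P$, so you can take $f_b=f^*=\mu_{P^*}$. Then:
\begin{itemize}
\item the mismatch sequence is identically zero;
\item the cross term vanishes identically, not merely up to covering error;
\item the constant is exactly $C_1(t,0)=C_P$;
\item the left-hand side is already $(\mu_{\hat p_k}-\mu_{P^*})^2$, so no triangle step is needed.
\end{itemize}
The $\gamma_P$-cover enters only through the union bound inside the ERM lemma. That is where $\ln N_P(\gamma_P)$ and the $10\gamma_P(\gamma_P N_k^P+\sqrt{N_k^P C_P})$ discretization term come from. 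This is precisely the paper's proof.

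\textbf{Your last paragraph.} This is also a departure from the paper. The paper simply enumerates the filtered samples chronologically and applies the ERM lemma to them directly; it does not discuss the adaptedness issue you raise. Your proposed repair goes through the raw history and moves an $O(K\epsilon)$ slack into $\betap_k$. As written, it would therefore establish a different, slackened statement rather than the lemma as given.
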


\begin{proof}
Fix \(k\in[K]\). Enumerate the filtered transition samples in \(\bigcup_{h=1}^H \hat{\gT}_{k,h}\) chronologically as \(\zeta_1,\ldots,\zeta_{N_k^P}\), where each \(\zeta_i\) consists of a step index \(h_i\), a state-action pair \((s_i,a_i)\), a next state \(s_i'\), and the probe/value function \(V_i \equiv V_{t(\zeta_i),h_i+1}\). Define $y_i \triangleq Y_{\zeta_i}=V_i(s_i')$, $\mu^*(\zeta_i) \triangleq \mu_{P^*}(\zeta_i) = \big\langle P_{h_i}^*(\cdot\mid s_i,a_i),V_i\big\rangle$. Because \(V_i\) is chosen before observing \(s_i'\), we have $\E \left[y_i \mid \mathcal H_{t(\zeta_i)-1},s_i,a_i,V_i\right] = \mu^*(\zeta_i)$. Hence, the noise $\xi_i \triangleq y_i-\mu^*(\zeta_i)$ is conditionally zero-mean. Moreover, since \(V_i(\cdot)\in[0,H]\), we have \(y_i\in[0,H]\), and therefore, by Hoeffding's lemma, \(\xi_i\) is conditionally \(\eta_P\)-sub-Gaussian with $\eta_P=\frac{H}{2}$.

Next, define the weighted variance $(\sigma_i^P)^2 \triangleq (\sigma_{\zeta_i,h_i}^P)^2 = 1 / w_{t(\zeta_i),h_i}^P$. Then, the estimator in (\ref{eq:transition-weighted-erm-proof}) can be rewritten as $\hat\mu_k \in \argmin_{\mu\in \gF_P} \sum_{i=1}^{N_k^P} (\mu(\zeta_i)-y_i)^2 / (\sigma_i^P)^2$, where \(\hat\mu_k(\zeta_i)=\mu_{\hat p_k}(\zeta_i)\). Since \(P^*\in\gP\), the truth
\(\mu^*=\mu_{P^*}\) belongs to the class \(\gF_P\). Let \(\mathcal C_P\) be a \(\gamma_P\)-cover of \(\gF_P\) under \(\|\cdot\|_\infty\), with cardinality $|\mathcal C_P| = N_P(\gamma_P)$. Choose \(\bar\mu_k\in\mathcal C_P\) such that $\|\bar\mu_k-\mu^*\|_\infty \le \gamma_P$.

We now apply the weighted approximate-ERM inequality to the function class \(\gF_P\) with target function \(f^*=\mu^*=\mu_{P^*}\), comparator \(f_b=\mu^*\), exact ERM, so the optimization error parameter is \(0\), sub-Gaussian noise parameter \(\eta=\eta_P\), no corruption/misspecification term. Since \(f_b=f^*\), the cross-term in the weighted approximate-ERM bound vanishes identically. Therefore, with probability at least \(1-\delta\), simultaneously for all \(k\in[K]\), we have
\begin{align}
\sum_{i=1}^{N_k^P} \frac{(\hat\mu_k(\zeta_i)-\mu^*(\zeta_i))^2}{(\sigma_i^P)^2} \le 10\eta_P^2 \ln \frac{2N_P(\gamma_P)}{\delta} + 10\gamma_P \left(\gamma_P N_k^P+\sqrt{N_k^P C_1(N_k^P,0)}\right), \label{eq:transition-weighted-erm-generic}
\end{align}
where, by the definition of \(C_1\) in the weighted approximate-ERM inequality, $C_1(t,0)=2 \left(2t\eta_P^2+3\eta_P^2\ln\frac{2}{\delta}\right)$. Define $C_P(t,H,\delta)\triangleq C_1(t,0) = 2 \left(2t\eta_P^2+3\eta_P^2\ln\frac{2}{\delta}\right)$. Then, (\ref{eq:transition-weighted-erm-generic}) becomes
\begin{align}
\sum_{i=1}^{N_k^P} \frac{(\hat\mu_k(\zeta_i)-\mu^*(\zeta_i))^2}{(\sigma_i^P)^2} \le 10\eta_P^2 \ln \frac{2N_P(\gamma_P)}{\delta} + 10\gamma_P \left(\gamma_P N_k^P+\sqrt{N_k^P\,C_P(N_k^P,H,\delta)}\right). \label{eq:transition-weighted-erm-final-indexed}
\end{align}

Finally, reverting from the index \(i\) back to the original notation \((h,\zeta)\) yields
\[
\sum_{h=1}^H \sum_{\zeta\in \hat{\gT}_{k,h}}
\frac{\big(\mu_{\hat p_k}(\zeta)-\mu_{P^*}(\zeta)\big)^2}{(\sigma_{\zeta,h}^P)^2}
\le
10\eta_P^2 \ln \frac{2N_P(\gamma_P)}{\delta}
+
10\gamma_P\!\left(\gamma_P N_k^P+\sqrt{N_k^P\,C_P(N_k^P,H,\delta)}\right),
\]
which is exactly \eqref{eq:hp-trans-weighted-erm-bound}. This proves the lemma.
\end{proof}

\subsubsection{Sufficient choice for algorithmic parameters}

To convert the weighted transition ERM bound into the high-probability event \(P^*\in \gP_k\), it suffices to choose \(\beta_k^P\) large enough to dominate the weighted empirical discrepancy between \(\mu_{\hat p_k}\) and \(\mu_{P^*}\) on the filtered transition samples. Since \(P^*\in\gP\) and the transition targets are conditionally unbiased, no additional corruption term is needed in the transition confidence radius. Let $\eta_P \triangleq \frac{H}{2}$, $N_k^P \triangleq \sum_{h=1}^H |\hat{\gT}_{k,h}|$. Then, a sufficient choice is
\begin{align}
\betap_k \ge 10\eta_P^2 \ln \frac{2N_P(\gamma_P)}{\delta} + 10\gamma_P \left(\gamma_P N_k^P + \sqrt{N_k^P\,C_P(N_k^P,H,\delta)}\right), \label{eq:beta-P-choice-exact}
\end{align}
where $C_P(t,H,\delta) \triangleq 2 \left(2\eta_P^2 t + 3\eta_P^2 \ln\frac{2}{\delta}\right)$. In particular, with \(\eta_P=H/2\), $C_P(t,H,\delta) = H^2 t + \frac{3H^2}{2}\ln\frac{2}{\delta}$. A convenient theorem-level specialization is to take \(\gamma_P=1/K\), which yields
\begin{align}
\betap_k = \tilde O \left(H^2 \log N_P(\gamma_P)\right). \label{eq:beta-P-choice-simplified}
\end{align}
Under the usual eluder-dimension control on the transition class, this is summarized as
\begin{align}
\betap_k = \tilde O \left(d_P H \log N_P\right). \label{eq:beta-P-choice-theorem}
\end{align}

\begin{lemma}[High-probability transition event]\label{lem:hp-transition-app}
Fix \(\delta\in(0,1)\). If \(\{\betap_k\}_{k=1}^K\) is chosen according to (\ref{eq:beta-P-choice-exact}), then with probability at least \(1-\delta\), we have $P^*\in \gP_k$, $\forall k\in[K]$.
\end{lemma}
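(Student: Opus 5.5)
The argument follows Lemma~\ref{lem:hp-comparison-app}, with the simplification that the transition targets carry no imperfection term. We define, for each episode $k$, the event
\[
\mathcal E_k^P \triangleq \Big\{ \sum\nolimits_{h=1}^H \sum\nolimits_{\zeta\in\hat{\gT}_{k,h}} \big(\mu_{\hat p_k}(\zeta)-\mu_{P^*}(\zeta)\big)^2/(\sigma_{\zeta,h}^P)^2 \le B_k^P \Big\},
\]
where $B_k^P$ is the right-hand side of (\ref{eq:hp-trans-weighted-erm-bound}). By Lemma~\ref{lem:weighted-erm-transition}, the intersection $\bigcap_{k=1}^K \mathcal E_k^P$ holds with probability at least $1-\delta$. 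The uniformity over $k$ is already part of that lemma, so no further union bound is needed. It remains to show that on this intersection $P^*\in\gP_k$ for every $k$.

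Fix $k$ and assume $\mathcal E_k^P$ holds. By (\ref{eq:transition-confidence-proof}), membership of $P^*$ in $\gP_k$ means
\[
\sum_{h}\sum_{\zeta} w^P_{t(\zeta),h}\big(\mu_{P^*}(\zeta)-\mu_{\hat p_k}(\zeta)\big)^2 \le \betap_k .
\]
Since $w^P_{t(\zeta),h} = (\sigma_{\zeta,h}^P)^{-2}$, the left-hand side is exactly the quantity controlled by $\mathcal E_k^P$.

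Unlike the comparison case, the lemma compares $\hat\mu_k$ directly with the truth $\mu^*=\mu_{P^*}$ (the comparator is $f_b=f^*$). So no triangle inequality through the cover element $\bar\mu_k$ is needed, and the factor $2$ that appeared in (\ref{eq:pre-beta-final}) does not arise. The left-hand side is therefore at most $B_k^P$. By the choice (\ref{eq:beta-P-choice-exact}), $B_k^P\le\betap_k$, which gives $P^*\in\gP_k$. If one instead routes through $\bar\mu_k$, the argument of (\ref{eq:second-term-bound}) using $(\sigma^P)^2\ge1$ adds only $2\gamma_P^2N_k^P$. Such lower-order terms are absorbed by enlarging $\betap_k$ by a constant factor, which does not change (\ref{eq:beta-P-choice-simplified}).

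The main point needing care is the one hidden inside Lemma~\ref{lem:weighted-erm-transition}: the filtered samples are used together with history-dependent weights and history-dependent probes $V_{t(\zeta),h+1}$. The martingale structure must therefore be taken with respect to the filtration $\mathcal H_{t(\zeta)-1}$ augmented by $(s,a,V)$. I would check that both the weights $w^P_{t(\zeta),h}$ and the Step~4 multiplicities are predictable with respect to this filtration. If the multiplicities are not predictable, I would attribute the discrepancy to the Step~4 approximation term, mirroring the $c_{\mathrm{filt}}K\epsilon$ slack in (\ref{eq:beta-R-choice-exact}). With that verified, the lemma applies verbatim and the inclusion holds for all $k$ simultaneously with probability at least $1-\delta$.
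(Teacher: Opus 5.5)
Your proposal is correct and matches the paper's proof. You define $\mathcal E_k^P$ from the bound of Lemma~\ref{lem:weighted-erm-transition} and identify its left-hand side with the confidence-set quantity via $w^P_{t(\zeta),h}=(\sigma^P_{\zeta,h})^{-2}$, with no detour through $\bar\mu_k$. You then conclude from $\betap_k\ge B_k^P$. Your remarks on the predictability of the weights and of the Step~4 multiplicities concern the hypotheses of that earlier lemma, which the paper likewise invokes without re-verifying.
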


\begin{proof}
For each episode \(k\), define the event
\begin{align}
\mathcal E_k^P \triangleq \left\{ \sum\nolimits_{h=1}^H \sum\nolimits_{\zeta\in \hat{\gT}_{k,h}} (\mu_{\hat p_k}(\zeta)-\mu_{P^*}(\zeta))^2 / (\sigma_{\zeta,h}^P)^2 \le B_k^P \right\}, \label{eq:def-EkP}
\end{align}
where $B_k^P \triangleq 10\eta_P^2 \ln \frac{2N_P(\gamma_P)}{\delta} + 10\gamma_P \left(\gamma_P N_k^P + \sqrt{N_k^P\,C_P(N_k^P,H,\delta)}\right)$. By Lemma~\ref{lem:weighted-erm-transition}, we have $\Pr \left(\bigcap_{k=1}^K \mathcal E_k^P\right)\ge 1-\delta$. Fix \(k\in[K]\), and suppose \(\mathcal E_k^P\) holds. Since $w^P_{t(\zeta),h} = (\sigma_{\zeta,h}^P)^{-2}$, the definition of \(\mathcal E_k^P\) implies $\sum\nolimits_{h=1}^H \sum\nolimits_{\zeta\in \hat{\gT}_{k,h}} w^P_{t(\zeta),h} \big(\mu_{\hat p_k}(\zeta)-\mu_{P^*}(\zeta)\big)^2 \le B_k^P$. By the choice \(\betap_k\ge B_k^P\), it follows that
\begin{align}
\sum\nolimits_{h=1}^H \sum\nolimits_{\zeta\in \hat{\gT}_{k,h}} w^P_{t(\zeta),h} \big(\mu_{\hat p_k}(\zeta)-\mu_{P^*}(\zeta)\big)^2 \le \betap_k.
\end{align}
Therefore, by the definition of the confidence set \(\gP_k\) in (\ref{eq:transition-confidence-proof}), we conclude that \(P^*\in \gP_k\).

Since \(k\) was arbitrary, this holds simultaneously for all \(k\in[K]\) on the event
\(\bigcap_{k=1}^K \mathcal E_k^P\), which has probability at least \(1-\delta\). This proves the lemma.
\end{proof}

\subsection{Step C: Filtering via Weighted Sub-Importance Sampling}\label{app:filtering-weighted}

The role of Step~4 is to maintain a weighted core set on which the empirical quadratic forms used in Steps~1--2 are approximately preserved, while the effective complexity of the resulting bonus class is controlled by the weighted uncertainty level. This follows the stabilization-by-subsampling idea in \citet{wang2020reinforcement}, but the extension here is nontrivial because the empirical objectives in Steps~1--2 are weighted by history-dependent uncertainty terms. Hence, the relevant importance notion must also be defined in the weighted geometry.

\begin{lemma}[Weighted sensitivity sum bound]\label{lem:weighted-sensitivity-sum}
For any weighted function class \(\Fcal\), any weighted history sequence \(Z_1,\ldots,Z_T\), and any \(\lambda>0\),
\begin{align}
\sum\nolimits_{z\in Z_t}\mathrm{Sens}_{\lambda,\sigma,\Fcal,Z_t}(z) \le \sum\nolimits_{s=1}^t D_{\lambda,\sigma,\Fcal}(Z_s)^2, \label{eq:sens-sum-by-D}
\end{align}
where \(D_{\lambda,\sigma,\Fcal}(Z_s)\) is the weighted uncertainty quantity defined in the robust-weight
analysis. Consequently, $\sum_{z\in Z_t}\mathrm{Sens}_{\lambda,\sigma,\Fcal,Z_t}(z) = \tilde O (\dim_E(\Fcal,\lambda/T))$.
\end{lemma}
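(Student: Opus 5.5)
We argue as in the sensitivity analysis of \citet{wang2020reinforcement}, but in the weighted geometry $\|f-g\|_{\gZ,w}^2$ of Step~4. Each sensitivity will be dominated by a self-normalized ``directional discrepancy'' of the kind in (\ref{eqdf:rewarddistance-new}) and (\ref{eqdf:transitiondistance-new}). The resulting sum will then be controlled by a weighted elliptical-potential, eluder-type counting argument.

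\textbf{Step 1: pointwise bound.} Order the samples of $Z_t$ chronologically as $z_1,\dots,z_{|Z_t|}$, where $z_s$ is the element added at time $s$. Fix $f,g\in\Fcal$ with $\|f-g\|_{Z_t,w}^2\ge\lambda$.

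Since the full-history norm dominates any prefix norm plus $\lambda$ (up to a factor $2$), we get
\[
\frac{w(z_s)(f(z_s)-g(z_s))^2}{\|f-g\|_{Z_t,w}^2}
\le \frac{2\,w(z_s)(f(z_s)-g(z_s))^2}{\lambda+\|f-g\|_{Z_{s-1},w}^2}.
\]
Taking the supremum over $f,g$ bounds $\mathrm{Sens}_{\lambda,\sigma,\Fcal,Z_t}(z_s)$ by a prefix-normalized width. Call this width $D_{\lambda,\sigma,\Fcal}(Z_s)^2$; up to the truncation $[\cdot]_{\ge1}$ and the factor $\lambda$, it is exactly the quantity $\Upsilon$ of Appendix~\ref{subsec:detailedsteps}.

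The key point is that the right-hand side depends only on the prefix $Z_s$, not on $Z_t$. Summing over $s\le t$ then gives (\ref{eq:sens-sum-by-D}).

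\textbf{Step 2: bound $\sum_s \min\{1,D(Z_s)^2\}$ by the eluder dimension.} Sensitivities are at most $1$, so we may cap each term at $1$. Then run the standard Russo--Van Roy counting argument in the weighted norm.

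For a threshold $\alpha$, count the $s$ with $D(Z_s)^2>\alpha$. At each such $s$, some pair $f,g$ has
\[
w(z_s)(f-g)(z_s)^2>\alpha\bigl(\lambda+\|f-g\|_{Z_{s-1},w}^2\bigr).
\]
Because the weights lie in $(0,1]$, this makes $z_s$ an $\epsilon$-independent point in the sense of Definition~\ref{definition:defineeluderdimension}, with $\epsilon^2\gtrsim\alpha\lambda$. Splitting the prefix into disjoint subsequences, as in the eluder lemma, shows that at most $O(\dim_E(\Fcal,\sqrt{\lambda/T})/\alpha)$ such indices exist.

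A dyadic peeling over $\alpha\in\{2^{-j}\}$, truncated at $\alpha=1/T$ (the remaining terms contribute $O(1)$), gives
\[
\sum_s \min\{1,D(Z_s)^2\}=\tilde O\bigl(\dim_E(\Fcal,\lambda/T)\bigr).
\]

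\textbf{Main obstacle.} The hardest step is the independence claim in Step~2 with history-adaptive weights. The weight $w(z_s)$ multiplies the new point, while earlier points enter through their own, different weights. A small $w(z_s)$ would seem to weaken independence.

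I will first try to exploit that every $w\le1$. The numerator condition then implies the unweighted condition $(f-g)(z_s)^2>\alpha\lambda$. The denominator condition gives $\|f-g\|_{Z_{s-1},w}^2$ small, which does \emph{not} immediately give a small unweighted prefix norm. To handle this, I will run the eluder argument on the rescaled class $\sqrt{w}\,(f-g)$. Its eluder dimension is bounded by $\dim_E(\Delta\Fcal)$ up to logarithmic factors, because the weights are predictable, $[\cdot]_{\ge1}$-truncated, and take only $O(\log T)$ dyadic levels.

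Finally, I will group samples by dyadic weight level and apply the counting within each level. This costs an extra $\log T$ factor, which is hidden in $\tilde O$.
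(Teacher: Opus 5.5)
Your two-step route is the paper's own. Its proof only asserts that each sensitivity is dominated by a prefix-normalized weighted uncertainty and then appeals to a weighted eluder bound. Your Step~1 correctly supplies the first part: from $\|f-g\|_{Z_t,w}^2\ge\lambda$ and $Z_{s-1}\subseteq Z_t$ you get $2\|f-g\|_{Z_t,w}^2\ge\lambda+\|f-g\|_{Z_{s-1},w}^2$, you absorb the factor $2$ into $D$, and you use $\mathrm{Sens}\le 1$.

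The gap is the count in Step~2, and it already appears when all weights equal $1$. Suppose you fix a single scale $\epsilon^2\approx\alpha\lambda$ for every index with $D(Z_s)^2>\alpha$. The witness pair then only satisfies $\|f-g\|_{Z_{s-1}}^2<(f-g)(z_s)^2/\alpha\le B^2/\alpha$, where $B$ bounds $|f-g|$. Each disjoint subsequence on which $z_s$ is $\epsilon$-dependent carries mass more than $\epsilon^2\approx\alpha\lambda$. So $z_s$ is $\epsilon$-dependent on fewer than about $B^2/(\alpha^2\lambda)$ subsequences. That gives $O(\dim_E/(\alpha^2\lambda))$ exceeding indices, not $O(\dim_E/\alpha)$. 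After your peeling over $\alpha=2^{-j}\ge 1/T$, this sums to order $T\dim_E/\lambda$, which is vacuous. For the same reason, $z_s$ is not $\epsilon$-independent of the whole prefix, contrary to what Step~2 asserts.

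The missing ingredient is a second dyadic peeling over the \emph{scale} of the witness, as in the sensitivity-sum bound of \citet{wang2020reinforcement}:
\begin{itemize}
\item Group the exceeding indices by $\lambda+\|f-g\|_{Z_{s-1},w}^2\in[\gamma,2\gamma)$, and use $\epsilon^2=\alpha\gamma$ within a group.
\item The prefix mass is then below $2\gamma$, while each dependent subsequence carries more than $\alpha\gamma$. So there are only $O(1/\alpha)$ such subsequences.
\item This gives $O(\dim_E(\Fcal,\sqrt{\alpha\lambda})/\alpha)$ indices per group, and the number of groups adds only an $O(\log(T/\lambda))$ factor.
\end{itemize}

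With this in place, your weight-level grouping works. Inside a level $c$ the weights agree up to a factor $2$, so you take $\epsilon^2=\alpha\gamma/c\ge\alpha\lambda$. The lower bound on the prefix needs only same-group points, while the upper bound uses the whole prefix.

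However, the reason only logarithmically many weight levels matter is not the $[\cdot]_{\ge 1}$ truncation, which only gives $w\le 1$. It is that $D(Z_s)^2\le B^2w(z_s)/\lambda$, so indices with $w(z_s)\le\alpha\lambda/B^2$ can never exceed the threshold $\alpha$.

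Finally, your $D$ is not ``exactly'' $\Upsilon$: $\Upsilon$ takes a supremum only over the confidence set around $\hat R_{t_i}$ and has an unweighted numerator. You never use that identification, so this is harmless.
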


\begin{proof}
The inequality (\ref{eq:sens-sum-by-D}) is exactly the weighted analogue of the sensitivity bound proved in the standard robust-weight analysis. The whole-sample sensitivity is controlled by the cumulative weighted uncertainty level. The second inequality follows by combining (\ref{eq:sens-sum-by-D}) with the weighted eluder-dimension bound for \(\sum_{s=1}^t D_{\lambda,\sigma,\Fcal}(Z_s)^2\).
\end{proof}

\begin{comment}
\paragraph{Weighted sampling rule.}
Step~4 samples each point independently with probability proportional to weighted sensitivity. Concretely,
for a multiset \(Z\) and class \(\Fcal\), define
\begin{align}
p_z
\;\ge\;
\min\!\left\{
1,\;
c_{\mathrm{sub}}\,
\mathrm{Sens}_{\lambda,\sigma,\Fcal,Z}(z)\,
\frac{\ln(4N(\Fcal,\varepsilon_{\mathrm{sub}})/\delta)}{\varepsilon_{\mathrm{sub}}^2}
\right\},
\label{eq:weighted-subimportance-rule}
\end{align}
where \(c_{\mathrm{sub}}>0\) is a universal constant and \(\varepsilon_{\mathrm{sub}}\in(0,1)\) is the
subsampling accuracy parameter. Retained points are reweighted by inverse inclusion probability.
\end{comment}

\begin{lemma}[Weighted quadratic-form preservation]\label{lem:weighted-quadratic-preserve}
Fix \(\delta\in(0,1)\). Let \(\widehat Z\) be the weighted subsample produced by (\ref{eq:dfprsubimportancesampling-clean}). Then, with probability at least \(1-\delta\), for all \(f,g\in\Fcal\),
\begin{align}
| \|f-g\|_{\widehat Z,\sigma}^2-\|f-g\|_{Z,\sigma}^2 | \le \varepsilon_{\mathrm{sub}} (\lambda+\|f-g\|_{Z,\sigma}^2 ). \label{eq:weighted-qf-preserve}
\end{align}
In particular, this holds simultaneously for \(\Fcal_k^R\) on \((\mathcal H_k^R,\widehat{\mathcal H}_k^R)\) and for each \(\Fcal_{k,h}^P\) on \((\mathcal H_{k,h}^P,\widehat{\mathcal H}_{k,h}^P)\).
\end{lemma}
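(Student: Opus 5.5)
We would prove Lemma~\ref{lem:weighted-quadratic-preserve} by the standard sensitivity-sampling argument of \citet{wang2020reinforcement}, carried over to the weighted geometry $\|\cdot\|_{Z,\sigma}$. The argument has three stages: a single-pair Bernstein bound, a union bound over a cover, and an extension from the cover to all of $\Fcal$.

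\textbf{Stage 1: one fixed pair.} Fix $f,g\in\Fcal$ and write $u(z)\triangleq w(z)(f(z)-g(z))^2\ge 0$. Then
\[
\|f-g\|_{\widehat Z,\sigma}^2=\sum_{z\in Z}\frac{\xi_z}{p(z)}\,u(z),
\]
where the $\xi_z\sim\mathrm{Ber}(p(z))$ are independent. This estimator is unbiased for $S\triangleq\|f-g\|_{Z,\sigma}^2$.

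We split into two cases.
\begin{itemize}
\item If $S\ge\lambda$, the definition (\ref{eq:def-weighted-sensitivity}) gives $u(z)\le \Imp(z)\,S$ for every $z$. Whenever $p(z)<1$, the sampling rule (\ref{eq:dfprsubimportancesampling-clean}) then gives $u(z)/p(z)\le \epsilon^2 S/c$. Terms with $p(z)=1$ contribute no variance. Hence each summand has range at most $\epsilon^2 S/c$, and the variance is at most $\sum_z u(z)^2/p(z)\le (\epsilon^2 S/c)\,S$.
\item If $S<\lambda$, we compare $(f,g)$ with a pair whose weighted distance is at least $\lambda$. Alternatively, we bound each term by $u(z)\le \Imp(z)\max\{S,\lambda\}$ using the $\lambda$-floor. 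This gives range and variance controlled by $\epsilon^2\max\{S,\lambda\}^2/c$.
\end{itemize}
Bernstein's inequality then yields
\[
\bigl|\|f-g\|_{\widehat Z,\sigma}^2-S\bigr|\le \tfrac{\epsilon}{2}(\lambda+S)
\]
with failure probability $\exp(-\Omega(c))$. This is why $c$ is chosen to contain $\log(\gN/\delta)$.

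\textbf{Stage 2: union bound over a cover.} Take an $\|\cdot\|_\infty$-cover $\mathcal C$ of $\Fcal$ at a small radius $\gamma$, and union bound the Stage~1 event over all pairs in $\mathcal C\times\mathcal C$. This costs $\log(|\mathcal C|^2/\delta)$, which is absorbed into $c$.

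\textbf{Stage 3: extension to all of $\Fcal$.} For general $f,g$, pick cover elements $\bar f,\bar g$. Since $w\le 1$, the per-sample perturbation of $(f-g)^2$ is $O(\gamma)$. For the unsampled objective, this gives an additive error of $O(\gamma|Z|)$, and we choose $\gamma\lesssim \epsilon\lambda/|Z|$ so this is at most $\epsilon\lambda/4$.

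The subsample is the delicate part, because of the $1/p(z)$ multiplicities. The same perturbation is multiplied by the total retained multiplicity $\sum_z\xi_z/p(z)$. This is a nonnegative sum with mean $|Z|$, so by Markov's inequality, or by Stage~1 applied to a constant function, it is $O(|Z|/\delta)$. A slightly smaller $\gamma$, costing only logarithms in $\gN$, absorbs this factor.

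\textbf{The main obstacle.} The history dependence of the weights and of the class is where care is needed. Since $w(z)$ and $\Fcal$ are fixed given the prefix history, we condition on the history before the independent sampling coins. Bernstein's inequality then applies conditionally, and a union bound over $k\le K$ and $h\le H$ gives the simultaneous claim for $\Fcal_k^R$ and all $\Fcal^P_{k,h}$. For this we rescale $\delta\to\delta/(KH)$ inside $c$.

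The remaining subtlety is the case $S<\lambda$ in Stage~1, since sensitivity is only defined via pairs with $\|f-g\|^2\ge\lambda$. We would try first to treat it through the rescaling $(f-g)\mapsto t(f-g)$. If $\Fcal$ is not closed under this rescaling, we would instead bound each term by
\[
u(z)\le \lambda\cdot \sup_{\|f'-g'\|^2\ge\lambda}\frac{u'(z)}{\|f'-g'\|^2},
\]
which is the standard argument and suffices because the target error is $\epsilon(\lambda+S)$ rather than $\epsilon S$.
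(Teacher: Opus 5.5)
Your Stage~1 for pairs with $S=\|f-g\|_{Z,w}^2\ge\lambda$ is sound, and so are the cover union bound, the Markov control of the total retained multiplicity, and the conditioning on the prefix history. Together they flesh out exactly the paper's sketch: an unbiased importance-sampling estimator, variance controlled through the sampling probabilities, concentration, and a union bound over a cover.

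The gap is the case $S<\lambda$. The inequality $u(z)\le\lambda\cdot\Imp(z)$ (equivalently $u(z)\le\max\{S,\lambda\}\Imp(z)$), which you call standard, does not follow from (\ref{eq:def-weighted-sensitivity}). $\Imp(z)$ is a supremum only over pairs at weighted squared distance at least $\lambda$, and nothing forces the value at $z$ of a short pair to be witnessed by a long pair. So the ``comparison pair'' you want need not exist. Rescaling is also unavailable, because the classes are bounded and not closed under scaling up.

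In fact, with this definition the lemma itself fails. Take $w\equiv 1$, $Z=\{z_0,z_1,\dots,z_n\}$ and $\Fcal=\{0,f_1,f_2\}$, where $f_1(z_0)^2=\lambda/2$ and $f_1=0$ elsewhere, and $f_2(z_0)=0$, $f_2=1$ elsewhere (any $\lambda\le 2$ works).
\begin{itemize}
\item The only pairs at squared distance $\ge\lambda$ involve $f_2$, so $\Imp(z_0)=\frac{\lambda/2}{\lambda/2+n}$.
\item Hence $p(z_0)\to 0$ as $n\to\infty$, for any fixed $c,\epsilon$.
\item For the pair $(f_1,0)$, the subsampled form equals $0$ with probability $1-p(z_0)$.
\item That is an error of $\lambda/2>\epsilon(\lambda+\lambda/2)$ whenever $\epsilon<1/3$.
\end{itemize}

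So the $S<\lambda$ case cannot be closed by a cleverer bound; the sampling score has to change. The fix is to sample with the regularized sensitivity $\sup_{f,g\in\Fcal} w(z)(f(z)-g(z))^2/(\lambda+\|f-g\|_{Z,w}^2)$ instead of the constrained supremum. This is what the paper's one-line proof tacitly assumes when it says the variance is controlled by ``the regularized denominator $\lambda+\|f-g\|^2$.''

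With that score, $u(z)\le\Imp(z)(\lambda+S)$ holds for \emph{every} pair, and your Stage~1 collapses to a single case:
\begin{itemize}
\item each summand has range at most $\epsilon^2(\lambda+S)/c$;
\item the variance is at most $\epsilon^2(\lambda+S)^2/c$;
\item Bernstein at deviation $\frac{\epsilon}{2}(\lambda+S)$ then gives failure probability $\exp(-\Omega(c))$.
\end{itemize}
It also automatically handles cover pairs in Stage~3 whose distance drops just below $\lambda$, which the constrained score would likewise leave uncontrolled.

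The price is that the sensitivity-sum bound must be re-established for this score. It still goes through: each regularized score is dominated by the online weighted uncertainty of $z$ relative to its predecessors, whose sum is what Lemma~\ref{lem:weighted-sensitivity-sum} bounds by the eluder dimension.
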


\begin{proof}
For fixed \(f,g\), the subsampled quadratic form is an unbiased importance-sampling estimator of the full weighted quadratic form. Because the sampling probabilities are proportional to \(\mathrm{Sens}_{\lambda,\sigma,\Fcal,Z}(z)\), the estimator variance is controlled by the regularized denominator \(\lambda+\|f-g\|_{Z,\sigma}^2\). Applying concentration and then a union bound over an \(\varepsilon_{\mathrm{sub}}\)-cover of \(\Fcal\) yields (\ref{eq:weighted-qf-preserve}).
\end{proof}

\begin{lemma}[Stable surrogate bonus class]\label{lem:stable-surrogate-bonus}
Fix \(k\). Let the comparison and transition confidence regions built from the full histories \((\mathcal H_k^R,\{\mathcal H_{k,h}^P\}_{h=1}^H)\) be denoted by \(\gQ_k^{\mathrm{full}}\) and \(\gP_k^{\mathrm{full}}\), and let the corresponding filtered-history regions be \(\gQ_k^{\mathrm{filt}}\) and \(\gP_k^{\mathrm{filt}}\). Then, on the event of Lemma~\ref{lem:weighted-quadratic-preserve}, there exist surrogate bonus functions \(\widehat b_k^R\) and \(\widehat b_k^P\), constructed from the filtered histories, such that $b_k^{R,\mathrm{full}}(\cdot) \le \sqrt{2} \widehat b_k^R(\cdot)$, $b_k^{P,\mathrm{full}}(\cdot) \le \sqrt{2} \widehat b_k^P(\cdot)$. Moreover, after rounding the reference predictor and retained samples to the appropriate covers, the log-cardinality of the surrogate bonus class is bounded by $\tilde O ( \dim_E(\Fcal,\lambda/T) \ln N(\Fcal,\lambda/T) \ln N(\gX,\lambda/T) )$ where \(\gX\) is the corresponding sample domain.
\end{lemma}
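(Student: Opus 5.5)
The plan is to follow the stable-bonus construction of \citet{wang2020reinforcement}, adapted to the weighted geometry. Work on the event of Lemma~\ref{lem:weighted-quadratic-preserve} and take $\varepsilon_{\mathrm{sub}}\le 1/2$. The first step is to compare the confidence regions. For any $q$ in the full-history region $\gQ_k^{\mathrm{full}}$, the quadratic-form preservation (\ref{eq:weighted-qf-preserve}) gives
\[
\|q-q_{\hat R_k}\|_{\widehat Z,\sigma}^2\le (1+\varepsilon_{\mathrm{sub}})\|q-q_{\hat R_k}\|_{Z,\sigma}^2+\varepsilon_{\mathrm{sub}}\lambda\le \tfrac32\betar_k+\tfrac12\lambda .
\]
Conversely, any $q$ with filtered quadratic form at most $2\betar_k+\lambda$ has full quadratic form at most $2(2\betar_k+\lambda)+\lambda$. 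So I would define the surrogate region $\widehat\gQ_k\triangleq\{q\in\gF_T:\|q-q_{\hat R_k}\|_{\widehat Z,\sigma}^2\le 2\betar_k+\lambda\}$. It contains $\gQ_k^{\mathrm{full}}$, and it is contained in a full-history region whose radius is larger by at most a constant factor. The surrogate bonus is then $\widehat b_k^R(\tau,\tau_0)\triangleq \sup_{q\in\widehat\gQ_k}\weightrrk\,|q(\tau,\tau_0)-q_{\hat R_k}(\tau,\tau_0)|$.

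Because the full-history region sits inside $\widehat\gQ_k$, we get $b_k^{R,\mathrm{full}}\le \widehat b_k^R$ pointwise. To obtain the stated constant $\sqrt2$ instead of $1$, I would rescale the surrogate radius by $1/2$, i.e.\ use radius $\betar_k+\lambda/2$. The ratio of radii is then at most $2$. Since the width of a quadratic-form ball scales like the square root of its radius, monotonicity of the width in the radius gives $b^{\mathrm{full}}\le\sqrt2\,\widehat b$. The transition side is handled the same way. Apply Lemma~\ref{lem:weighted-quadratic-preserve} to each $\Fcal_{k,h}^P$, define $\widehat\gP_k$ through the filtered value-targeted quadratic form, and use the fact that the inner supremum over $V\in\gV_{h+1}$ commutes with the set inclusion. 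The weights $\weightrrk$ and $\weightppk$ are history-measurable scalars that are common to both bonuses, so they pass through unchanged.

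The second step, which I expect to be the main obstacle, is the complexity bound. The surrogate bonus is fully determined by three things: the reference predictor $q_{\hat R_k}$, the filtered multiset $\widehat Z$ together with its multiplicities and weights, and the scalars $\betar_k,\weightrrk$. I would proceed in four moves.

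\begin{itemize}
\item Round the reference predictor to a $\lambda/T$-cover of $\Fcal$. This costs $\ln N(\Fcal,\lambda/T)$ in log-cardinality.
\item Round each retained sample point to a $\lambda/T$-cover of $\gX$. This costs $\ln N(\gX,\lambda/T)$ per distinct point.
\item Discretize the multiplicities $1/p(z)$, the weights $\sigma$, and the scalars to a geometric grid. This contributes only logarithmic factors.
\item Bound the number of distinct retained points. Its expectation is $\sum_z p(z)\lesssim \epsilon^{-2}\ln N(\Fcal,\cdot)\sum_z\mathrm{Sens}(z)$, and Lemma~\ref{lem:weighted-sensitivity-sum} gives $\sum_z\mathrm{Sens}(z)=\tilde O(\dim_E(\Fcal,\lambda/T))$. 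A Chernoff bound converts the expectation into a high-probability bound.
\end{itemize}

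Multiplying the number of distinct points by the per-point cost yields the stated $\tilde O(\dim_E\,\ln N(\Fcal)\,\ln N(\gX))$ log-cardinality, with the $\epsilon^{-2}$ absorbed into the polylog factors as the statement does.

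The delicate point is checking that the rounding perturbs the quadratic forms by only $O(\lambda)$. This needs the total multiplicity-weighted mass to be at most $\mathrm{poly}(T)$, which holds because every $p(z)\ge 1/\mathrm{poly}(T)$ after truncation. Once that holds, the rounded bonus stays within a constant factor of $\widehat b_k$, and adjusting the radius slack absorbs that factor.
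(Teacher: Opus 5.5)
Your proposal follows the same route as the paper's proof, which only sketches the stable-bonus construction of \citet{wang2020reinforcement}. Both use quadratic-form preservation to place the full-history region inside a filtered one, round the reference predictor and retained samples to covers, and count distinct retained points via the sensitivity sum and Lemma~\ref{lem:weighted-sensitivity-sum}. Your inclusions $\gQ_k^{\mathrm{full}}\subseteq\widehat{\gQ}_k$ (filtered radius $2\betar_k+\lambda$) and $\widehat{\gQ}_k\subseteq$ a full-history region of radius $4\betar_k+3\lambda$ are exactly what is needed.

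You should drop the rescaling of the surrogate radius to $\betar_k+\lambda/2$. It is unnecessary. Since $q_{\hat R_k}\in\widehat{\gQ}_k$, the bonus $\widehat b_k^R$ is nonnegative, so $b_k^{R,\mathrm{full}}\le\widehat b_k^R$ already implies $b_k^{R,\mathrm{full}}\le\sqrt2\,\widehat b_k^R$. Its justification is also invalid here. The width of $\{q\in\gF_T:\|q-q_{\hat R_k}\|_{\widehat Z,\sigma}^2\le r\}$ grows like $\sqrt r$ only when the class is star-shaped about the center (e.g.\ linear). For a nonlinear comparison class, a single function at squared distance $1.5r$ can make the width jump from $0$ at radius $r$ to order one at radius $2r$.

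The same caveat applies to your claim that the rounded bonus stays within a constant factor of $\widehat b_k^R$. What the argument needs, and what your radius slack provides, is set containment in both directions, not a comparison of widths at different radii.

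Finally, the lower bound $p(z)\ge 1/\mathrm{poly}(T)$ is not part of rule (\ref{eq:dfprsubimportancesampling-clean}). Either impose it explicitly, which adds only $O(1)$ expected retained points, or bound the total multiplicity by Markov's inequality, since its expectation equals the number of raw samples.
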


\begin{proof}
The proof follows the stable-bonus construction in \citet{wang2020reinforcement}. After subsampling, one rounds the reference function and retained samples to finite covers and defines the confidence region on the rounded filtered dataset. By quadratic-form preservation, the resulting filtered confidence region upper-bounds the full one up to universal constants. The complexity bound follows because the number of distinct retained points is controlled by the sensitivity sum, which is itself controlled by the eluder dimension via Lemma~\ref{lem:weighted-sensitivity-sum}.
\end{proof}

\begin{lemma}[Weighted filtered-history approximation]\label{lem:filtering-weighted-app}
Replacing the full-history objectives and bonuses by their filtered-history counterparts in the confidence and regret analysis incurs at most an additive approximation term of order $O(HK\varepsilon_{\mathrm{sub}})$.
\end{lemma}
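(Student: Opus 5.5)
The plan is to trace where the full-history quantities enter the regret analysis and replace each by its filtered counterpart using Lemma~\ref{lem:weighted-quadratic-preserve} and Lemma~\ref{lem:stable-surrogate-bonus}. There are three such places: (i) the ERM/confidence-set arguments of Steps~A--B, i.e., Lemmas~\ref{lem:weighted-erm-comparison}--\ref{lem:hp-transition-app}; (ii) optimism, $\UCB_k(\pi)\ge L^\pi$; and (iii) the summation of bonuses along the executed trajectories. On the good event of Lemma~\ref{lem:weighted-quadratic-preserve}, which holds with probability $1-\delta$ after a union bound over $k\in[K]$ and $h\in[H]$ at the price of a log factor, every weighted quadratic form $\|f-g\|_{\widehat Z,w}^2$ lies within $\varepsilon_{\mathrm{sub}}(\lambda+\|f-g\|_{Z,w}^2)$ of its full-history value.

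For (i), take $\lambda\le 1$. The quadratic-form preservation gives
\[
\|q_{R^*}-q_{\hat R_k}\|^2_{\hat{\gD}_k,w}\le (1+\varepsilon_{\mathrm{sub}})\,\|q_{R^*}-q_{\hat R_k}\|^2_{\gD_k,w}+\varepsilon_{\mathrm{sub}}\lambda .
\]
The full-history term is already controlled by the ERM analysis. A subtlety is that $\hat R_k$ minimizes the filtered objective, not the full one. I would handle this by viewing $\hat R_k$ as an approximate minimizer of the full weighted objective. Each square loss differs from the quadratic form $\|q_R-q_{R^*}\|^2$ by linear noise and imperfection terms, and the unbiasedness of the importance weights together with the same preservation bound shows that the optimization-error slack is at most $O(\varepsilon_{\mathrm{sub}}\cdot N_k)\le O(K\varepsilon_{\mathrm{sub}})$, using that every weighted quadratic form is at most $N_k\le K$. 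Feeding this slack into the approximate-ERM lemma as a nonzero optimization error produces exactly the additive $c_{\mathrm{filt}}K\epsilon$ term already built into $\betar_k$ in (\ref{eq:beta-R-choice-exact}). The same argument applied stagewise to the transition forms gives an $O(HK\varepsilon_{\mathrm{sub}})$ enlargement of $\betap_k$, because the transition forms range over $[0,H]$ and are summed over $h$.

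For (ii) and (iii), Lemma~\ref{lem:stable-surrogate-bonus} gives $b^{\mathrm{full}}\le\sqrt2\,\widehat b$, so optimism survives up to constants absorbed into the radii. The regret decomposes as $\sum_k(\UCB_k(\pi_k)-L^{\pi_k})$, which I would bound by the expected bonuses of the filtered construction plus the slack above. Converting the filtered bonuses back to full-history widths costs the factor $(1+\varepsilon_{\mathrm{sub}})$ multiplicatively and $\varepsilon_{\mathrm{sub}}\lambda$ additively per episode and per stage. Summed over $K$ episodes and $H$ stages with unit-scale bonuses, this gives $O(HK\varepsilon_{\mathrm{sub}})$, while the multiplicative factor only inflates the eluder-sum terms by constants.

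The main obstacle is the data-dependence in (i). The weights $w$ and the sampling probabilities are history-adaptive, and the filtered set is itself random, so Lemma~\ref{lem:weighted-quadratic-preserve} must hold uniformly over the cover of $\Fcal$ and across all $k$. I would ensure this by conditioning on the history before each resampling and applying the lemma with confidence $\delta/(HK)$. Only logarithmic factors then appear, so the stated $O(HK\varepsilon_{\mathrm{sub}})$ order is preserved.
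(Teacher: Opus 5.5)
Your outline matches the paper's proof, which only combines Lemmas~\ref{lem:weighted-sensitivity-sum}, \ref{lem:weighted-quadratic-preserve} and \ref{lem:stable-surrogate-bonus} and sums a per-step $O(\varepsilon_{\mathrm{sub}})$ error over $H$ stages and $K$ episodes. The step you add in (i), however, does not go through as written. Because $\hat R_k$ minimizes the \emph{filtered} objective, you must compare $L_Z(R)-L_Z(R^*)=\|q_R-q_{R^*}\|_{Z,w}^2-2\sum_{z\in Z}w(z)(q_R-q_{R^*})(z)\,\xi_z$, where $\xi_z=\bar f_{t(z)}-q_{R^*}(z)=\Delta_{t(z)}+\eta_{t(z)}$, with its filtered version. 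Lemma~\ref{lem:weighted-quadratic-preserve} controls only the quadratic piece: $\xi$ is not a difference of two members of $\Fcal$, so quadratic-form preservation says nothing about the linear term. Unbiasedness of the importance weights holds only in expectation, whereas $\hat R_k$ is chosen after the inclusion indicators $B_z\sim\mathrm{Ber}(p_k(z))$ are drawn. You therefore need a separate uniform (over a cover) concentration bound for $\sum_z (B_z/p_k(z)-1)\,w(z)(q_R-q_{R^*})(z)\,\xi_z$. Its summands have no a priori range bound, since $1/p_k(z)$ is large precisely on low-importance points.

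Even granting an $O(\varepsilon_{\mathrm{sub}}N_k)$ slack, the bookkeeping mixes scales. You turned the \emph{relative} error $\varepsilon_{\mathrm{sub}}\|\cdot\|_{Z,w}^2$ into an absolute $O(\varepsilon_{\mathrm{sub}}K)$ and placed it in $\betar_k$, which is a \emph{squared} radius. The eluder width summation makes cumulative bonuses scale like $\sqrt{d\,\beta K}$, so this term costs $K\sqrt{d\,\varepsilon_{\mathrm{sub}}}$ in regret, not $O(K\varepsilon_{\mathrm{sub}})$. It cannot be added to the regret as ``the slack above'' in (ii)--(iii). On the transition side the same crude bound gives $\|\cdot\|^2\le H^2N_k^P\le H^3K$, hence $O(H^3K\varepsilon_{\mathrm{sub}})$ rather than $O(HK\varepsilon_{\mathrm{sub}})$. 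Pointing to the $c_{\mathrm{filt}}K\epsilon$ term in (\ref{eq:beta-R-choice-exact}) does not close the gap, because it has the same scale.

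One fix keeps the paper's filtered estimator:
\begin{itemize}
\item Absorb the relative error into the left-hand side through the factor $1-\varepsilon_{\mathrm{sub}}$.
\item Bound the importance-weighted linear term with a variance-sensitive (Bernstein/Freedman) inequality. Conditionally on the history its variance is at most $(\varepsilon_{\mathrm{sub}}^2/c)\,\|q_R-q_{R^*}\|_{Z,w}^2\sum_z w(z)\xi_z^2$.
\item After self-bounding, with a truncation to handle the range, $\beta$ grows only by $\tilde O(\varepsilon_{\mathrm{sub}}^2N_k)$, which the width summation turns into $\tilde O(K\varepsilon_{\mathrm{sub}}\sqrt d)$.
\end{itemize}
A second fix changes the estimator: fit on the full history and use the filtered set only for the stable bonus, as in \citet{wang2020reinforcement}. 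Filtering then costs only constants plus an $\varepsilon_{\mathrm{sub}}\lambda$ inflation of the squared radius, which sums (by AM--GM) to at most $O(d+K\varepsilon_{\mathrm{sub}})$ per bonus class.
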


\begin{proof}
Combine Lemmas~\ref{lem:weighted-sensitivity-sum}, \ref{lem:weighted-quadratic-preserve}, and \ref{lem:stable-surrogate-bonus}. The final \(O(HK\varepsilon_{\mathrm{sub}})\) term is obtained by summing the per-step approximation error over \(H\) steps and \(K\) episodes.
\end{proof}

% ============================================================
\subsection{Step D: UCB Dominance and Regret-to-Bonus Reduction}

Fix \(k\in[K]\) and \(\pi\in\Pi\). On the event \(\mathcal E\), we have \(q_{R^*}\in\gQ_k\). Therefore, by the definition of the optimistic comparison bonus, $q_{R^*}(\tau,\tau_0) \le q_{\hat R_k}(\tau,\tau_0)+b_k^R(\tau,\tau_0)$, $\forall (\tau,\tau_0)\in\Gamma\times\Gamma$. Also on \(\mathcal E\), since \(P^*\in\gP_k\), the trajectory-simulation lemma gives, for every bounded pair-functional, $\E_{(\tau,\tau_0)\sim(P^*,\pi)\times(P^*,\pi_0)}[f(\tau,\tau_0)] \le \E_{(\tau,\tau_0)\sim(\hat p_k,\pi)\times(\hat p_k,\pi_0)} \left[ f(\tau,\tau_0)+b_k^P(\tau)+b_k^P(\tau_0) \right]$. Applying this inequality to \(f(\tau,\tau_0)=q_{R^*}(\tau,\tau_0)\) and then using the first display yields
\begin{align}
L^\pi & = \E_{(\tau,\tau_0)\sim(P^*,\pi)\times(P^*,\pi_0)} [q_{R^*}(\tau,\tau_0)] \le \E_{(\tau,\tau_0)\sim(\hat p_k,\pi)\times(\hat p_k,\pi_0)} \left[ q_{R^*}(\tau,\tau_0)+b_k^P(\tau)+b_k^P(\tau_0) \right] \nonumber \\
& \le \E_{(\tau,\tau_0)\sim(\hat p_k,\pi)\times(\hat p_k,\pi_0)} \left[ q_{\hat R_k}(\tau,\tau_0) + b_k^R(\tau,\tau_0) + b_k^P(\tau) + b_k^P(\tau_0) \right] = \UCB_k(\pi), \nonumber
\end{align}

\begin{lemma}[One-sided deviation]\label{lem:simulation-onesided-app}
On \(\mathcal E\), for every \(k\in[K]\), every two policies \(\pi,\pi_0\), and every bounded \(f:\Gamma\times\Gamma\to[0,1]\),
\begin{align}
\E_{(\tau,\tau_0)\sim(P^*,\pi)\times(P^*,\pi_0)}[f(\tau,\tau_0)] \le \E_{(\tau,\tau_0)\sim(\hat p_k,\pi)\times(\hat p_k,\pi_0)} \left[ f(\tau,\tau_0)+b_k^P(\tau)+b_k^P(\tau_0) \right]. \label{eq:onesided-simulation}
\end{align}
\end{lemma}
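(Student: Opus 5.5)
The plan is a simulation-lemma argument. I unroll both trajectory laws step by step, using the Markov structure of the dynamics and the fact that $\tau$ and $\tau_0$ are generated independently. This turns the difference of the two expectations into a sum of per-step one-step model errors, and each such error is dominated by the corresponding summand of $b_k^P$.

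\textbf{Product law and telescoping.} Write the law of $(\tau,\tau_0)$ under $(P,\pi)\times(P,\pi_0)$ as a product over $h$ of the policy factors and the kernels $P_h(s_{h+1}\mid s_h,a_h)$ and $P_h(s_{0,h+1}\mid s_{0,h},a_{0,h})$. I will swap the kernels from $P^*$ to $\hat p_k$ one at a time:
\begin{itemize}
\item first on the learner rollout, for $h=1,\dots,H$;
\item then on the reference rollout, with the learner rollout already distributed under $\hat p_k$.
\end{itemize}
In the hybrid measures, steps $<h$ of the learner rollout use $\hat p_k$ and steps $\ge h$ use $P^*$. For the learner rollout this gives the telescoping identity
\[
\E_{P^*}[f]-\E_{\hat p_k}[f]=\sum_{h=1}^H \E_{\mathrm{hyb}_h}\big[\langle (P^*_h-\hat p_{k,h})(\cdot\mid s_h,a_h), V^f_{h+1}\rangle\big],
\]
and the analogous identity holds for the reference rollout. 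Here $V^f_{h+1}(s')$ is the conditional expectation of $f(\tau,\tau_0)$ given the history up to $(s_h,a_h)$ and $s_{h+1}=s'$, with the remaining steps evolving under $P^*$. Because $f\in[0,1]$, we have $V^f_{h+1}\in[0,1]\subseteq[0,H]$.

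\textbf{Bounding each term.} On $\mathcal E$ we have $P^*\in\gP_k$. Hence $|\langle (P^*_h-\hat p_{k,h})(\cdot\mid s,a),V\rangle|$ is at most $\sup_{P\in\gP_k}\sup_{V\in\gV_{h+1}}|\langle (P_h-\hat p_{k,h}),V\rangle|$, provided $V^f_{h+1}$ (or a function within $1/K$ of it) lies in $\gV_{h+1}$. This is where I must assume, as the construction implicitly does, that $\gV_{h+1}$ is rich enough to contain these continuation values. The resulting continuation value is history dependent, but for fixed history it is a function of $s'$ only, which suffices. The hybrid measure is not the $\hat p_k$ measure, so I would reorder the telescoping, swapping later steps first. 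Then the expectation of the step-$h$ term is taken under $\hat p_k$ up to $(s_h,a_h)$, and the error at $(s_h,a_h)$ depends only on that prefix. Each step-$h$ term is therefore bounded by $\E_{(\hat p_k,\pi)}[\text{width}_h(s_h,a_h)]$.

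\textbf{Main obstacle: the weight $\weightppk$.} The bonus carries the factor $\weightppk\le1$, so the raw width must be compared to $\weightppk$ times the width. My approach is to absorb this using the definition of $\weightppk$ through the normalizer $D_4$: the unnormalized deviation of any $P\in\gP_k$ is controlled by $\sqrt{\betap_k}$ times the self-normalized width, and this is the scale the bonus inherits. If that fails, I would accept a constant-factor inflation, or assume the bonus scale has been chosen so that $b_k^P$ dominates the unweighted width. Summing over $h$ for both rollouts then yields the claimed inequality.
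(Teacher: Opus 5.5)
Your skeleton is the paper's. Its proof applies a single-rollout one-sided simulation bound to $\tau$ (with $\tau_0$ integrated out under $P^*$), then to $\tau_0$ (with $\tau$ already under $\hat p_k$), and adds the two bonuses; your two-phase kernel swap is that argument written out.

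However, the step you flag as the main obstacle is a genuine gap, and your proposal does not close it. The telescoping bounds $\E_{P^*}[f]-\E_{\hat p_k}[f]$ by expected \emph{unweighted} widths $\sup_{P\in\gP_k}\sup_{V\in\gV_{h+1}}|\langle (P_h-\hat p_{k,h})(\cdot\mid s_h,a_h),V\rangle|$. Each summand of $b_k^P$ in (\ref{eq:mainalgorithmbonustransitionkernel}) is that width multiplied by $\weightppk=[\dtsum(\hat p_k,P'_{k,h},N_{k,h}+1,h)]_{\ge1}^{-1/2}\le 1$.

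Made precise, your absorption idea yields only this. The sum inside $D_4$ is exactly the step-$h$ part of the $w^{\text{P}}_{3,t(\zeta),h}$-weighted quadratic form defining $\gP_k$. So $P'_{k,h}\in\gP_k$ gives $\dtsum\le\alphap+\betap_k$, and the unweighted width is at most $[\alphap+\betap_k]_{\ge1}^{1/2}$ times the weighted one. That proves the inequality only with $b_k^P$ replaced by $[\alphap+\betap_k]_{\ge1}^{1/2}\,b_k^P$. This factor is not a constant, since (\ref{eq:beta-P-choice-exact}) forces $\betap_k\ge\tfrac{5}{2}H^2\ln(2N_P(\gamma_P)/\delta)$.

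The bonus as defined carries no $\sqrt{\betap_k}$ factor, so neither ``the scale the bonus inherits'' nor ``constant-factor inflation'' describes the loss. Assuming $b_k^P$ dominates the unweighted width is an extra hypothesis that fails whenever $\dtsum>1$: nothing stops $P^*\in\gP_k$ with $V=V^f_{h+1}$ from attaining the unweighted width. The paper's one-line proof has the same hole. It invokes ``the trajectory-level bound'' as if $b_k^P$ were the unweighted width and never mentions $\weightppk$. A correct version needs either $\weightppk\equiv 1$ in the bonus used for optimism, or the inflated bonus in both the lemma and $\UCB_k$.

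Two smaller points. First, the reordering you propose is unnecessary and, as described, backwards. In your original ordering (steps $<h$ under $\hat p_k$, steps $\ge h$ under $P^*$), the law of the prefix $(s_1,a_1,\dots,s_h,a_h)$ under the $h$-th hybrid is already the $(\hat p_k,\pi)$ law, and the step-$h$ integrand depends only on that prefix. Swapping later steps first would instead put the prefix under $P^*$ and give $\E_{(P^*,\pi)}$ of the width, which is the wrong side of the inequality. Second, approximating $V^f_{h+1}$ to within $1/K$ costs an additive $O(1/K)$ per step and rollout, which the statement does not allow. You therefore need the continuation values themselves in $\gV_{h+1}$. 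Because $f$ is non-Markovian and $\pi,\pi_0$ are history dependent, this essentially means all $[0,1]$-valued functions on $\gS$, an assumption the paper also leaves implicit.
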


\begin{proof}
This is exactly the pair-rollout version of the one-sided lemma by applying the trajectory-level bound to the first rollout \(\tau\) under policy \(\pi\), and independently to the reference rollout \(\tau_0\) under policy \(\pi_0\). Since \(P^*\in\gP_k\) on \(\mathcal E\), both applications are valid, and summing the two transition bonuses gives (\ref{eq:onesided-simulation}).
\end{proof}

\begin{lemma}[Regret-to-bonus reduction]\label{lem:regret-bonus-app}
On \(\mathcal E\),
\begin{align}
\regret(K) \le 2\sum\nolimits_{k=1}^K \E \left[ \bar b_k^R(\tau_k,\tau_{k,0}) + b_k^P(\tau_k) + b_k^P(\tau_{k,0}) \right] + O(HK\epsilon_{\mathrm{sub}}). \label{eq:regret-to-bonus-main}
\end{align}
\end{lemma}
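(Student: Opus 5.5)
The plan is the standard optimism argument: bound each per-episode gap $L^*-L^{\pi_k}$ by twice the bonuses along the executed and reference rollouts, plus a filtering error. Work on $\mathcal E$, where $q_{R^*}\in\gQ_k$ (Lemma~\ref{lem:hp-comparison-app}) and $P^*\in\gP_k$ (Lemma~\ref{lem:hp-transition-app}) for all $k$, and where the quadratic-form preservation of Lemma~\ref{lem:weighted-quadratic-preserve} holds. Fix $k$ and let $\pi^\star$ be a (near-)maximizer of $L^\pi$. By the UCB-dominance display in Step~D and the optimality of $\pi_k$ in (\ref{eq:mainalgorithmupdateoplicy}), we get $L^*\le \UCB_k(\pi^\star)+1/K\le \UCB_k(\pi_k)+1/K$. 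Hence $L^*-L^{\pi_k}\le \UCB_k(\pi_k)-L^{\pi_k}+1/K$, and it remains to upper bound this optimistic overestimate for the executed policy.

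\textbf{Step 1: from the model to the true dynamics.} Write $\UCB_k(\pi_k)=\E_{(\hat p_k,\pi_k)\times(\hat p_k,\pi_0)}[g_k]$ with $g_k\triangleq q_{\hat R_k}+\bonr_k+\bonp_k(\tau)+\bonp_k(\tau_0)$. I transfer this expectation from $\hat p_k$ to $P^*$ using the reverse direction of the trajectory-simulation argument behind Lemma~\ref{lem:simulation-onesided-app}. Both $P^*$ and $\hat p_k$ lie in $\gP_k$, and the bonus $\bonp_k$ dominates the one-step value-targeted discrepancy $|\langle (P_h-\hatp_{k,h})(\cdot\mid s,a),V\rangle|$ at each step, up to the normalization weight $\weightppk$. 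A telescoping (performance-difference) decomposition over $h$ then gives
\[
\E_{\hat p_k}[g_k]\le \E_{P^*}[g_k]+\E_{P^*}[\bonp_k(\tau_k)+\bonp_k(\tau_{k,0})].
\]
This requires $g_k$, viewed as a function of the suffix trajectory, to have conditional expectations lying in $\gV_{h+1}$ after rescaling. I will assume $\gV_{h+1}$ is rich enough (closed under these conditional expectations, with range $[0,H]$), as is standard in value-targeted regression.

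\textbf{Step 2: the comparison term.} Since $q_{R^*}\in\gQ_k$, we have $q_{\hat R_k}-q_{R^*}\le \sup_{q\in\gQ_k}(q-q_{\hat R_k})$. Define the unnormalized width $\bar b_k^R(\tau,\tau_0)\triangleq \bonr_k(\tau,\tau_0)/\weightrrk$. Then $\E_{P^*}[q_{\hat R_k}+\bonr_k]-L^{\pi_k}\le 2\,\E_{P^*}[\bar b_k^R(\tau_k,\tau_{k,0})]$. Combining with Step~1 gives at most $2\,\E[\bar b_k^R+\bonp_k(\tau_k)+\bonp_k(\tau_{k,0})]$, with the expectation taken over $\tau_k\sim(P^*,\pi_k)$ and $\tau_{k,0}\sim(P^*,\pi_0)$. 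This is precisely the law of the executed rollouts, so the expectation matches (\ref{eq:regret-to-bonus-main}).

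\textbf{Step 3: filtered versus full history.} The bonuses above are built from filtered sets, while the confidence events are stated for them as well. Any mismatch in the width or normalization caused by subsampling is absorbed by Lemma~\ref{lem:stable-surrogate-bonus} (a factor $\sqrt2$, folded into constants) together with Lemma~\ref{lem:filtering-weighted-app}. Summed over $h$ and $k$, this contributes $O(HK\epsilon_{\mathrm{sub}})$. Summing the per-episode bound over $k$ then yields (\ref{eq:regret-to-bonus-main}).

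The main obstacle is Step~1. The reverse simulation bound needs the value-targeted bonus to control $\langle P^*-\hatp_k,V\rangle$ for the specific continuation values induced by the non-Markovian functional $g_k$. Those values involve $q_{\hat R_k}$ and $\bonr_k$ on full trajectory pairs, so they must be realized in $\gV_{h+1}$ through history augmentation. I would first try augmenting the state with the history, so that these continuation values become legitimate members of the value class. If that is not available, the fallback is a bonus of the form $\sup_{V\in\gV_{h+1}}$, which makes Step~1 hold by definition.
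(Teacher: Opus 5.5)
Your skeleton (optimism, then bounding $\UCB_k(\pi_k)-L^{\pi_k}$, then the filtering correction) matches the paper's, but your middle step runs in the opposite direction.

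\textbf{The paper's route.} The paper keeps $\UCB_k(\pi_k)$ under the model and moves only $L^{\pi_k}$. Lemma~\ref{lem:simulation-onesided-app} applied to the $[0,1]$-valued functional $1-q_{R^*}$ gives $L^{\pi_k}\ge \E_{\hat p_k}[q_{R^*}]-\E_{\hat p_k}[\bonp_k(\tau)+\bonp_k(\tau_0)]$. Hence $\UCB_k(\pi_k)-L^{\pi_k}\le \E_{\hat p_k}[q_{\hat R_k}+\bonr_k-q_{R^*}]+2\E_{\hat p_k}[\bonp_k(\tau)+\bonp_k(\tau_0)]$. The paper then finishes with $|q_{\hat R_k}-q_{R^*}|\le\bar b_k^R$ and $\bonr_k\le\bar b_k^R$. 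This needs nothing beyond the lemma already asserted. However, it ends with expectations under $(\hat p_k,\pi_k)\times(\hat p_k,\pi_0)$. That is not the law of the executed pair $(\tau_k,\tau_{k,0})$ appearing in (\ref{eq:regret-to-bonus-main}), which is what the bonus-summation lemmas use; the paper passes between the two without comment.

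\textbf{Your route.} You transport the whole integrand $g_k$ to $P^*$, with prefixes under $P^*$ and continuation values under $\hat p_k$. So you land on the executed-rollout law directly and match the statement as written. The price is that this reverse simulation inequality does not follow from Lemma~\ref{lem:simulation-onesided-app}, and it must hold for $g_k$ itself, which contains $\bonr_k$ and the future transition bonuses. You flag value-class richness, which the paper also tacitly needs since $q_{R^*}$ is already non-Markovian. A further issue is range: the continuation values of $g_k$ are only bounded by $O(H^2)$, because $\bonp_k$ itself can be that large, whereas $\gV_{h+1}\subseteq[0,H]^{\gS}$. You must therefore either clip the optimistic integrand or accept a range factor in front of the transition bonuses, which changes the constant $2$.

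\textbf{A sign error in your Step~2.} The membership $q_{R^*}\in\gQ_k$ bounds $q_{R^*}-q_{\hat R_k}$ by $\sup_{q\in\gQ_k}(q-q_{\hat R_k})$. It does not bound $q_{\hat R_k}-q_{R^*}$, which is the direction you need. Fix this by defining $\bar b_k^R\triangleq\sup_{q\in\gQ_k}|q-q_{\hat R_k}|$, as the paper effectively does. Then $\bonr_k\le\bar b_k^R$ still holds because $\weightrrk\le\frac12$.

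\textbf{The normalization weights.} Since $\weightppk\le 1$ and $\weightrrk\le\frac12$, the bonuses are smaller than the raw widths. The normalization weights therefore work against domination rather than costing a harmless constant. At that point both your Step~1 and the paper's UCB-dominance display rest on the asserted simulation lemma, not on the bonus definitions.
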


\begin{proof}
Fix \(\varepsilon>0\) and choose \(\pi^\varepsilon\in\Pi\) such that $L^{\pi^\varepsilon}\ge L^*-\varepsilon$. Since \(\pi_k\in\arg\max_{\pi\in\Pi}\UCB_k(\pi)\), we have $L^*-\varepsilon \le L^{\pi^\varepsilon} \le \UCB_k(\pi^\varepsilon) \le \UCB_k(\pi_k)$. Hence, $L^*-L^{\pi_k} \le \UCB_k(\pi_k)-L^{\pi_k}+\varepsilon$. Summing over \(k=1,\ldots,K\) and letting \(\varepsilon\downarrow 0\), we obtain $\regret(K) \le \sum_{k=1}^K \big(\UCB_k(\pi_k)-L^{\pi_k}\big)$.

It remains to upper bound each difference \(\UCB_k(\pi_k)-L^{\pi_k}\), which can be first upper bounded by
\begin{align}
\E_{\hat p_k,\pi_k,\pi_0} \left[ q_{\hat R_k}(\tau,\tau_0)+b_k^R(\tau,\tau_0)-q_{R^*}(\tau,\tau_0) \right] + 2 \E_{\hat p_k,\pi_k,\pi_0} \left[b_k^P(\tau)+b_k^P(\tau_0)\right].
\end{align}
Now, since \(q_{R^*}\in\gQ_k\), both \(q_{R^*}\) and \(q_{\hat R_k}\) belong to \(\gQ_k\), and thus $|q_{\hat R_k}(\tau,\tau_0)-q_{R^*}(\tau,\tau_0)| \le \bar b_k^R(\tau,\tau_0)$. Moreover, \(b_k^R(\tau,\tau_0)\le \bar b_k^R(\tau,\tau_0)\). Hence, $q_{\hat R_k}(\tau,\tau_0)+b_k^R(\tau,\tau_0)-q_{R^*}(\tau,\tau_0) \le 2\bar b_k^R(\tau,\tau_0)$, which implies $\UCB_k(\pi_k)-L^{\pi_k} \le 2 \E_{\hat p_k,\pi_k,\pi_0} \left[ \bar b_k^R(\tau,\tau_0) + b_k^P(\tau) + b_k^P(\tau_0) \right]$. Summing this bound over \(k\) proves the regret-to-bonus reduction for the full-history bonuses.

Finally, Step~4 replaces the full-history bonus class by the filtered-history stable surrogate bonus class. By Lemma~\ref{lem:filtering-weighted-app}, this replacement contributes at most an additive \(O(HK\epsilon_{\mathrm{sub}})\) term, which gives (\ref{eq:regret-to-bonus-main}).
\end{proof}

% ============================================================
\subsection{Step E: Summing the Bonuses}

We next bound the cumulative comparison and transition bonuses appearing in (\ref{eq:regret-to-bonus-main}). The proof follows the eluder-dimension ``independent points counting'' argument for optimism bonuses, together with the weighted robustness analysis and the stable filtered-history bonus construction from Step~D.

\begin{lemma}[Comparison bonus sum]\label{lem:bonus-sum-R-app}
On \(\mathcal E\),
\begin{align}
\sum\nolimits_{k=1}^K \E \left[\bar b_k^R(\tau_k,\tau_{k,0})\right] \le \tilde O \left( \sqrt{(d_T K\log N_T) / M} + \omega d_T + K\epsilon_{\mathrm{sub}} \right). \label{eq:comparison-bonus-sum}
\end{align}
\end{lemma}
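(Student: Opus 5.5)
We bound the comparison bonus sum by the corruption-robust eluder-counting argument of \citet{he2022nearly}, carried out in the weighted comparison geometry of Step~1 on the filtered history. The key device is to split the episodes by whether the regression weight is truncated.

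\textbf{Reduction to a width.} On \(\mathcal E\), \(\bar b_k^R(\tau,\tau_0)\) is at most the diameter of \(\gQ_k\) at \((\tau,\tau_0)\), rescaled by \(\weightrrk\). By Lemma~\ref{lem:stable-surrogate-bonus}, up to a factor \(\sqrt2\) and the additive \(K\epsilon_{\mathrm{sub}}\) from Lemma~\ref{lem:filtering-weighted-app}, we may pass to the full-history confidence region. There, for any \(q,q'\in\gQ_k\),
\[
\sum_{t<k} w_t^R\,(q-q')^2(z_t)\le 4\betar_k .
\]
Write \(D_k \triangleq \sup_{q,q'\in\gQ_k} |q-q'|(z_k)\), where \(z_k=(\tau_k,\tau_{k,0})\). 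We then sum \(\min\{1,D_k\}\) over \(k\).

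\textbf{Split by the weight.} Let \(\mathcal K_1=\{k:\Upsilon_k^{\mathrm R}\le 1\}\), so that \(w_k^R=1\) there, and let \(\mathcal K_2\) be its complement.
\begin{itemize}
\item On \(\mathcal K_1\), the weighted and unweighted geometries agree at \(z_k\). The standard eluder lemma of \citet{russo2013eluder}, applied to \(\Delta\gF_T\) with squared radius \(\betar_k\), gives
\[
\sum_{k\in\mathcal K_1}\min\{1,D_k\}\le \tilde O\bigl(d_T+\sqrt{d_T K \max_k \betar_k}\bigr).
\]
\item On \(\mathcal K_2\), the definition of \(\Upsilon^{\mathrm R}\) gives \(D_k^2\lesssim \betar_k\,\Upsilon_k^{\mathrm R}\) up to the normalizer. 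Writing \(D_k = w_k^R D_k\cdot (w_k^R)^{-1}\), the quantity \(\sum_k \min\{1, w_k^R D_k^2/\betar_k\}\) is again controlled by \(\tilde O(d_T)\) via the weighted eluder bound of Lemma~\ref{lem:weighted-sensitivity-sum}. The truncation at \(1\) forces each such term to be \(\lesssim 1\), so \(\sum_{k\in\mathcal K_2} D_k\lesssim \sqrt{\betar}\cdot \tilde O(d_T)/\sqrt{\lambda}\).
\end{itemize}

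\textbf{Substituting \(\betar_k\).} From \eqref{eq:beta-R-choice-theorem}, the variance part of \(\betar_k\) is \((d_T\log N_T)/M\). In the \(\mathcal K_1\) sum this yields \(\sqrt{d_TK\log N_T/M}\) up to the eluder factor. The imperfection part of \(\betar_k\) is \(\omega (\weightrrk)^{-1}\).

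Here lies the main obstacle. Plugged naively into \(\sqrt{d_T K\betar}\), the imperfection part produces \(\sqrt{\omega K}\) or \(\omega\sqrt K\) rather than an additive \(\omega d_T\). The resolution I would pursue follows \citet{he2022nearly}. Choose the scaling \(\lambda\) in \(\dr\) of order \(\sqrt{\log N_T/(M\omega^2)}\) (equivalently, as the ratio of the noise radius to the corruption radius). The \(\omega\)-dependent part of the radius then only enters through episodes in \(\mathcal K_2\). There the bonus is multiplied by \(\weightrrk\), which exactly cancels the \((\weightrrk)^{-1}\) factor in \(\betar_k\) via the self-normalized bound \eqref{eq:pointwise-step1-bound}. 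This leaves \(\omega\) times the number of weighted-independent episodes, i.e.\ \(\omega\,\tilde O(d_T)\).

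Verifying this cancellation carefully is the step most likely to need adjustment. In particular, one must check that the prefix normalizer \(\drsum\) is monotone along the sequence, so that \(\weightrrk\) at episode \(k\) dominates the weights appearing in the elliptical-potential-type sum.

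\textbf{Assembling.} Finally, the expectation over \((\tau_k,\tau_{k,0})\sim(\hat p_k,\pi_k)\times(\hat p_k,\pi_0)\) versus the realized pair is handled by a martingale difference (Azuma) term of size \(\tilde O(\sqrt K)\). To retain the \(\sqrt{K/M}\) rate, I would instead bound the realized sum directly and charge the simulation mismatch to the transition bonuses already present in Lemma~\ref{lem:regret-bonus-app}. Adding the \(\mathcal K_1\) term, the \(\mathcal K_2\) term, and the filtering term \(K\epsilon_{\mathrm{sub}}\) yields \eqref{eq:comparison-bonus-sum}.
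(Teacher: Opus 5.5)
Your outline follows the same route as the paper. The paper's proof simply defers the $\omega$ term to the robust-weight (``Step III'') argument of He et al., and your bounds $\sum_{k\in\mathcal K_1}D_k\lesssim\sqrt{d_TK\max_k\betar_k}$ and $\sum_{k\in\mathcal K_2}D_k\lesssim\sqrt{\max_k\betar_k}\,d_T/\sqrt\lambda$ are the right two pieces of that argument. The gap is in the step you flagged.

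\textbf{The $\omega$-part cannot be confined to $\mathcal K_2$, and nothing cancels.} The radius $\betar_k$ is a single number for all of $\gQ_k$, whereas whether $k$ lies in $\mathcal K_1$ or $\mathcal K_2$ is decided by the query point $z_k$. So the $\omega$-part of the radius also multiplies $\sqrt{d_TK}$ in your $\mathcal K_1$ bound. There is also no cancellation. The radius enters the width under a square root, so a prefactor $\weightrrk$ turns $\sqrt{\omega(\weightrrk)^{-1}}$ into $\sqrt{\omega\,\weightrrk}$, not into $\omega$. Moreover, $\bar b_k^R$ must upper-bound $|q_{\hat R_k}-q_{R^*}|$ in Lemma~\ref{lem:regret-bonus-app}, so it cannot carry a factor $\weightrrk\le 1/2$. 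Finally, $\omega(\weightrrk)^{-1}$ is not a fixed additive term. Since $(\weightrrk)^{-1}=2[\drsum]_{\ge1}^{1/2}$ can be as large as the unsquared radius itself, $\betar_k\approx\beta_n+\omega\sqrt{\betar_k}$, where $\beta_n=\tilde O(\log N_T/M)$ is the noise part. This only yields $\sqrt{\betar_k}\approx\sqrt{\beta_n}+\omega$, and your $\mathcal K_1$ sum then contains $\omega\sqrt{d_TK}$, which is exactly the amplification the lemma must exclude.

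\textbf{The missing idea: scale the corruption radius by $\sqrt\lambda$.} Take the supremum defining $\Upsilon^{\mathrm R}_t$ over pairs of the class. Then, whether or not $w_t$ is truncated,
$w_t|q-q'|(z_t)\le\sqrt{\lambda\,\drsum(q,q',t)}\le\sqrt\lambda\,(\alphar+\|q-q'\|_w^2)^{1/2}$.
Hence the cross term in Lemma~\ref{lem:weighted-erm-comparison} is at most $5\sqrt\lambda\,\omega\,(\alphar+X^2)^{1/2}$, with $X$ the weighted distance on its left-hand side. Solving the resulting quadratic inequality gives $\sqrt{\betar_k}\lesssim\sqrt{\beta_n}+\sqrt\lambda\,\omega$. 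With this radius:
\begin{itemize}
\item $\mathcal K_1$ contributes $(\sqrt{\beta_n}+\sqrt\lambda\,\omega)\sqrt{d_TK}$;
\item $\mathcal K_2$ contributes $(\sqrt{\beta_n/\lambda}+\omega)\,d_T$.
\end{itemize}
The choice $\sqrt\lambda=\sqrt{\beta_n}/\omega$, i.e.\ $\lambda\asymp\log N_T/(M\omega^2)$, yields $\tilde O(\sqrt{d_TK\log N_T/M}+\omega d_T)$. The $\omega$-part is still present on $\mathcal K_1$, but it is made equal to the noise part.

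Your stated $\lambda\asymp\sqrt{\log N_T/(M\omega^2)}$ is off by a square root relative to the $1/\sqrt\lambda$ in your own $\mathcal K_2$ bound. With it, the $\mathcal K_1$ term becomes $\beta_n^{1/4}\sqrt{\omega d_TK}$.

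\textbf{Assembly.} Charging the $\hat p_k$-versus-$P^*$ mismatch to transition bonuses still leaves the gap between the conditional expectation of $\bar b_k^R$ under $P^*$ and its realized value. Because $\bar b_k^R\in[0,1]$ has conditional variance at most its conditional mean, Freedman's inequality gives $\sum_k\E_k[\bar b_k^R]\le 2\sum_k\bar b_k^R(\tau_k,\tau_{k,0})+O(\log(1/\delta))$. This avoids the $\sqrt K$ Azuma term that would destroy the $1/\sqrt M$ rate.
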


\begin{proof}
Apply the bonus-summation argument to the stable comparison bonus class built on the filtered history. The clean stochastic term is the usual eluder/covering contribution, but here the effective noise level is reduced by \(1/\sqrt M\) because Step~1 is equivalent to regression on the averaged label \(\bar f_k\). This yields the term \(\tilde O(\sqrt{d_TK\log N_T/M})\).

The additional imperfection contribution enters through the weighted comparison cross-term in the Step~1 confidence analysis. The uncertainty-based weight design prevents a Cauchy--Schwarz amplification into \(\omega\sqrt K\), and the Step III robust-weight argument converts this term into an additive contribution of order \(\tilde O(\omega d_T)\).

Finally, replacing the original comparison bonus by the stable filtered-history surrogate bonus incurs only the additive approximation \(K\epsilon_{\mathrm{sub}}\).
\end{proof}

\begin{lemma}[Transition bonus sum]\label{lem:bonus-sum-P-app}
On \(\mathcal E\),
\begin{align}
\sum\nolimits_{k=1}^K \E \left[b_k^P(\tau_k) + b_k^P(\tau_{k,0})\right] \le \tilde O \left( \sqrt{d_P H K\log N_P} + \omega d_P + HK\epsilon_{\mathrm{sub}} \right). \label{eq:transition-bonus-sum}
\end{align}
\end{lemma}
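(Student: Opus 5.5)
The plan mirrors Lemma~\ref{lem:bonus-sum-R-app}, run on the value-targeted class $\Delta\gF_P$ and the filtered transition multisets $\{\hat\gT_{k,h}\}_{h=1}^H$. We first replace $b_k^P$ by the stable filtered-history surrogate $\widehat b_k^P$ via Lemma~\ref{lem:stable-surrogate-bonus}, paying a factor $\sqrt2$. Lemma~\ref{lem:filtering-weighted-app} then lets us pass from filtered to full-history quadratic forms at an additive cost of $O(HK\epsilon_{\mathrm{sub}})$. This term is summed over $H$ steps and $K$ episodes, which is why $H$ appears in front of it, unlike in the comparison case.

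Next we handle the executed trajectory and the reference trajectory. The quantity $b_k^P(\tau_k)$ is a sum over $h$ of per-step widths at $(s_{k,h},a_{k,h})$. For each $h$, the width is
\[
\sup_{P\in\gP_k}\sup_{V}\weightppk\bigl|\langle P_h-\hatp_{k,h},V\rangle\bigr|.
\]
Since $\gP_k$ is a weighted ball of radius $\betap_k$, Cauchy--Schwarz bounds this width by $\sqrt{\betap_k}$ times a normalized width $D_k$. Here $D_k$ is the directional self-normalized discrepancy $\dt$ evaluated at the new point, and $\weightppk$ is exactly the normalizer that makes this identification hold. For $b_k^P(\tau_{k,0})$, we note that reference rollouts are draws from a fixed policy. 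We would either add their tuples to the regression data, so that they are visited points, or bound their expectation by the expectation over the learner's data through a martingale difference term. That term contributes only $\tilde O(H\sqrt{K\log(1/\delta)})$ by Azuma, and the same Azuma step converts $\E[\cdot]$ to realized sums.

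The core of the argument is the summation. We split each per-step width by whether the weight is truncated, that is, whether $\Upsilon^{\mathrm P}_{k,h}\le 1$ or not. On untruncated rounds, $w=1$ and the width is $\min\{H,\sqrt{\betap_k}D_k\}$. Cauchy--Schwarz together with the weighted eluder potential bound $\sum_k\min\{1,D_k^2\}=\tilde O(d_P)$ (the same fact underlying Lemma~\ref{lem:weighted-sensitivity-sum}) gives, after summing over $h$,
\[
\sqrt{HK\cdot d_P\,\betap_k/H}.
\]
With $\betap_k=\tilde O(H\log N_P)$, up to the $H$ bookkeeping behind (\ref{eq:beta-P-choice-theorem}), this yields $\sqrt{d_PHK\log N_P}$. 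On truncated rounds, the width is at most $\sqrt{\betap_k}\,[\Upsilon]_{\ge1}^{1/2}D_k$, and self-normalization makes the count of such rounds weighted by width $\tilde O(d_P)$ times the scale of the radius.

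The only place $\omega$ enters is through the hidden feedback-to-policy pathway. The transition targets are unbiased, but the probes $V_{t,h+1}$ and the visited pairs are selected by policies driven by imperfect labels. Because the weights $w^P$ are built from the same self-normalized recursion as in Step~1, any excess width attributable to the imperfection-induced mis-selection is charged against the budget $\sum_k|\Delta_k|\le\omega$. By the same truncation-of-inverse-bonus argument as in Lemma~\ref{lem:bonus-sum-R-app}, this gives an additive $\omega d_P$ rather than $\omega\sqrt K$.

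I expect this last step to be the main obstacle. Making rigorous why any $\omega$ term is needed at all, and how it stays additive, requires coupling the transition weights to the comparison imperfection. The first approach I would try is to bound the number of truncated rounds by $d_P$ times the largest weighted uncertainty, and to charge each such round's excess to $|\Delta_t|$.
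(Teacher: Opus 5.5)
Your skeleton (filtered surrogate at cost $O(HK\epsilon_{\mathrm{sub}})$, eluder summation on untruncated rounds, a separate count for truncated rounds) matches the paper's sketch. The gap is the mechanism you propose for the $\omega d_P$ term, which cannot work. The transition targets $V_{t,h+1}(s')$ are conditionally unbiased whichever policy produced $(s,a)$ and the probe. Lemma~\ref{lem:weighted-erm-transition} contains no imperfection term, so $\betap_k$ is $\omega$-free. The weighted eluder potential bound holds for an arbitrary adaptively chosen query sequence. Hence the feedback-to-policy-to-data pathway produces no excess width, and there is nothing to charge to $\sum_t|\Delta_t|$. Charging truncated rounds to $|\Delta_t|$ fails in particular. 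A round is truncated when $\Upsilon^{\mathrm P}>1$, which is a condition on transition uncertainty alone, and such rounds occur even when every $\Delta_t=0$.

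The place where $\omega$ actually enters is the truncated-round estimate you left vague. Write $u^2$ for the self-normalized squared width at the new point, so that $\Upsilon^{\mathrm P}=u^2/\lambda$ and $w=\min\{1,\sqrt{\lambda}/u\}$. Your Cauchy--Schwarz step gives width $\lesssim\sqrt{\betap}\,u$ with $\betap\triangleq\max_k\betap_k$. On a truncated round $u=wu^2/\sqrt{\lambda}$, so
\[
\min\bigl\{H,\sqrt{\betap}\,u\bigr\}=\min\bigl\{H,\sqrt{\betap/\lambda}\,wu^2\bigr\}\le\max\bigl\{H,\sqrt{\betap/\lambda}\bigr\}\,\min\{1,wu^2\}.
\]
The prefix normalizer (\ref{eqdf:transitiondistancee-new}) accumulates exactly $w_j\Delta_{j,h}$, so the weighted potential gives $\sum_{k,h}\min\{1,wu^2\}=\tilde O(d_P)$. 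The truncated rounds therefore contribute $\max\{H,\sqrt{\betap/\lambda}\}\,\tilde O(d_P)$, the analogue of the truncated-round term $\beta d/\alpha$ in \citet{he2022nearly}. Your ``$\tilde O(d_P)$ times the scale of the radius'' drops the factor $1/\sqrt{\lambda}$, and that threshold is the only carrier of $\omega$. Setting the transition threshold at $\sqrt{\lambda}\asymp\sqrt{\betap}/\omega$ yields $\tilde O(\omega d_P)$ plus a lower-order $\tilde O(Hd_P)$. A threshold large enough that no transition round is truncated makes the $\omega d_P$ term pure slack. No coupling to the comparison imperfection is needed; this is the precise content of the paper's one-line appeal to the ``matched Step~2 weighting.''

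There are two further problems. First, your second route for $b_k^P(\tau_{k,0})$ is invalid. Azuma only relates $\E[b_k^P(\tau_{k,0})\mid\gH_{k-1}]$ to the bonus at the realized reference pairs; it cannot transfer the sum onto the learner's pairs, whose occupancy measure is different. Algorithm~\ref{alg:mainnewalgorithm} never inserts reference transitions into $\hat\gT_{k,h}$, so the eluder potential says nothing about widths along the support of $\pi_0$. Suppose $\pi_0$ reaches pairs the learner never visits; nothing in $\UCB_k$ rewards visiting them, since the reference-side transition bonus does not depend on $\pi$. Then those widths never shrink, and $\sum_k b_k^P(\tau_{k,0})$ grows linearly in $K$. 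Your first route is therefore necessary: append each observed reference tuple, with a probe fixed before $s'$ is revealed. This modifies the algorithm. In both rollouts the probe must be the $V$ attaining the supremum in $\bonp_k$, so that the direction you charge is the direction you regress on. The paper's sketch is silent on this.

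Second, the $1/H$ inside $\sqrt{HK\cdot d_P\betap_k/H}$ is unexplained. By (\ref{eq:beta-P-choice-exact}), $\betap_k=\tilde O(H^2\log N_P)$. Cauchy--Schwarz over the $HK$ per-step widths then yields $H\sqrt{d_PHK\log N_P}$, an extra factor of $H$ that needs its own argument.
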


\begin{proof}
The clean part is the standard unknown-transition PbRL bound. Once \(P^*\in\gP_k\), the cumulative transition bonus is controlled by the eluder dimension of the value-targeted transition class, giving the term
\(\tilde O(\sqrt{d_PHK\log N_P})\).

The new issue is that the state-action sequence and the value sequence are themselves generated by policies learned from imperfect preference feedback. Thus, although the transition targets are not directly corrupted, the transition bonus analysis inherits a hidden feedback to policy and then to data dependence. The matched Step~2 weighting controls this extra term by the same weighted-uncertainty mechanism used in the robust nonlinear MDP analysis, yielding an additive \(\tilde O(\omega d_P)\) contribution rather than an \(\omega\sqrt K\) blow-up.

The filtered-history approximation contributes the final \(HK\epsilon_{\mathrm{sub}}\) term.
\end{proof}

% ============================================================
\subsection{Conclusion}

\begin{proof}[Proof of Theorem~\ref{thm:upperbound-general}]
By Lemma~\ref{lem:hp-comparison-app} and Lemma~\ref{lem:hp-transition-app}, the event \(\mathcal E\) holds with probability at least \(1-2\delta\). Condition on \(\mathcal E\). Then, Lemma~\ref{lem:regret-bonus-app} reduces the regret to cumulative comparison and transition bonuses, plus the filtered-history approximation term:
\begin{align}
\regret(K) \le 2\sum\nolimits_{k=1}^K \E \left[ \bar b_k^R(\tau_k,\tau_{k,0}) + b_k^P(\tau_k) + b_k^P(\tau_{k,0}) \right] + O(HK\epsilon_{\mathrm{sub}}).
\end{align}
Applying Lemma~\ref{lem:bonus-sum-R-app} and Lemma~\ref{lem:bonus-sum-P-app}, we obtain
\begin{align}
\regret(K) & \le \tilde O \left( \sqrt{ (d_T K\log N_T) / M} + \sqrt{d_P H K\log N_P} + \omega(d_T+d_P) + K\epsilon_{\mathrm{sub}} + HK\epsilon_{\mathrm{sub}} \right).
\end{align}
Since \(H\ge 1\), the term \(K\epsilon_{\mathrm{sub}}\) is absorbed by \(HK\epsilon_{\mathrm{sub}}\). Therefore,
with probability at least \(1-2\delta\),
\begin{align}
\regret(K) \le \tilde O \left( \sqrt{ (d_T K\log N_T) / M} + \sqrt{d_P H K\log N_P} + \omega(d_T+d_P) + HK\epsilon_{\mathrm{sub}} \right),
\end{align}
which is exactly the claimed bound.
\end{proof}

% ==========================================================
\section{Specialization to Linear Function Approximation}\label{app:linear-specialization}

We now specialize the general-function regret bound to the linear setting~\citep{ayoub2020model,neu2021online,shi2023near}. For the linear preference, the \(1/K\)-Eluder dimension and the log-covering number satisfy $d_T=\tilde O(\tilde d_T)$, $\log N_T=\tilde O(\tilde d_T)$, where \(\tilde d_T\) is the feature dimension of the preference model~\citep{russo2013eluder}. Likewise, for the linear mixture transition model, we have $d_P=\tilde O(H\tilde d_P)$, $\log N_P=\tilde O(\tilde d_P^2)$, where \(\tilde d_P\) is the feature dimension of the transition model. The additional dependency on $H$ and $\tilde d_P$ is due to Step 2 in Algorithm \ref{alg:mainnewalgorithm} for handling the new challenge of the hidden effect of imperfect preference on the transition estimation. Substituting these complexity estimates into the general-function bound yields the corresponding linear-form regret rate.

\vskip 0.2in
\bibliography{reference}

\end{document}